\documentclass[letterpaper]{article}
\usepackage[preprint]{aaai2027}
\usepackage[hyphens]{url}
\usepackage{graphicx}
\usepackage{natbib}
\usepackage{caption}
\usepackage{algorithm}
\usepackage{algorithmic}

\usepackage{amsthm}
\newtheorem{lemma}{Lemma}
\newtheorem{remark}{Remark}
\usepackage{newfloat}
\usepackage{listings}
\DeclareCaptionStyle{ruled}{labelfont=normalfont,labelsep=colon,strut=off}
\floatstyle{ruled}
\newfloat{listing}{tb}{lst}{}
\floatname{listing}{Listing}

\usepackage{booktabs}
\usepackage{multirow}
\usepackage{booktabs}
\usepackage{pifont}
\usepackage{xcolor}
\usepackage{amsfonts}
\usepackage{amsmath}

\usepackage{tikz}
\usetikzlibrary{positioning, calc, arrows.meta, shapes.geometric, backgrounds, fit, matrix, spy}
\usepackage{graphicx}
\usetikzlibrary{positioning}
\usepackage{subcaption}
\usepackage{rotating}
\usepackage{amssymb}\usepackage{enumitem}
\usepackage{cleveref}

\newcommand{\model}{TORF}

\usepackage{booktabs}
\usepackage{multirow}

\newcommand{\ross}{\text{ROSS}}
\newcommand{\scl}{\text{LSL}}

\newcommand{\cmark}{\textcolor{green!70!black}{\ding{51}}}
\newcommand{\xmark}{\textcolor{red}{\ding{55}}}

\definecolor{blockgray}{RGB}{195,195,195}
\definecolor{blockblue}{RGB}{110,165,225}
\definecolor{triblue}{RGB}{65,120,195}
\definecolor{tridark}{RGB}{25,75,155}
\definecolor{grnfill}{RGB}{205,245,205}

\definecolor{timeintra}{RGB}{230, 240, 255}
\definecolor{timeinter}{RGB}{180, 210, 255}
\definecolor{actcolor}{RGB}{255, 230, 200}
\definecolor{chanintra}{RGB}{230, 255, 230}
\definecolor{chaninter}{RGB}{180, 255, 180}

\tikzset{
	block/.style={draw, thick, rectangle, align=center, fill=gray!5},
	tensor/.style={draw, thick, fill=white, minimum width=1.5cm, minimum height=1.5cm, align=center},
	flow/.style={->, thick, >=Stealth, black},
	infer/.style={->, thick, >=Stealth, dashed, red},
	mat/.style={fill=white, draw=black, thick, rounded corners},
	mathlabel/.style={font=\small\itshape, text depth=0pt, text height=1ex}
}

\title{Two-stage Odd Residual Flows for Mean-Preserving Probabilistic Time Series Forecasting}
\author{
	Kiran Madhusudhanan\thanks{Corresponding author. Email: kiranmadhusud@ismll.de},
	Christian Kl\"{o}tergens,
	Lars Schmidt-Thieme, 
	Vijaya Krishna Yalavarthi
}
\affiliations{
    Institute of Computer Science\\
    University of Hildesheim\\
    Hildesheim, Germany
}

\begin{document}\sloppy

\maketitle

\begin{abstract}

Probabilistic forecasting plays an essential role in risk-sensitive decision-making, particularly in long-horizon settings. However, existing approaches often face a fundamental trade-off between distributional flexibility and accurate mean prediction. Traditional parametric methods, such as Mean Variance Estimation (MVE), can suffer from degraded point accuracy when trained under joint Negative Log-Likelihood (NLL) objectives, while modern-flexible generative models, including Normalizing Flows and Diffusion Models, typically rely on costly Monte Carlo sampling and may yield suboptimal mean estimates. To address this limitation, we propose Two-stage Odd Residual Flows (TORF), a framework that decouples mean forecasting from uncertainty estimation. In the first stage, a pre-trained deterministic model is used to produce an accurate mean prediction. In the second stage, a Restricted Normalizing Flow, with strictly odd functions learns flexible residual distributions around the point forecast, guaranteeing mean preservation from the first stage without sampling. Experiments show that TORF achieves state-of-the-art deterministic accuracy (NMAE) while providing strong density estimation performance (CRPS) on short and long-horizon forecasting.
\end{abstract}

\section{Introduction}\label{sec:intro}

Time Series Forecasting (TSF) is central to decision-making in domains such as demand planning, energy management, and finance \cite{bandara2019sales,Dimoulkas2018ElectricityLoad,luo2018neural}, where quantifying uncertainty is key to risk-aware decisions. While deterministic mean prediction has advanced rapidly \cite{tslib2026}, uncertainty quantification (UQ) \cite{Zhang2023ProbTSBP} still faces a trade-off between distributional flexibility and mean accuracy.

\begin{figure}[t]
	\centering
		\begin{minipage}{\columnwidth}
			\centering
			\begin{subfigure}{0.45\columnwidth}
				\centering
				\includegraphics[width=\linewidth]{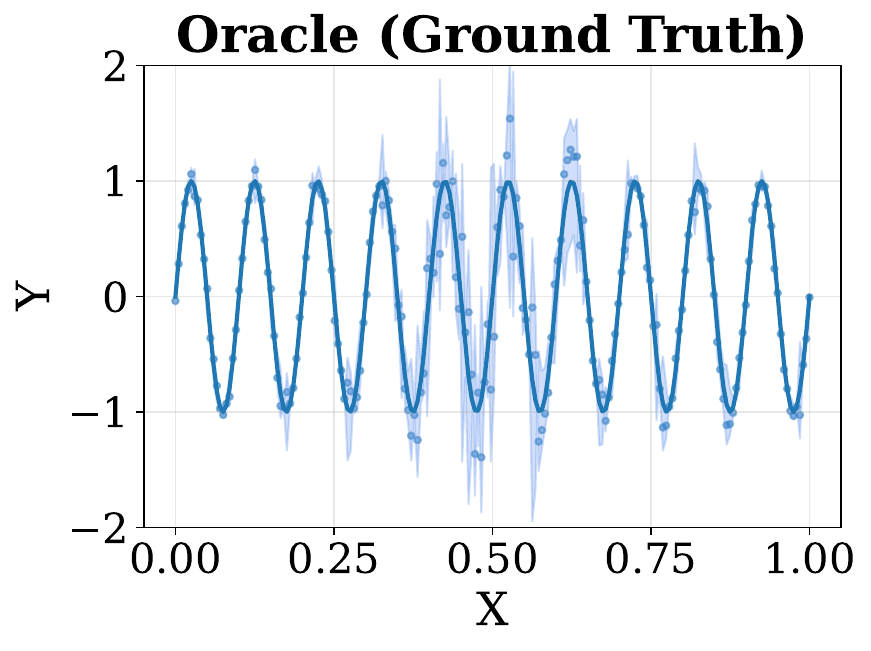}
				\caption{Oracle; $y = \sin(2\pi f x) + \text{Student-t}(\nu(x), 0, \gamma(x))$}
				\label{fig:oracle}
			\end{subfigure}
			\begin{subfigure}{0.45\columnwidth}
				\centering
				\includegraphics[width=\linewidth]{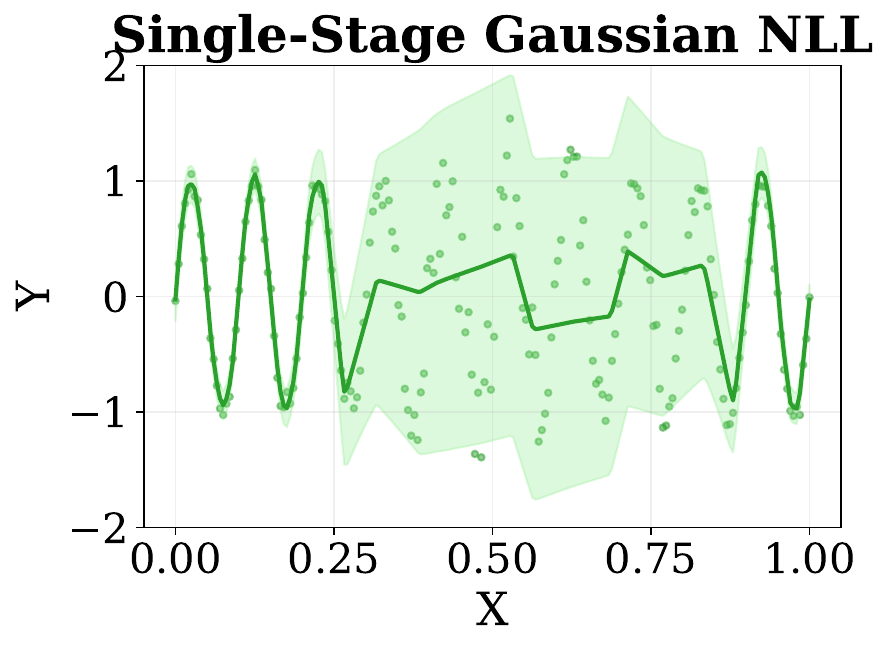}
				\captionsetup{justification=centering}
				\caption{Single-stage \\CRPS: 0.2628; MAE: 0.2628}
				\label{fig:single_stage}
			\end{subfigure}

			\vspace{0.2cm}

			\begin{subfigure}{0.45\columnwidth}
				\centering
				\includegraphics[width=\linewidth]{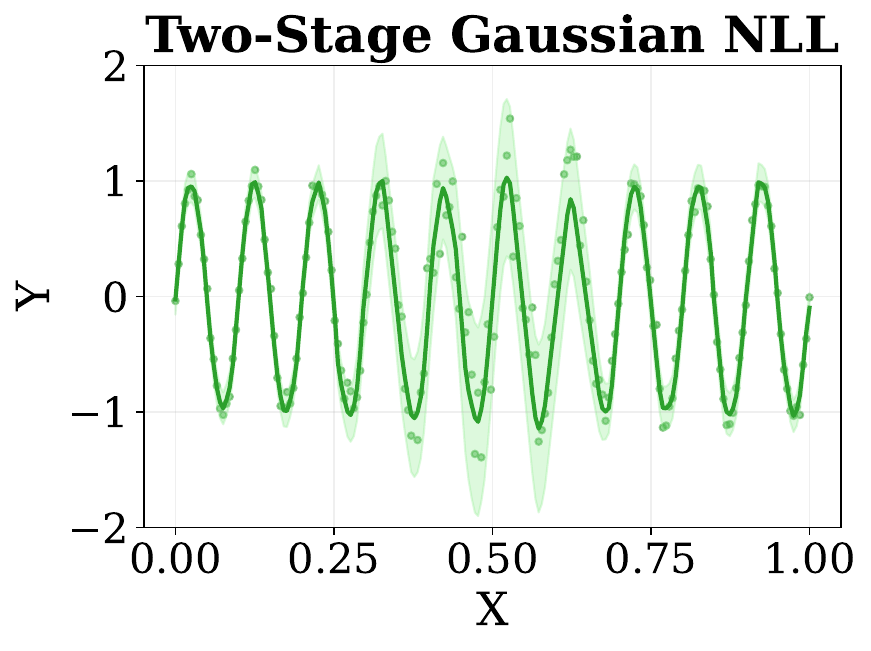}
				\captionsetup{justification=centering}
				\caption{Two-stage Gauss \\CRPS: 0.0986; MAE: 0.1383}
				\label{fig:two_stage}
			\end{subfigure}
			\begin{subfigure}{0.45\columnwidth}
				\centering
				\includegraphics[width=\linewidth]{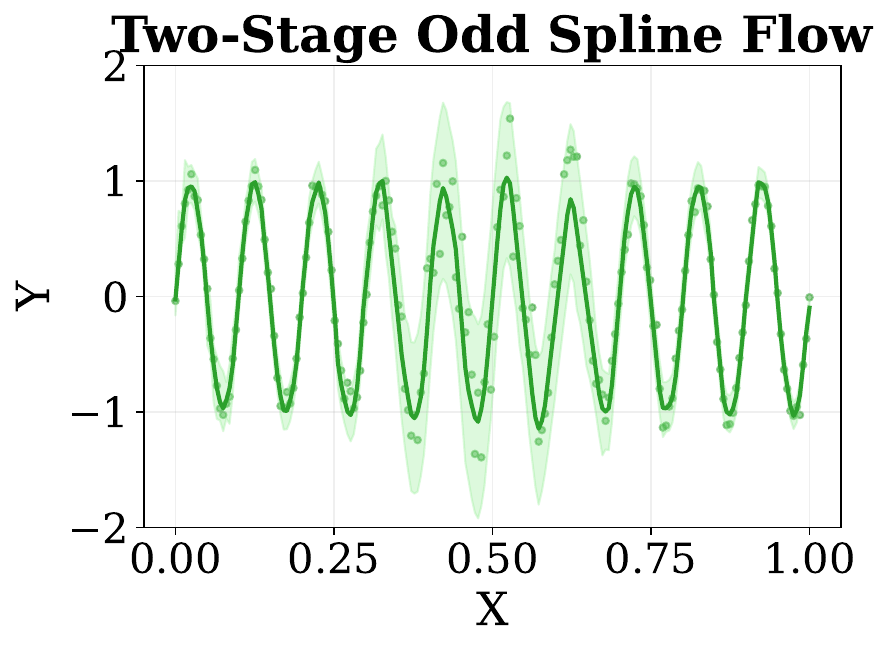}
				\captionsetup{justification=centering}
				\caption{Two-stage Flow \\CRPS: 0.0967; MAE: 0.1383}
				\label{fig:two_stage_odd_flow}
			\end{subfigure}
		\end{minipage}\caption{Collapse of Gaussian negative log likelihood learning based single stage approaches on a simple sinusoidal signal with Student's t-distribution noise.}
	\label{fig:pitfalls}
\end{figure}

The de-facto method for probabilistic prediction is Mean-Variance Estimation (MVE) \cite{Nix1994EstimatingTM}, which assumes a parametric target distribution (typically a heteroscedastic Gaussian) and optimizes its parameters via Negative Log-Likelihood (NLL) loss \cite{salinas2020deepar}. Although MVE methods provide an analytical, direct mean, their Gaussian distribution assumption is limiting in practice.
Furthermore, training MVE models via joint NLL optimization triggers an inherent trade-off between mean and variance learning \cite{Seitzer2022PitfallsOfUncertainty}.
For example, consider~\Cref{fig:oracle}, where the target follows a simple sinusoidal function with non-Gaussian noise,
$y = \sin(2\pi f x) + \text{Student-t}(\nu(x), 0, \gamma(x))$. Direct NLL training~\Cref{fig:single_stage} fails to recover the true mean in high-noise regions because the loss function heavily prioritizes low-noise regimes~\cite{Seitzer2022PitfallsOfUncertainty}, whereas a two-stage approach that first fits the mean
and then the variance recovers both~\Cref{fig:two_stage}.

\begin{figure}
	\centering
	\includegraphics[width=\linewidth]{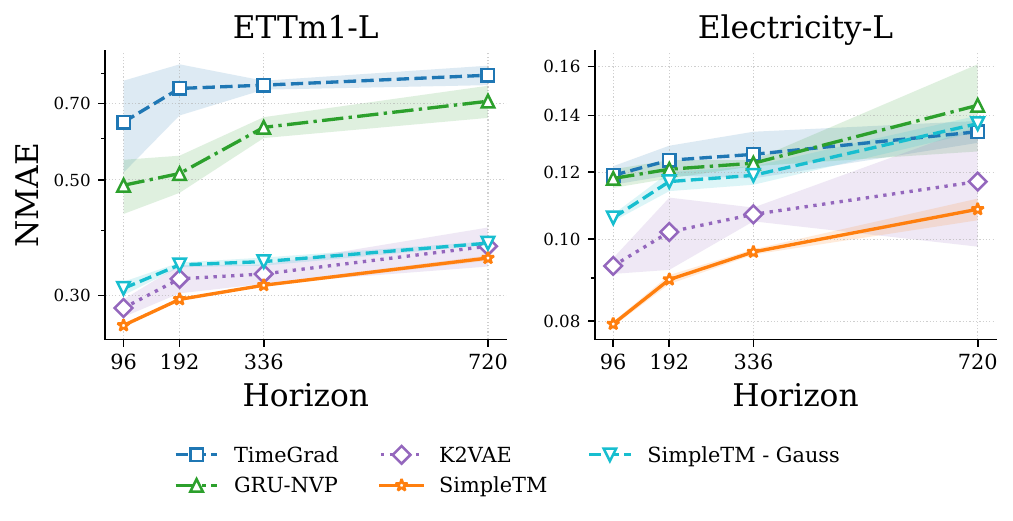}
	\caption{NMAE results on two time series datasets. Current probabilistic frameworks (TimeGrad, GRU-NVP, $K^2$VAE, SimpleTM-Gauss/MVE(SimpleTM)) exhibit higher NMAE than the point-prediction model (SimpleTM).}
	\label{fig:NMAEComparison}
\end{figure}

Flexible density modeling approaches based on Diffusion \cite{tashiro2021csdi}, Normalizing Flows \cite{rasulmultivariate}, or VAEs \cite{wu2025kvae} sidestep rigid distributional assumptions and can model complex densities. However, they inherit poor mean accuracy for two distinct reasons. First, like MVE, they optimize a distributional objective (e.g., the ELBO or a log-likelihood loss) rather than a mean-focused loss, so the predictive mean is only an implicit by-product of fitting the full density and is never directly supervised. Second, unlike MVE, they lack an analytical mean altogether: recovering it requires expensive Monte Carlo sampling, which introduces additional estimation error that worsens in high dimensions as samples become scarce relative to the space they must cover. These effects compound, and as a result state-of-the-art probabilistic models like $K^2$VAE~\citep{wu2025kvae} consistently trail deterministic point-predictors like SimpleTM~\citep{chen2025simpletm} in mean accuracy (\Cref{fig:NMAEComparison}).

To break this trade-off, we introduce Two-stage Odd Residual Flows (\model{}), completely isolating mean prediction from uncertainty estimation through a decoupled framework:
\textit{Stage 1 (Deterministic)}: An off-the-shelf point-prediction model (e.g., SimpleTM) captures the conditional mean. Free from NLL-driven variance distortion,
it retains a high point prediction accuracy.
\textit{Stage 2 (Probabilistic)}: A Restricted Normalizing Flow (RNF) \cite{Kobayashi2023DesignOR} models the residual density. By restricting the flow architecture to
strictly odd functions, we capture highly flexible, symmetric error distributions while mathematically guaranteeing that the exact analytical mean from Stage 1 is preserved without sampling.
As shown in \Cref{fig:two_stage_odd_flow} the two-stage approach retains the point-prediction accuracy of its Stage-1 baseline and builds the predictive uncertainty around that fixed mean, circumventing the joint NLL trade-off entirely.

\begin{enumerate}
	\item We show that a SimpleTM based two-stage Gaussian baseline (MVE-2S) already matches or beats $K^2$VAE on 11/12 distributional-accuracy comparisons (\Cref{tab:energyscore_crps}).
	\item We introduce \model{}, which extends this decoupled framework with an odd-constrained residual flow (\ross), modeling flexible, non-Gaussian residuals while provably preserving the exact Stage-1 mean, without sampling.
	\item We design a lightweight CNN architecture that efficiently maps time-series contexts to \ross{}'s parameters.
	\item \model{} with SimpleTM~\citep{chen2025simpletm} as 1st stage model sets a new state of the art, winning 7/9 CRPS and 9/9 NMAE on long-horizon forecasting and 6/8 CRPS and 5/8 NMAE on short-horizon forecasting, with \model{} adding a consistent gain over $K^2$VAE and MVE-2S.
\end{enumerate}

\section{Related works}
\label{sec:related}

\textbf{Deterministic forecasting} has progressed from classical statistical
models like ARIMA~\citep{BoxJenkins1994} to
modern deep architectures like Transformers.
These include Informer~\citep{zhou2021informer},
Autoformer~\citep{wu2021autoformer}, FEDformer~\citep{zhou2022fedformer},
Yformer~\citep{madhusudhanan2022u}, and Pyraformer
\citep{liu2022pyraformer} to name a few. Among them iTransformer~\citep{liu2024itransformer} and
PatchTST~\citep{nie2023patchtst} remain among the strongest performers.
Recently, SimpleTM~\citep{chen2025simpletm}
achieved state-of-the-art point accuracy combining wavelet preprocessing with a
geometric-product attention. Owing to its strong performance,
we adopt SimpleTM as the Stage~1 model in our experiments. \model{} is
agnostic to this choice.

\begin{table}[t]
\centering
\footnotesize
\begin{tabular}{lcccc}
\toprule
 & \shortstack{Analytic\\Mean} & \shortstack{Analytic\\Median} & \shortstack{Flexible\\Density} & \shortstack{Exact\\NLL} \\
\midrule
$K^2$VAE               & \xmark & \xmark & \cmark & \xmark \\
TimeGrad            & \xmark & \xmark & \cmark & \xmark \\
MVE              & \cmark & \cmark & \xmark & \cmark \\
GRU-NVP             & \xmark & \cmark & \cmark & \cmark \\
\textbf{\model{} (Ours)} & \cmark & \cmark & \cmark & \cmark \\
\bottomrule
\end{tabular}
\caption{Comparison of Probabilistic Forecasting Models.}
\label{tab:model_comparison}
\end{table}

\textbf{Probabilistic forecasting} aims to recover the full predictive distribution
rather than a single point estimate. Autoregressive models such as
DeepAR~\citep{salinas2020deepar} learn Gaussian parameters via an RNN.
Modern diffusion-based methods treat forecasting as an iterative denoising process,
like TimeGrad~\citep{rasul2021autoregressive}, CSDI~\citep{tashiro2021csdi}, and
TSDiff~\citep{kollovieh2023predict}. Conditional normalizing flows such as
GRU-NVP~\citep{rasulmultivariate} and TSFlow~\citep{kollovieh2025flow} instead
model complex joint densities without a fixed parametric form.
$K^2$VAE~\citep{wu2025kvae}, a VAE-based method for long-horizon probabilistic
forecasting, combines a Koopman latent dynamical system with a KalmanNet-based correction.
Despite this diversity, all of these methods either commit to a fixed distributional
family or fall back on Monte Carlo sampling to recover a predictive mean.
\Cref{tab:model_comparison} differentiates these approaches from \model{}, which decouples
mean estimation from density estimation entirely, obtaining both an analytical mean and a
flexible density, inspired by \citet{Kobayashi2023DesignOR}'s restriction of the flow
architecture to be odd.

\section{Preliminaries}
\label{sec:prelim}

\paragraph{Probabilistic Time Series Forecasting}
\label{sec:prelim:tsf}

A \emph{forecasting instance} is a tuple
$(\mathbf{X},\mathbf{Y})\in\mathbb{R}^{L\times C}\times\mathbb{R}^{H\times C}$,
where $\mathbf{X}=(\mathbf{z}_{t-L+1},\ldots,\mathbf{z}_{t})$ is the
look-back window of length $L$ and
$\mathbf{Y}=(\mathbf{z}_{t+1},\ldots,\mathbf{z}_{t+H})$ is the target
of horizon $H$ over $C$ channels.
Probabilistic forecasting aims to estimate the full conditional
distribution $p_\theta(\mathbf{Y}\mid\mathbf{X})$.
Given $N$ i.i.d.\ instances drawn from an unknown joint distribution
$P$, the standard training objective is the NLL:
\begin{equation}
	\min_\theta\;\mathbb{E}_{(\mathbf{X},\mathbf{Y})\sim P}
	\!\left[-\log p_\theta(\mathbf{Y}\mid\mathbf{X})\right].
	\label{eq:nll}
\end{equation}
A de-facto method is assuming $p$ to be Gaussian and train for Gaussian NLL.
\paragraph{Pitfalls of NLL Training}
\label{sec:prelim:pitfalls}

\citet{Seitzer2022PitfallsOfUncertainty} show that jointly training a Gaussian NLL head creates a mean-variance conflict. The gradient of the NLL with respect to the mean $\boldsymbol{\mu}$ is scaled element-wise by $1/\boldsymbol{\sigma}^{2}$, so high-variance regions contribute less to mean learning (\Cref{fig:pitfalls}). This effect worsens in high-dimensional, long-horizon settings, where density heads must model complex distributions, degrading point accuracy relative to a stand-alone mean predictor.

\paragraph{Faithful Heteroscedastic Regression}
\label{sec:prelim:faithful}

\citet{StirnWSPSK23} eliminate this conflict by \emph{fully decoupling} the two objectives. A first-stage mean predictor $f_{\theta_1}$ is trained under an MSE objective and frozen, then a second-stage network models only the residual distribution $p_{\theta_2}(\mathbf{Y}-f_{\theta_1}(\mathbf{X})\mid\mathbf{X})$. Because the mean network is fixed, the joint model's mean is guaranteed to be \emph{at least as accurate} as the stand-alone predictor, a property they call \emph{faithfulness}. However, they restrict the residual distribution to Gaussian, limiting expressiveness for heavy-tailed or multi-modal targets. \textsc{TORF} keeps this faithfulness guarantee while replacing the Gaussian residual with a Normalizing Flow, adding flexible, non-Gaussian density estimation without sacrificing mean accuracy.
\paragraph{Normalizing Flows}
\label{sec:prelim:nf}

A Normalizing Flow (NF)~\citep{papamakarios2021normalizing} defines a
bijective, differentiable map $f:\mathbb{R}^K\!\to\!\mathbb{R}^K$ from
a tractable base distribution $p_\mathbf{u}(\mathbf{u})$, typically
$\mathcal{N}(\mathbf{0},\mathbf{I})$, to a complex target
$p_\mathbf{v}(\mathbf{v})$ via the change-of-variables formula:
\begin{equation}
	p_\mathbf{v}(\mathbf{v};\theta)
	= p_\mathbf{u}\!\left(f^{-1}(\mathbf{v};\theta)\right)
	\left|\det\frac{\partial f^{-1}(\mathbf{v};\theta)}
	{\partial\mathbf{v}}\right|.
	\label{eq:cov}
\end{equation}
For conditional density estimation, $f$ is additionally conditioned on
the context $\mathbf{c}$, written $f(\mathbf{u};\mathbf{c},\theta)$.
Tractability requires an efficient inverse and a Jacobian
log-determinant computable in closed form.

\paragraph{Rational Quadratic Neural Spline Flows (RQ-NSF).}
\citet{durkan2019neural} proposed monotonic piecewise
rational-quadratic splines as the bijective transformation within the
flow, yielding highly non-linear yet analytically invertible maps.
Each dimension of the input is transformed independently by its own
univariate spline, making the Jacobian diagonal and its
log-determinant a sum of scalar log-derivatives.
The spline is supported on a bounded interval $[-B,B]$, defaulting to
the identity outside, and defined by $N_b$ bins parameterized by
widths $\mathbf{w}_b \in \mathbb{R}^{N_b}_{>0}$, heights
$\mathbf{h}_b \in \mathbb{R}^{N_b}_{>0}$, and interior knot derivatives
$\mathbf{d}_b \in \mathbb{R}^{N_b-1}_{>0}$, giving $3N_b - 1$ parameters
per dimension.
For a conditional distribution, these are predicted by a neural
network conditioned on $\mathbf{X}$:
\begin{equation}
	\boldsymbol{\phi} = (\mathbf{w}_b,\;\mathbf{h}_b,\;\mathbf{d}_b)
	= \mathrm{NN}(\mathbf{X};\theta_s),
	\qquad \boldsymbol{\phi} \in \mathbb{R}^{3N_b - 1}_{>0}.
	\label{eq:spline-params}
\end{equation}

\paragraph{Odd Flows for Analytically Tractable Mean}
\label{sec:prelim:odd}

For an unconstrained NF the predictive mean
$\mathbb{E}[\mathbf{v}]=\int \mathbf{v}\,p_\mathbf{v}(\mathbf{v})\,d\mathbf{v}$
is generally intractable.
\citet{Kobayashi2023DesignOR} showed that restricting $f$ to the class of
\emph{odd functions},
\begin{equation}
	f_{\mathrm{odd}}(-\mathbf{u}) = -f_{\mathrm{odd}}(\mathbf{u}),
	\quad\forall\,\mathbf{u}\in\mathbb{R}^K,
	\label{eq:odd}
\end{equation}
guarantees $\mathbb{E}[f_{\mathrm{odd}}(\mathbf{u})]=0$ whenever
$p_\mathbf{u}$ is symmetric about the origin, making the mean
analytically tractable without Monte Carlo sampling.
For spline flows, odd symmetry is enforced by requiring $f(0)=0$ and
mirroring bin parameters exactly across positive and negative domains.

\section{Methodology}
\label{sec:method}

\subsection{Two-Stage Residual Modeling}
\label{sec:method:twostage}

As established in \S\ref{sec:prelim:pitfalls}, jointly optimizing
mean and density under the NLL degrades mean accuracy.
\textsc{TORF} resolves this by fully decoupling the two objectives
into a sequential two-stage framework.

\paragraph{\textnormal{\emph{Stage 1: Mean Estimation.}}}
Any point-predictor $f_{\theta_1}$ trained independently under
an MSE objective to produce the conditional mean estimate
$\hat{\mathbf{Y}} = f_{\theta_1}(\mathbf{X})$.
\textsc{TORF} is model-agnostic at this stage i.e., any point-predictor
can serve as $f_{\theta_1}$.

\paragraph{\textnormal{\emph{Stage 2: Residual Density Estimation.}}}
Let $\boldsymbol{\varepsilon} = \mathbf{Y} - f_{\theta_1}(\mathbf{X})$ denote the residual.
Since $\mathbf{Y} \mapsto \boldsymbol{\varepsilon}$ is a translation, its Jacobian is the
identity and $p_\varepsilon(\boldsymbol{\varepsilon}\mid\mathbf{X}) =
p_Y(f_{\theta_1}(\mathbf{X})+\boldsymbol{\varepsilon}\mid\mathbf{X})$.
With $\theta_1$ frozen, a normalizing flow $p_{\theta_2}$ is trained
to model $p(\boldsymbol{\varepsilon}\mid\mathbf{X})$ directly, constrained by construction
to satisfy $\mathbb{E}_{p_{\theta_2}}[\boldsymbol{\varepsilon}\mid\mathbf{X}] = \mathbf{0}$,
and the full predictive distribution is recovered as:
\begin{equation}
	\hat p(\mathbf{Y}\mid\mathbf{X}) = p_{\theta_2}(\mathbf{Y} - f_{\theta_1}(\mathbf{X})\mid\mathbf{X}).
	\label{eq:pred}
\end{equation}
A sample $\hat{Y} \sim \hat p(\mathbf{Y}\mid\mathbf{X})$ can be read as $f_{\theta_1}(x) + \hat{\varepsilon}$ where $\hat\varepsilon\sim p_{\theta_2}(\varepsilon\mid\mathbf{X})$.
The design of $p_{\theta_2}$ is described in the following subsections. We make two assumptions on $p(\boldsymbol{\varepsilon}\mid\mathbf{X})$:

\begin{enumerate}
	\item \textbf{Conditional Independence.}
	$p(\boldsymbol{\varepsilon}\mid\mathbf{X}) =
	\displaystyle\prod_{c,h} p(\varepsilon_{h,c}\mid\mathbf{X})$
	\item \textbf{Symmetry.}
	$p(\boldsymbol{\varepsilon}\mid\mathbf{X}) = p(-\boldsymbol{\varepsilon}\mid\mathbf{X})$
\end{enumerate}
These assumptions trade a small amount of distributional generality for the exact mean-preservation guarantee at the core of our method, and they are far milder in practice than they may appear: even under both, \model{}\ outperforms state-of-the-art probabilistic forecasters on CRPS, their primary evaluation metric. In the ablation study, we introduce a mixture-of-\model{}\ extension that captures asymmetric distributions. In future work, we discuss extending \model\ to multivariate distributions.

\subsection{Context Embedding}
\label{sec:method:embed}

The flow components are conditioned on $\mathbf{X}$ through a shared context
embedding $\mathbf{F} \in \mathbb{R}^{H \times C}$.
There are two natural choices for constructing $\mathbf{F}$: (i) introduce a
separate encoder trained from scratch alongside the second stage, or
(ii) reuse the encoder learned by $f_{\theta_1}$.
When $f_{\theta_1}$ has differentiable parameters, as is the case for
neural network-based predictors, option (ii) is preferable, as the
encoder has already learned high-quality temporal representations
under MSE, providing a strong initialization at no additional
parameter cost.
When $f_{\theta_1}$ is non-differentiable (e.g., gradient boosting),
option (i) is necessary, and a separate neural encoder such as
SimpleTM can be used instead.

In our experiments, $f_{\theta_1}$ is SimpleTM, so we adopt option
(ii).
All $f_{\theta_1}$ layers are frozen and the final linear prediction head is duplicated and
kept trainable, allowing $\mathbf{F}$ to
adapt to the needs of the flow without disturbing the learned
encoder:
\begin{equation}
	\mathbf{F} = \mathbf{W}\cdot\mathrm{sg}\!\left[\mathrm{enc}(\mathbf{X};\, \theta_1)\right] + \mathbf{b},
	\label{eq:ctx}
\end{equation}
Here, $\mathrm{enc}(\mathbf{X};\,\theta_1)$ denotes the output of frozen encoder layers of $f_{\theta_1}$. $\mathbf{W}$ and $\mathbf{b}$ are the parameters of the duplicated, unfrozen final layer of $f_{\theta_1}$, which are initialized from the pre-trained weights and fine-tuned during the second stage. $\mathrm{sg}\left[\cdot\right]$ denotes the stop-gradient operator, which blocks gradients from flowing into $\mathrm{enc}(\mathbf{X};\,\theta_1)$. All subsequent modules take $\mathbf{F}$ as their conditioning input.

\begin{algorithm}[t]
	\caption{\textsc{TORF} Forward Pass}
	\label{alg:torf}
	\textbf{Input}: history $\mathbf{X}$, target $\mathbf{Y}$,
	pretrained $f_{\theta_1}$\\
	\textbf{Parameter}: flow layers $K$, $\theta_2 = \{\mathbf{W}, \mathbf{b}, \boldsymbol{\omega},
	\boldsymbol{\Theta_s}\}$\\
	\textbf{Output}: training loss $\mathcal{L}$
	\begin{algorithmic}[1]
	\STATE $\hat{\mathbf{Y}} \leftarrow f_{\theta_1}(\mathbf{X})$
	\STATE $\mathbf{F} \leftarrow \mathbf{W} \cdot \mathrm{sg}[\mathrm{enc}(\mathbf{X};\,\theta_1)] + \mathbf{b}$
	\STATE $\mathbf{U}^{(0)} \leftarrow \mathbf{Y} - \hat{\mathbf{Y}}$
	\STATE $\mathbf{S} \leftarrow \mathrm{ScaleNet}(\mathbf{F};\,\boldsymbol{\omega})$
	\STATE $\mathrm{ldj} \leftarrow 0$
	\FOR{$k = 0$ \TO $K - 1$}
	\STATE $\boldsymbol{\Phi}^{(k)} \leftarrow \mathrm{SplineNet}^{(k)}(\mathbf{F};\,\boldsymbol{\theta}_s^{(k)})$
	\STATE $\mathbf{U}^{(k+\frac{1}{2})},\,\mathrm{ldj}^{(k)}_{\mathrm{s}}
	\leftarrow \scl(\mathbf{U}^{(k)};\,\mathbf{S})$
	\STATE $\mathbf{U}^{(k+1)},\,\mathrm{ldj}^{(k)}_{\mathrm{r}}
	\leftarrow \ross(\mathbf{U}^{(k+\frac{1}{2})};\,\boldsymbol{\Phi}^{(k)})$
	\STATE $\mathrm{ldj} \leftarrow \mathrm{ldj}
	+ \mathrm{ldj}^{(k)}_{\mathrm{s}} + \mathrm{ldj}^{(k)}_{\mathrm{r}}$
	\ENDFOR
	\STATE $\mathbf{U}^{(K+1)},\,\mathrm{ldj}^{(K)}_{\mathrm{s}}
	\leftarrow \scl(\mathbf{U}^{(K)};\,\mathbf{S})$
	\STATE $\mathrm{ldj} \leftarrow \mathrm{ldj} + \mathrm{ldj}^{(K)}_{\mathrm{s}}$
	\RETURN $\mathcal{L} = -\log \mathcal{N}\!\left(\mathbf{U}^{(K+1)};\,
	\mathbf{0},\mathbf{I}\right) - \mathrm{ldj}$
	\end{algorithmic}
\end{algorithm}

\subsection{Residual Odd Splines (\ross)}
\label{sec:method:ross}

An unconstrained $p_{\theta_2}$ can introduce a location shift in
$p(\boldsymbol{\varepsilon}\mid\mathbf{X})$, decoupling the predictive mean of the model from
$f_{\theta_1}(\mathbf{X})$.
We prevent this by restricting the spline to the class of odd
functions (\S\ref{sec:prelim:odd}), analytically enforcing
$\mathbb{E}_{p_{\theta_2}}[\boldsymbol{\varepsilon}\mid\mathbf{X}]=\mathbf{0}$.

Let $S$ be a monotonic rational-quadratic spline defined over
$[0,B]$, defaulting to the identity outside.
Rather than mirroring bin parameters across both domains, we process
only the magnitude of the input through $S$ and restore the sign $\left(\mathrm{sgn}(\cdot)\right)$
afterward.
The forward and inverse transformations for each scalar element
$u_{h,c}$ are:
\begin{align}
	v_{h,c} &= \mathrm{sgn}(u_{h,c})\cdot
	S\!\left(|u_{h,c}|;\,\boldsymbol{\phi}_{h,c}\right),
	\label{eq:ross-fwd}\\
	u_{h,c} &= \mathrm{sgn}(v_{h,c})\cdot
	S^{-1}\!\left(|v_{h,c}|;\,\boldsymbol{\phi}_{h,c}\right),
	\label{eq:ross-inv}
\end{align}
where $\boldsymbol{\phi}_{h,c} = (\mathbf{w}_b, \mathbf{h}_b, \mathbf{d}_b)$ are the
per-element spline parameters defined in \S\ref{sec:prelim:nf}.
Since the spline maps $[0,B]$ onto itself, $S(0)=0$ holds by
construction, and the convention $\mathrm{sgn}(0)=1$ then ensures
continuity at the origin; $S^{-1}$ is available analytically since
$S$ is monotone.
This focuses the entire capacity of $S$ on the non-negative
half-axis, with odd symmetry satisfied exactly by construction.
The element-wise application yields a diagonal Jacobian whose
forward-direction log-determinant is
$\mathrm{ldj}_{\mathrm{r}} = \sum_{h,c}\log S'(|u_{h,c}|;\,\boldsymbol{\phi}_{h,c})$.
Combining \eqref{eq:ross-fwd}--\eqref{eq:ross-inv} and the
log-determinant:
\begin{equation}
	\mathbf{V},\, \mathrm{ldj}_{\mathrm{r}} = \ross(\mathbf{U};\, \boldsymbol{\Phi}).
	\label{eq:ross-op}
\end{equation}

\paragraph{SplineNet: Parameterizing \ross.}
Recent works have adopted spline-based flows for probabilistic
regression~\citep{Madhusudhanan2025.TabResFlow} and
forecasting~\citep{yalavarthi2024marginalization},
parameterizing $\boldsymbol{\phi}$ via MLPs.
However, in the multivariate sequential setting, predicting
$\boldsymbol{\Phi} \in \mathbb{R}_{>0}^{H\times C\times(3N_b-1)}$ naively with
an MLP incurs $O(H^2C^2)$ parameter cost, tractable for univariate
targets but prohibitive for long-horizon forecasting.
Since $\mathbf{F}$ already encodes long-range temporal structure from
$f_{\theta_1}$, local refinements over the time axis are sufficient
to produce spline parameters at each time step.
We further assume that residual distributions vary smoothly across
time, making local convolution a reasonable architectural choice.
We therefore use a lightweight 1D convolutional network operating
over the time axis of $\mathbf{F}$:
\begin{equation}
	\boldsymbol{\Phi} = \mathrm{Conv1D}\!\left(\mathrm{ReLU}\!\left(
	\mathrm{Conv1D}(\mathbf{F})\right)\right)
	\in \mathbb{R}^{H \times C \times (3N_b-1)},
	\label{eq:conv}
\end{equation}
where the first Conv1D maps $\mathbf{F}$ to a
hidden representation via $C_{\mathrm{hid}}$ filters.
$\boldsymbol{\Phi}$ is sliced along the last dimension to yield $\mathbf{w}_b$,
$\mathbf{h}_b$, and $\mathbf{d}_b$, each passed through a softplus to ensure
positivity.
This reduces parameter cost to $O(k \cdot C_{\mathrm{hid}} \cdot
N_b)$ for kernel size $k \ll H$, independent of the forecast
horizon.

\subsection{Linear Scaling Layer (\scl)}
\label{sec:method:scl}

\ross\ is defined over $[0, B]$, defaulting to the identity outside.
Residuals $\boldsymbol{\varepsilon}$ are not guaranteed to lie within this range,
reducing the effective capacity of the spline.
We therefore prepend a conditional scaling layer that adapts the
input range to the spline support:
\begin{equation}
	v_{h,c} = u_{h,c} \cdot s_{h,c},
	\qquad
	u_{h,c} = v_{h,c} / s_{h,c},
	\label{eq:scl}
\end{equation}
where $s_{h,c} > 0$ is a positive scale conditioned on $\mathbf{X}$,
predicted independently per channel by a network whose parameters are shared across channels (see ScaleNet below).
\citet{Yalavarthi_Scholz_Born_Schmidt-Thieme_2025} use both scaling
and translation as element-wise flow components; however, a
translation term shifts the residual distribution, breaking odd
symmetry and decoupling the predictive mean of $p(\mathbf{Y}\mid\mathbf{X})$
from $f_{\theta_1}(\mathbf{X})$.
\scl\ therefore restricts to scaling only, which is an odd function
and preserves $\mathbb{E}_{p_{\theta_2}}[\boldsymbol{\varepsilon}\mid\mathbf{X}]=\mathbf{0}$.
Its forward-direction log-determinant is:
\begin{equation}
	\mathbf{V},\, \mathrm{ldj}_{\mathrm{s}} = \scl(\mathbf{U};\, \mathbf{S}),
	\qquad \mathrm{ldj}_{\mathrm{s}} = \textstyle\sum_{hc} \log s_{h,c}.
	\label{eq:scl-op}
\end{equation}

\paragraph{ScaleNet: Parameterizing \scl.}
The scale matrix $\mathbf{S} \in \mathbb{R}^{H\times C}_{>0}$ is predicted from $\mathbf{F}$
via a bounded MLP applied independently at each channel:
\begin{equation}
	\mathbf{s}_{:,c} = \exp\!\left(a\cdot\tanh\!\left(
	\frac{\mathrm{MLP}(\mathbf{F}_{:,c};\,\boldsymbol{\omega})}{a}\right)\right),
	\qquad \mathbf{s}_{:,c} \in \mathbb{R}^{H}_{>0},
	\label{eq:scale}
\end{equation}
where $\mathbf{F}_{:,c} \in \mathbb{R}^{H}$ is the context of channel $c$,
and $a > 0$ constrains each $\log s_{h,c} \in [-a, a]$ keeping the scale bounded.
The parameters $\boldsymbol{\omega}$ are shared across all
$C$ channels.

\begin{table*}[t]
    \centering
    \resizebox{\textwidth}{!}{\footnotesize
    \setlength{\tabcolsep}{1mm}
    \begin{tabular}{c|c|cccccccccc}
    \toprule
        Model & Metric & ETTm1-L & ETTm2-L & ETTh1-L & ETTh2-L & Electricity-L & Traffic-L & Weather-L & Exchange-L & ILI-L \\ \midrule
        \multirow{2}{*}{PatchTST} & CRPS &$0.304\scriptstyle\pm0.029$	&${{0.229}\scriptstyle\pm0.036}$	&$0.323\scriptstyle\pm0.020$	&$0.304\scriptstyle\pm0.018$	&$0.127\scriptstyle\pm0.015$	&$0.214\scriptstyle\pm0.001$	&$0.142\scriptstyle\pm0.005$	&$0.097\scriptstyle\pm0.007$
        &$0.233 \scriptstyle{\pm 0.019}$ \\
        ~ & NMAE & $0.382\scriptstyle\pm0.066$	&${{0.288}\scriptstyle\pm0.034}$	&$0.428\scriptstyle\pm0.024$	&$0.371\scriptstyle\pm0.021$	&$0.164\scriptstyle\pm0.024$	&${{0.253}\scriptstyle\pm0.012}$	&$0.152\scriptstyle\pm0.029$	&$0.126\scriptstyle\pm0.001$	&$0.287\scriptstyle\pm0.023$ \\ \hline
        \multirow{2}{*}{iTrans.} & CRPS & $0.455\scriptstyle\pm0.021$	&$0.311\scriptstyle\pm0.024$	&$0.350\scriptstyle\pm0.019$	&$0.542\scriptstyle\pm0.015$	&$0.109\scriptstyle\pm0.044$	&$0.284\scriptstyle\pm0.004$	&$0.133\scriptstyle\pm0.004$	&$0.087\scriptstyle\pm0.023$
        &$0.222 \scriptstyle{\pm 0.020}$ \\
        ~ & NMAE &$0.490\scriptstyle\pm0.038$	&$0.385\scriptstyle\pm0.042$	&$0.449\scriptstyle\pm0.022$	&$0.667\scriptstyle\pm0.012$	&$0.140\scriptstyle\pm0.009$	&$0.361\scriptstyle\pm0.030$	&$0.147\scriptstyle\pm0.019$	&$0.113\scriptstyle\pm0.015$	&$0.278\scriptstyle\pm0.017$ \\ \hline
        \multirow{2}{*}{GRU NVP} & CRPS &$0.546\scriptstyle\pm0.036$	&$0.561\scriptstyle\pm0.273$	&$0.502\scriptstyle\pm0.039$	&$0.539\scriptstyle\pm0.090$	&$0.114\scriptstyle\pm0.013$	&${{0.211}\scriptstyle\pm0.004}$	&$0.110\scriptstyle\pm0.004$	&$0.079\scriptstyle\pm0.009$	&$0.307\scriptstyle\pm0.005$  \\
        ~ & NMAE & $0.707\scriptstyle\pm0.050$	&$0.749\scriptstyle\pm0.385$	&$0.643\scriptstyle\pm0.046$	&$0.688\scriptstyle\pm0.161$	&$0.144\scriptstyle\pm0.017$	&$0.264\scriptstyle\pm0.006$	&$0.135\scriptstyle\pm0.008$	&$0.103\scriptstyle\pm0.009$	&$0.333\scriptstyle\pm0.005$ \\ \hline
        \multirow{2}{*}{TimeGrad} & CRPS & $0.621\scriptstyle\pm0.037$	&$0.470\scriptstyle\pm0.054$	&$0.523\scriptstyle\pm0.027$	&$0.445\scriptstyle\pm0.016$	&$0.108\scriptstyle\pm0.003$	&$0.220\scriptstyle\pm0.002$	&$0.113\scriptstyle\pm0.011$	&$0.099\scriptstyle\pm0.015$	&$0.295\scriptstyle\pm0.083$ \\
        ~ & NMAE & $0.793\scriptstyle\pm0.034$	&$0.561\scriptstyle\pm0.044$	&$0.672\scriptstyle\pm0.015$	&$0.550\scriptstyle\pm0.018$	&${{0.134}\scriptstyle\pm0.004}$	&$0.263\scriptstyle\pm0.001$	&$0.136\scriptstyle\pm0.020$	&$0.113\scriptstyle\pm0.016$	&$0.325\scriptstyle\pm0.068$  \\ \hline
        \multirow{2}{*}{CSDI} & CRPS & $0.448\scriptstyle\pm0.038$	&$0.239\scriptstyle\pm0.035$	&$0.528\scriptstyle\pm0.012$	&$0.302\scriptstyle\pm0.040$	&\textemdash	&\textemdash	&${{0.087}\scriptstyle\pm0.003}$	&$0.143\scriptstyle\pm0.020$	&$0.283\scriptstyle\pm0.012$  \\
        ~ & NMAE & $0.578\scriptstyle\pm0.051$	&$0.306\scriptstyle\pm0.040$	&$0.657\scriptstyle\pm0.014$	&$0.382\scriptstyle\pm0.030$	&\textemdash	&\textemdash	&$ {{0.102}\scriptstyle\pm0.005}$	&$0.173\scriptstyle\pm0.020$	&$0.299\scriptstyle\pm0.013$ \\ \hline
        \multirow{2}{*}{$K^2$VAE} & CRPS & $\underline{\textit{0.294}}\scriptstyle\pm0.026$	&$\underline{\textit{0.221}}\scriptstyle\pm0.023$	&$\underline{\textit{0.314}}\scriptstyle\pm0.011$	&$\underline{\textit{0.280}}\scriptstyle\pm0.014$	&$\textbf{0.057}\scriptstyle\pm0.005$	&$\textbf{0.200}\scriptstyle\pm0.001$	&$\underline{\textit{0.084}}\scriptstyle\pm0.003$	&$\underline{\textit{0.069}}\scriptstyle\pm0.005$	&$\underline{\textit{0.142}} \scriptstyle{\pm 0.008}$ \\
        ~ & NMAE & $\underline{\textit{0.373}}\scriptstyle\pm0.032$	&$\underline{\textit{0.275}}\scriptstyle\pm0.035$	&$\underline{\textit{0.396}}\scriptstyle\pm0.012$	&$\underline{\textit{0.278}}\scriptstyle\pm0.020$	&$\underline{\textit{0.117}}\scriptstyle\pm0.019$	&$\underline{\textit{0.248}}\scriptstyle\pm0.010$	&$\underline{\textit{0.099}}\scriptstyle\pm0.009$	&$\underline{\textit{0.084}}\scriptstyle\pm0.017$	&$\underline{\textit{0.167}}\scriptstyle\pm0.007$\\ \hline
        \multirow{2}{*}{\model{}} & CRPS & $\textbf{0.279}\scriptstyle\pm0.001$	&$\textbf{0.166}\scriptstyle\pm0.000$	&$\textbf{0.286}\scriptstyle\pm0.001$	&$\textbf{0.181}\scriptstyle\pm0.000$	&$\underline{\textit{0.081}}\scriptstyle\pm0.000$	&$\underline{\textit{0.203}}\scriptstyle\pm0.001$	&$\textbf{0.079}\scriptstyle\pm0.001$	&$\textbf{0.059}\scriptstyle\pm0.000$	&$\textbf{0.131} \scriptstyle{\pm 0.000}$ \\
        ~ & NMAE & $\textbf{0.354}\scriptstyle\pm0.003$	&$\textbf{0.206}\scriptstyle\pm0.001$	&$\textbf{0.367}\scriptstyle\pm0.001$	&$\textbf{0.229}\scriptstyle\pm0.020$	&$\textbf{0.108}\scriptstyle\pm0.003$	&$\textbf{0.242}\scriptstyle\pm0.004$	&$\textbf{0.096}\scriptstyle\pm0.001$	&$\textbf{0.080}\scriptstyle\pm0.000$	&$\textbf{0.158}\scriptstyle\pm0.007$\\ \hline
        \multirow{2}{*}{Imp (\%)} & CRPS & $+5.1$ & $+24.9$ & $+8.9$ & $+35.4$ & $-42.1$ & $-1.5$ & $+6.0$ & $+14.5$ & $+7.7$ \\
        ~ & NMAE & $+5.1$ & $+25.1$ & $+7.3$ & $+17.6$ & $+7.7$ & $+2.4$ & $+3.0$ & $+4.8$ & $+5.4$ \\
        \bottomrule
    \end{tabular}}
    \caption{Comparison on long-term probabilistic forecasting (horizon 720) across nine real-world datasets. Lower CRPS/NMAE is better. Means and standard errors are from 5 independent runs. \textbf{Bold} = best, \underline{\textit{italic}} = second best. Full results for all four horizons ($96, 192, 336, 720$) are in Appendix \Cref{tab:long_term_fore_CRPS,tab:long_term_fore_NMAE}. '-' indicates results required excessive time and memory consumption.}
    \label{tab: long-term}
\end{table*}

\begin{table*}[t]
    \centering
    \footnotesize
    \setlength{\tabcolsep}{1.5mm}
    \begin{tabular}{c|c|cccccccc}
    \toprule
        Model & Metric & Exchange-S & Solar-S & Electricity-S & Traffic-S & ETTh1-S & ETTh2-S & ETTm1-S & ETTm2-S  \\ \midrule
        \multirow{2}{*}{PatchTST} & CRPS & $0.052\scriptstyle{\pm 0.016}$ & $0.491\scriptstyle{\pm 0.008}$ & $0.063\scriptstyle{\pm 0.003}$ & $0.278\scriptstyle{\pm 0.018}$ & $0.314\scriptstyle{\pm 0.022}$ & $0.207\scriptstyle{\pm 0.006}$ & $0.234\scriptstyle{\pm 0.011}$ & $0.212\scriptstyle{\pm 0.018}$  \\
        ~ & NMAE & $0.069\scriptstyle{\pm 0.013}$ & $0.663\scriptstyle{\pm 0.010}$ & $0.085\scriptstyle{\pm 0.006}$ & $0.363\scriptstyle{\pm 0.023}$ & $0.407\scriptstyle{\pm 0.030}$ & $0.260\scriptstyle{\pm 0.009}$ & $0.271\scriptstyle{\pm 0.009}$ & $0.257\scriptstyle{\pm 0.011}$  \\ \hline
        \multirow{2}{*}{iTrans.} & CRPS & $0.059\scriptstyle{\pm 0.018}$ & $0.504\scriptstyle{\pm 0.012}$ & $0.066\scriptstyle{\pm 0.004}$ & $0.244\scriptstyle{\pm 0.011}$ & $0.317\scriptstyle{\pm 0.020}$ & $0.219\scriptstyle{\pm 0.008}$ & $0.254\scriptstyle{\pm 0.012}$ & $0.201\scriptstyle{\pm 0.018}$  \\
        ~ & NMAE & $0.081\scriptstyle{\pm 0.022}$ & $0.695\scriptstyle{\pm 0.017}$ & $0.087\scriptstyle{\pm 0.006}$ & $0.319\scriptstyle{\pm 0.019}$ & $0.408\scriptstyle{\pm 0.028}$ & $0.276\scriptstyle{\pm 0.017}$ & $0.291\scriptstyle{\pm 0.017}$ & $0.242\scriptstyle{\pm 0.009}$
        \\ \hline
        \multirow{2}{*}{GRU NVP} & CRPS & $0.019\scriptstyle{\pm 0.006}$ & $0.530\scriptstyle{\pm 0.008}$ & $0.062\scriptstyle{\pm 0.003}$ & $0.168\scriptstyle{\pm 0.008}$ & $0.398\scriptstyle{\pm 0.034}$ & $0.309\scriptstyle{\pm 0.023}$ & $0.455\scriptstyle{\pm 0.029}$ & $0.276\scriptstyle{\pm 0.014}$  \\
        ~ & NMAE & $0.024\scriptstyle{\pm 0.007}$ & $0.670\scriptstyle{\pm 0.011}$ & $0.081\scriptstyle{\pm 0.006}$ & $0.209\scriptstyle{\pm 0.013}$ & $0.477\scriptstyle{\pm 0.040}$ & $0.375\scriptstyle{\pm 0.024}$ & $0.584\scriptstyle{\pm 0.047}$ & $0.349\scriptstyle{\pm 0.028}$  \\ \hline
        \multirow{2}{*}{TimeGrad} & CRPS & $\underline{\textit{0.009}}\scriptstyle{\pm 0.001}$& $0.465\scriptstyle{\pm 0.016}$ & $0.057\scriptstyle{\pm 0.002}$ & ${{0.130}\scriptstyle{\pm 0.005}}$ & $0.273\scriptstyle{\pm 0.007}$ &$0.184\scriptstyle{\pm 0.006}$ & $0.186\scriptstyle{\pm 0.003}$ & $0.148\scriptstyle{\pm 0.004}$  \\
        ~ & NMAE & $\underline{\textit{0.012}}\scriptstyle{\pm 0.002}$& $0.609\scriptstyle{\pm 0.015}$ & $0.073\scriptstyle{\pm 0.004}$ & $\textbf{0.155}\scriptstyle{\pm 0.007}$ & $0.356\scriptstyle{\pm 0.013}$ & $0.224\scriptstyle{\pm 0.014}$ &$ 0.246\scriptstyle{\pm 0.007}$ & $0.189\scriptstyle{\pm 0.006}$  \\ \hline
        \multirow{2}{*}{CSDI} & CRPS & $\underline{\textit{0.009}}\scriptstyle{\pm 0.001}$ & $\underline{\textit{0.392}}\scriptstyle{\pm 0.006}$ & $\textbf{0.051}\scriptstyle{\pm 0.001}$ & $0.147\scriptstyle{\pm 0.014}$ & $0.262\scriptstyle{\pm 0.012}$ & ${{0.133}\scriptstyle{\pm 0.006}}$ & $0.140\scriptstyle{\pm 0.012}$ & $0.144\scriptstyle{\pm 0.018}$  \\
        ~ & NMAE & $0.013\scriptstyle{\pm 0.001}$& $\underline{\textit{0.533}}\scriptstyle{\pm 0.007}$& $\textbf{0.066}\scriptstyle{\pm 0.001}$ & $0.175\scriptstyle{\pm 0.013}$ & $0.339\scriptstyle{\pm 0.009}$ & ${{0.161}\scriptstyle{\pm 0.013}}$ & $0.169\scriptstyle{\pm 0.021}$ & $0.181\scriptstyle{\pm 0.024}$  \\ \hline
        \multirow{2}{*}{$K^2$VAE} & CRPS & $\underline{\textit{0.009}}\scriptstyle{\pm 0.001}$& $\textbf{0.367}\scriptstyle{\pm 0.006}$ & $\underline{\textit{0.053}}\scriptstyle{\pm 0.002}$ & $\underline{\textit{0.129}}\scriptstyle{\pm 0.004}$ & $\underline{\textit{0.256}}\scriptstyle{\pm 0.008}$ & $\underline{\textit{0.128}}\scriptstyle{\pm 0.006}$ & $\underline{\textit{0.135}}\scriptstyle{\pm 0.008}$ & $\underline{\textit{0.122}}\scriptstyle{\pm 0.008}$  \\
        ~ & NMAE & $\textbf{0.009}\scriptstyle{\pm 0.001}$ & $\textbf{0.480}\scriptstyle{\pm 0.008}$ & $\underline{\textit{0.068}}\scriptstyle{\pm 0.002}$& $\underline{\textit{0.157}}\scriptstyle{\pm 0.007}$ & $\underline{\textit{0.312}}\scriptstyle{\pm 0.008}$ & $\underline{\textit{0.140}}\scriptstyle{\pm 0.007}$ & $\underline{\textit{0.152}}\scriptstyle{\pm 0.007}$ & $\underline{\textit{0.146}}\scriptstyle{\pm 0.009}$ \\ \hline
        \multirow{2}{*}{\model{}} & CRPS & $\textbf{0.008}\scriptstyle{\pm 0.000}$& ${{0.456}\scriptstyle{\pm 0.002}}$ & $\underline{\textit{0.053}}\scriptstyle{\pm 0.000}$ & $\textbf{0.126}\scriptstyle{\pm 0.000}$ & $\textbf{0.231}\scriptstyle{\pm 0.001}$ & $\textbf{0.106}\scriptstyle{\pm 0.000}$ & $\textbf{0.108}\scriptstyle{\pm 0.000}$ & $\textbf{0.085}\scriptstyle{\pm 0.000}$  \\
        ~ & NMAE & $\textbf{0.009}\scriptstyle{\pm 0.001}$ & ${{0.583}\scriptstyle{\pm 0.025}}$ & ${{0.070}\scriptstyle{\pm 0.001}}$& ${{0.159}\scriptstyle{\pm 0.000}}$ & $\textbf{0.286}\scriptstyle{\pm 0.001}$ & $\textbf{0.132}\scriptstyle{\pm 0.001}$ & $\textbf{0.139}\scriptstyle{\pm 0.001}$ & $\textbf{0.105}\scriptstyle{\pm 0.003}$ \\ \hline
        \multirow{2}{*}{Imp (\%)} & CRPS & $+11.1$ & $-24.3$ & $-3.9$ & $+2.3$ & $+9.8$ & $+17.2$ & $+20.0$ & $+30.3$ \\
        ~ & NMAE & $+0.0$ & $-21.5$ & $-6.1$ & $-2.6$ & $+8.3$ & $+5.7$ & $+8.6$ & $+28.1$ \\
        \bottomrule
    \end{tabular}
    \caption{Comparison on short-term probabilistic forecasting across eight real-world datasets. Lower CRPS/NMAE is better. Means and standard errors are from 5 independent runs. \textbf{Bold} = best, \underline{\textit{Italic}} = second best.}
    \label{tab: short-term}
\end{table*}

\subsection{TORF Architecture}
\textsc{TORF} composes \scl\ and \ross\ into a $K$-block normalizing
flow that transforms the residual $\boldsymbol{\varepsilon}$ into a standard Gaussian.
Each block consists of a \scl\ layer followed by a \ross\ layer,
with a trailing \scl\ after the final block.
The full architecture is illustrated in Alg.~\ref{alg:torf} and,
in Figure~\ref{fig:arch} (Appendix~\ref{sec:ablation:modelArchi}).

For better regularization and parameter efficiency, the \scl\
parameters $\mathbf{s}$ are shared across all $K+1$ \scl\ layers,
while each block $k$ employs its own spline network
$\mathrm{SplineNet}^{(k)}$.
Since both components (\scl, \ross) are strictly odd functions, their composition
is also odd, analytically enforcing
$\mathbb{E}_{p_{\theta_2}}[\boldsymbol{\varepsilon}\mid\mathbf{X}]=\mathbf{0}$ throughout.
When $K=0$, no \ross\ layer is applied and \textsc{TORF} reduces to
a conditional Gaussian model, exactly recovering the MVE-2S baseline. An equivalence proof is
presented in the Appendix \ref{sec:eqscalinggauss} for reference.
Increasing $K$ progressively introduces non-linear flexibility
through the \ross\ layers.

The training loss is the NLL computed via the change-of-variables
formula with the accumulated log-determinant of the forward map
\begin{equation}
	\resizebox{\columnwidth}{!}{$\displaystyle
	\mathcal{L} =
	-\log \mathcal{N}\!\left(\mathbf{U}^{(K+1)};\,\mathbf{0},\mathbf{I}\right)
	- \sum_{k=0}^{K-1}\!\left(
	\mathrm{ldj}^{(k)}_{\mathrm{s}}
	+ \mathrm{ldj}^{(k)}_{\mathrm{r}}\right)
	- \mathrm{ldj}^{(K)}_{\mathrm{s}}
	$}
	\label{eq:loss}
\end{equation}

\section{Experiments}

Following the protocol and datasets from $K^2$VAE \cite{wu2025kvae}, we conduct experiments on 8 short-term and 9 long-term forecasting datasets.
For short-term tasks, we use ETTh1-S, ETTh2-S, ETTm1-S, ETTm2-S, Solar-S, Electricity-S, Traffic-S and Exchange-S, with context lengths $L$ and horizons $H$ set to 24 (or 30 for Exchange). For long-term forecasting, we utilize the ETT series, Electricity-L, Traffic-L, Exchange-L, Weather-L, and ILI-L. The forecasting horizons $H$ range from $\{96, 192, 336, 720\}$ in general except for ILI-L with $H$ range from $\{24, 36, 48, 60\}$, while the context length is fixed at $L=96$ ($L=36$ for ILI) to ensure a fair comparison across all models. Dataset descriptions are provided in Appendix section \ref{sec:datadesc}.

We compare \model{} against 12 baselines, 4 point forecasters and 8 generative probabilistic models. The point forecasters, each fitted with an MVE head to produce a Gaussian predictive distribution, are FITS \cite{xu2024fits}, PatchTST \cite{nie2023patchtst}, iTransformer \cite{liu2024itransformer}, and Koopa \cite{liu2023koopa}. The generative baselines are TSDiff \cite{kollovieh2023predict}, D3VAE \cite{li2022d3vae}, GRU-NVP, GRU-MAF, and Trans-MAF \cite{rasulmultivariate}, TimeGrad \cite{rasul2021autoregressive}, CSDI \cite{tashiro2021csdi}, and $K^2$VAE \cite{wu2025kvae}. The main table reports the 6 strongest; full results for all 12 are in Appendix~\ref{sec:mainResults_allBaselines}.

\model{} uses SimpleTM for Stage-1. We tune hyperparameters (Appendix \ref{sec:appendix:hpselectImplementation}) with Optuna \cite{akiba2019optuna} over 20 trials per configuration for both main and ablation results, and report mean and standard deviation over 5 Stage-2 runs with frozen Stage-1. Code is provided as supplementary material.

We evaluate probabilistic accuracy with CRPS (Continuous Ranked Probability Score) and point accuracy with NMAE (Normalized Mean Absolute Error). Since CRPS only captures univariate marginals, we also report Energy Score in the ablation study to assess multivariate prediction quality. Metric details are described in Appendix~\ref{sec:metrics}.

\subsection{Results}

\begin{table}[t]
\centering
\footnotesize
\setlength{\tabcolsep}{1mm}
\resizebox{\columnwidth}{!}{\begin{tabular}{lcccccc}
\toprule
Variant & 2-Stage & Odd & Spline & \scl & CNN-ROSS & Mixture \\
\midrule
TORF-1S & \xmark & -- & -- & -- & -- & -- \\
TORF  & \cmark & \cmark & \cmark & \cmark & \cmark & \xmark \\
\quad w/o \scl & \cmark & \cmark & \cmark & \xmark & \cmark & \xmark \\
\quad w/o CNN-ROSS & \cmark & \cmark & \cmark & \cmark & \xmark & \xmark \\
\quad w RealNVP & \cmark & \xmark & \xmark & \cmark & \cmark & \xmark \\
\quad w OddRealNVP & \cmark & \cmark & \xmark & \cmark & \cmark & \xmark \\
\quad w Spline & \cmark & \xmark & \cmark & \cmark & \cmark & \xmark \\
\quad w Mixture & \cmark & \cmark & \cmark & \cmark & \cmark & \cmark \\
\bottomrule
\end{tabular}}
\caption{Ablation variants relative to \model{}.}
\label{tab:ablation_variants}
\end{table}

\begin{table}[t]
\centering
\footnotesize
\begin{tabular}{l c c cc}
\toprule
& ETTm1 & \multicolumn{2}{c}{Exchange} & {Solar} \\
\cmidrule(lr){2-2} \cmidrule(lr){3-4} \cmidrule(lr){5-5}
Method & 336 & 336 & 30 & 24 \\
\midrule
TORF-1S & 0.5122 & 0.1390 & 0.1430 & 0.6078 \\
TORF & \textbf{0.2457} & \textbf{0.0382} & \textbf{0.0077} & \textbf{0.4558} \\
\quad w/o \scl & 0.2508 & 0.0399 & 0.0081 & 0.4670 \\
\quad w/o CNN-ROSS & \underline{\textit{0.2483}} & \underline{\textit{0.0383}} & \underline{\textit{0.0078}} & \textbf{0.4558} \\
\bottomrule
\end{tabular}
\caption{Architectural Analysis in CRPS.}
\label{tab:ablation}
\end{table}

\begin{table*}[t]
\centering
\footnotesize
\setlength{\tabcolsep}{1.5mm}
\begin{tabular}{cl | cccccc | cccccc}
\toprule
 & & \multicolumn{6}{c}{EnergyScore} & \multicolumn{6}{c}{CRPS} \\
\cmidrule(lr){3-8} \cmidrule(lr){9-14}
 & & \multicolumn{2}{c}{ETTm1} & \multicolumn{3}{c}{Exchange} & \multicolumn{1}{c}{Solar} & \multicolumn{2}{c}{ETTm1} & \multicolumn{3}{c}{Exchange} & \multicolumn{1}{c}{Solar} \\
\cmidrule(lr){3-4} \cmidrule(lr){5-7} \cmidrule(lr){8-8} \cmidrule(lr){9-10} \cmidrule(lr){11-13} \cmidrule(lr){14-14}
Type & Methods & 96 & 336 & 96 & 336 & 30 & 24 & 96 & 336 & 96 & 336 & 30 & 24 \\
\midrule
\multirow{2}{*}{\rotatebox{90}{}}
& MVE-1S & 0.6020 & 0.3832 & 0.0064 & \underline{\textit{0.0065}} & \underline{\textit{0.0047}} & 59.2827 & 0.2430 & 0.3301 & 0.0210 & 0.0399 & 0.0077 & 0.4774 \\
& $K^2$VAE & 0.4990 & 0.3079 & 0.0099 & 0.0082 & 0.0066 & 49.0566 & 0.2359 & 0.2698 & 0.0322 & 0.0519 & 0.0106 & \textbf{0.4129} \\
\midrule
\multirow{6}{*}{\rotatebox{90}{2-stage}}
& MVE-2S & 0.4697 & 0.2997 & 0.0063 & \textbf{0.0064} & 0.0048 & 48.6663 & 0.2117 & 0.2491 & 0.0204 & 0.0391 & \textbf{0.0074} & 0.4705 \\
& TORF & 0.4651 & \underline{\textit{0.2933}} & \textbf{0.0061} & \textbf{0.0064} & \underline{\textit{0.0047}} & 48.2329 & 0.2088 & 0.2457 & \textbf{0.0198} & \textbf{0.0382} & 0.0077 & \underline{\textit{0.4558}} \\
& \quad w RealNVP & 0.4674 & 0.2956 & 0.0063 & 0.0066 & \textbf{0.0046} & 48.4335 & 0.2101 & 0.2517 & 0.0210 & 0.0412 & 0.0077 & 0.4774 \\
& \quad w Spline & \underline{\textit{0.4587}} & 0.2938 & 0.0065 & \underline{\textit{0.0065}} & \underline{\textit{0.0047}} & 48.2329 & \underline{\textit{0.2051}} & 0.2452 & 0.0213 & 0.0394 & 0.0078 & \underline{\textit{0.4558}} \\
& \quad w OddRealNVP & 0.4656 & 0.2946 & \underline{\textit{0.0062}} & \textbf{0.0064} & \textbf{0.0046} & \underline{\textit{48.1577}} & 0.2086 & \underline{\textit{0.2450}} & \underline{\textit{0.0203}} & 0.0397 & \textbf{0.0074} & 0.4588 \\
& \quad w Mixture & \textbf{0.4559} & \textbf{0.2930} & \underline{\textit{0.0062}} & \textbf{0.0064} & 0.0048 & \textbf{48.0786} & \textbf{0.2038} & \textbf{0.2436} & 0.0208 & \underline{\textit{0.0390}} & \underline{\textit{0.0076}} & 0.4653 \\
\bottomrule
\end{tabular}
\caption{Energy Score (joint, multivariate density) and CRPS (univariate marginal density) results.}
\label{tab:energyscore_crps}
\end{table*}

\Cref{tab: long-term} and \Cref{tab: short-term} report the full probabilistic forecasting results.
In the long-term setting, \model{}\ wins on CRPS for 7 of 9 datasets and on NMAE for all 9 datasets. The largest gains reach $+35.4\%$ on CRPS (ETTh2-L) and $+25.1\%$ on NMAE (ETTm2-L). In the short-term setting, \model{}\ wins on CRPS for 6 of 8 datasets and on NMAE for 5 of 8 datasets. The largest gains reach $+30.3\%$ on CRPS (ETTm2-S) and $+28.1\%$ on NMAE (ETTm2-S). NMAE gains come from the stronger mean predictor, whereas CRPS gains come from the flexible odd flow conditioned on that mean.

The long-term wins are profound. This is likely because SimpleTM, \model{}'s Stage-1 model, was originally designed for long-term forecasting. We investigate the two settings where \model{} fails to achieve the best results. On Traffic-L, the CRPS gap is close to a tie with a 1.5\% difference from $K^2$VAE. This loss is plausibly within evaluation noise rather than a systematic weakness. The Electricity-L result reported for $K^2$VAE deserves closer scrutiny. $K^2$VAE reports a CRPS of $0.057\pm0.005$ at the 720-step horizon, but both, our rerun of the $K^2$VAE code and a concurrent work, report this value as approximately $0.087$ \citep{wang2026beyond}. This also contradicts $K^2$VAE's own per-horizon trend, where CRPS is lower at 720 than at 192 horizon. Corrected, \model{}'s Electricity-L CRPS of $0.081\pm0.000$ would surpass $K^2$VAE's.

The losses in the short-term setting concentrate on Solar-S and Electricity-S. On Solar-S, \model{}\ trails $K^2$VAE by a reported $24.3\%$ on CRPS and $21.5\%$ on NMAE. On Electricity-S the gaps are smaller, at $3.9\%$ and $6.1\%$. Rerunning $K^2$VAE on Solar-S ourselves gave a CRPS of $0.413\pm0.017$ and an NMAE of $0.531\pm0.018$, both only about $10\%$ worse than \model{}, not $24.3\%$. We attribute this remaining gap to the Stage-1 point forecaster rather than the flow. \model{}'s flow is provably mean-preserving, so its distributional accuracy is bounded by the frozen Stage-1 model, and on both datasets \model{}'s NMAE, inherited from SimpleTM, also lags $K^2$VAE's. Because the flow is agnostic to the choice of backbone, a stronger, dataset-appropriate Stage-1 model, such as a Koopman-style predictor, could close this gap while keeping \model{}'s flow for uncertainty quantification. With SimpleTM as its mean predictor, \model{}\ is state-of-the-art on every dataset except Solar-S and Electricity-S, across both horizons.

\subsection{Ablation Study}
\label{sec:ablation}

To isolate each design choice's contribution, we ablate multiple factors as summarized in \Cref{tab:ablation_variants}, and include two Gaussian MVE baselines on the same SimpleTM backbone so that any gap is attributable to the decomposition, and not backbone strength.

In \Cref{tab:ablation_variants}, \textit{\model{}-1S} removes the two-stage decomposition entirely. Among the two-stage variants, \scl{} is dropped in \textit{w/o \scl}; \textit{w/o CNN-\ross{}} is SplineNet's CNN backbone, replaced by an MLP; Odd restricts the coupling transform to be odd (\textit{\model{}, w OddRealNVP}) versus unrestricted (\textit{w Spline, w RealNVP}); Spline uses a rational-quadratic spline (\textit{\model{}, w Spline}) in place of an affine RealNVP-style map;  and \textit{w Mixture} extends \model{} to a mixture of odd splines for asymmetric distribution modelling with analytical-mean. We also compare with two Gaussian MVE baselines: \textit{MVE-1S}, trained jointly for $\mu$ and $\sigma$ via NLL, and \textit{MVE-2S}, a two-stage variant mirroring \model{} that predicts $\hat{\mathbf{Y}}$ from a pre-trained Stage~1 and $\sigma$ in Stage~2. Appendix~\ref{sec:ablation:oddRealNVP} and Appendix~\ref{sec:ablation:mixOddFlows} detail the construction of \model{} w OddRealNVP and \model{} w Mixture, respectively.

\subsubsection{Architectural Analysis}

\Cref{tab:ablation} shows that \model{}-1S underperforms the two-stage variants. Removing \scl{} in \model{} (w/o \scl{})
substantially weakens \model{} across all horizons and datasets, as \scl{} allows \model{} to learn a simple Gaussian distribution using only the \scl{} layer when needed. Adding an MLP in place of Conv1D (w/o CNN-\ross) does not hurt accuracy and achieves the second-best performance, but comes with a real computational cost (\Cref{tab:resource_trimmed}). It runs out of memory even at horizon 336 on ETTm1, making it impractical for longer horizons or more channels, while the CNN-based SplineNet uses a fraction of the memory.

\begin{table}[t]
\centering
\footnotesize
\begin{tabular}{l cc cc}
\toprule
 & \multicolumn{2}{c}{ETTm1-336} & \multicolumn{2}{c}{Exchange-336} \\
\cmidrule(lr){2-3} \cmidrule(lr){4-5}
Method & Param & Mem & Param & Mem \\
\midrule
$K^2$VAE & 1{,}593 & 0.018 & 1{,}611 & 0.019 \\
TORF & 22{,}183 & 0.512 & 722 & 0.106 \\
\quad w/o CNN-ROSS & OOM & OOM & 66{,}881 & 0.354 \\
\quad w Mixture & 65{,}006 & 1.168 & 2{,}116 & 0.213 \\
\bottomrule
\end{tabular}\caption{Trainable parameters (thousands) and peak memory (GB) across methods and datasets. OOM = out of memory.}
\label{tab:resource_trimmed}
\end{table}

\subsubsection{Probabilistic and Point-Forecast Analysis}

Extending the ablation to a shorter 96-horizon in \Cref{tab:energyscore_crps}, \model{}\ and its variants are state-of-the-art on 11 of 12 Energy-Score/CRPS columns, losing only
Solar-24 CRPS to $K^2$VAE, consistent with the known Stage-1 SimpleTM weakness on Solar.
The two-stage decomposition drives most of this gain. MVE-2S, a plain two-stage Gaussian
head, already beats $K^2$VAE on 11/12 columns on distributional metrics as well as on 3/4 on NMAE (\Cref{tab:nmae}). MVE-2S also beats the jointly-trained MVE-1S across the board on both distributional metrics, since joint NLL training fails to learn the best achievable mean. On Exchange specifically, MVE-1S alone already beats $K^2$VAE, plausibly since the residual is close to Gaussian and $K^2$VAE's extra complexity fails to exploit that simplicity. Beyond marginal accuracy, \model{}\ also beats $K^2$VAE on the multivariate metric. \model\ has better Energy Score in Solar-24 even though it loses on the univariate CRPS. Among design choices, spline outperforms the affine RealNVP transform, although the odd-restricted affine variant alone (w/OddRealNVP) still wins 3/12 columns. TORF-Mixture wins on more columns overall (6/12 vs.\ 4/12) but its edge is mostly insignificant, trades-off its analytical median and costs substantially more compute, so we recommend it only where the infrastructure allows, or if the residual is genuinely asymmetric.
In~\Cref{tab:nmae}, all odd-constrained variants (Odd-TORF) are provably mean-preserving and inherit an identical Stage-1 mean, whereas the non-odd w RealNVP and w Spline can change the mean, visibly degrading NMAE (up to 0.629-0.630 vs.\ 0.583 on Solar-24), exactly the failure mode the odd constraint rules out. \model{}'s Stage-1 model-agnosticism is further tested with iTransformer as a backbone in Appendix~\ref{sec:appendix:stage1Transfer}.

\begin{table}[t]
\centering
\footnotesize
\begin{tabular}{cl cccc}
\toprule
 & & ETTm1 & \multicolumn{2}{c}{Exchange} & Solar \\
\cmidrule(lr){3-3} \cmidrule(lr){4-5} \cmidrule(lr){6-6}
Type & Methods & 336 & 336 & 30 & 24 \\
\midrule
\multirow{2}{*}{\rotatebox{90}{}}
& MVE-1S & 0.349 & 0.056 & 0.011 & 0.599 \\
& $K^2$VAE & 0.339 & 0.055 & 0.011 & \textbf{0.531} \\
\midrule
\multirow{3}{*}{\rotatebox{90}{2-stage}}
& MVE-2S & \textbf{0.314} & \textbf{0.051} & \textbf{0.010} & \underline{\textit{0.583}} \\
& Odd-TORF & \textbf{0.314} & \textbf{0.051} & \textbf{0.010} & \underline{\textit{0.583}} \\
& \quad w RealNVP & \underline{\textit{0.315}} & \underline{\textit{0.052}} & \textbf{0.010} & 0.630 \\
& \quad w Spline & 0.319 & \underline{\textit{0.052}} & \textbf{0.010} & 0.629 \\
\bottomrule
\end{tabular}\caption{Point prediction results (NMAE, lower is better).}
\label{tab:nmae}
\end{table}

\section{Conclusion}
\label{sec:conclusion}

We presented \model{}, a two-stage probabilistic framework for TSF that pairs a frozen, model-agnostic point forecaster with a constrained odd flow whose predictive mean exactly matches the Stage~1 forecast, while a lightweight Conv1D SplineNet keeps it efficient enough to scale to long horizons. Across 17 real-world tasks, \model{}\ achieves state-of-the-art results, with gains of up to $+35.4\%$ CRPS and $+28.1\%$ NMAE over the strongest baseline.
\paragraph{Limitations \& Future Work.} \model{} assumes independent, symmetric residuals, which is sufficient here, but limiting when residuals are asymmetric or cross-dimensionally dependent. The Mixture extension relaxes symmetry at a higher parameter cost for only marginal gains, and joint time/channel dependence remains unaddressed. Future work includes component-dependent mixture weights, coupling- or MAF-based components, and mean-preserving diffusion or flow-matching variants.

\bibliography{aaai2027}

\clearpage

\appendix

\section{Additional Related Works}
\label{sec:appendix:related}

Deterministic forecasting has progressed from classical statistical
models ARIMA~\citep{BoxJenkins1994} to modern deep architectures like Transformers.
These include Informer~\citep{zhou2021informer},
Autoformer~\citep{wu2021autoformer}, FEDformer~\citep{zhou2022fedformer},
Yformer~\citep{madhusudhanan2022u}, and Pyraformer
\citep{liu2022pyraformer}. iTransformer~\citep{liu2024itransformer} and
PatchTST~\citep{nie2023patchtst} remain among the strongest performers.
iTransformer attends across variables. PatchTST attends over
channel-independent patches. Recently, SimpleTM~\citep{chen2025simpletm}
achieved state-of-the-art point accuracy. SimpleTM is a lightweight
mixer-style predictor. It combines wavelet preprocessing with a
geometric-product attention variant. Owing to its strong performance,
we adopt SimpleTM as the Stage~1 model in our experiments. \model{} is
agnostic to this choice.

\paragraph{Probabilistic Time Series Forecasting}
Probabilistic forecasting aims to recover the full predictive
distribution rather than a single point estimate. Autoregressive
models such as DeepAR~\citep{salinas2020deepar} learn Gaussian
parameters via an RNN. Diffusion models treat forecasting as an
iterative denoising process. These include TimeGrad
\citep{rasul2021autoregressive}, CSDI~\citep{tashiro2021csdi}, and
TSDiff~\citep{kollovieh2023predict}. Conditional normalizing flows
\citep{rasulmultivariate,kollovieh2025flow} instead model complex
joint densities without a fixed parametric form. $K^2$VAE
\citep{wu2025kvae} is a VAE based method developed for probabilistic
LTSF. It combines a Koopman-linearized latent dynamical system with a
KalmanNet-based correction. This correction fixes error accumulation
over long horizons. Despite this diversity, all of the above methods
either commit to a fixed distributional family or fall back on Monte
Carlo sampling to recover a predictive mean. \model{} instead
decouples mean estimation from density estimation entirely. It obtains
both an exact analytical mean and a flexible, sampling-free residual
density. \Cref{tab:model_comparison} compares popular density-estimation methods and highlights
the need for a flexible density-modeling approach with an analytical mean.

\paragraph{Pitfalls of NLL Training and Two-Stage Fixes}
\citet{Nix1994EstimatingTM} introduced the original Mean-Variance
Estimation (MVE) formulation. It trains a single network to jointly
predict a distribution's mean and variance under NLL. This joint
training is known to trigger a mean-variance conflict.
\citet{Seitzer2022PitfallsOfUncertainty} show that the NLL gradient is
scaled by the inverse predicted variance. As a result, high-uncertainty
regions contribute less to mean learning (\Cref{sec:prelim:pitfalls}).
Several works address this problem through explicit decoupling. The
most direct approach is \citet{StirnWSPSK23}, who formalize
\emph{faithfulness}. They freeze an independently trained mean network
before fitting a residual density. This guarantees mean accuracy at
least as good as the standalone predictor.
\model{} builds on this decoupling principle. We replace the frozen,
Gaussian-only residual head with a flexible, sampling-free odd
normalizing flow. This extends the faithfulness guarantee to arbitrary
symmetric residual densities in the high-dimensional, long-horizon
time series setting.

\paragraph{Odd Transformations for Analytically Tractable Moments}
Normalizing flows have been applied to flexible probabilistic
prediction in irregularly-sampled and tabular domains
\citep{yalavarthi2024marginalization,Madhusudhanan2025.TabResFlow}. They
remain comparatively underexplored for LTSF. One reason is that their
unconstrained transformations make the predictive mean accessible only
through Monte Carlo sampling. Our restriction on the flow architecture
is directly inspired by \citet{Kobayashi2023DesignOR}. In a
reinforcement learning context, they design a Restricted Normalizing
Flow (RNF) for stochastic robot control policies. They constrain the
invertible transformation to be an odd function of a symmetric base
variable. This makes the policy's mean action analytically computable.
\model{} repurposes this odd-function
restriction for the probabilistic LTSF task. By construction, the
residual flow's exact, sampling-free mean and median are zero. This
preserves the Stage~1 point prediction exactly through Stage~2 density
estimation.

\section{Theory}

\begin{lemma}[Analytical mean and median of a single odd flow]
	\label{lem:odd-flow}
	Let $\mathbf{u} \sim p_\mathbf{u}$ be a base random variable whose
	density is symmetric about the origin, i.e.\
	$p_\mathbf{u}(\mathbf{u}) = p_\mathbf{u}(-\mathbf{u})$ for all
	$\mathbf{u} \in \mathbb{R}^K$, and let
	$f : \mathbb{R}^K \to \mathbb{R}^K$ be an odd, invertible
	transformation, $f(-\mathbf{u}) = -f(\mathbf{u})$
	(Eq.~\eqref{eq:odd}). Let $\mathbf{v} = f(\mathbf{u})$ with induced
	density $p_\mathbf{v}$. Then, provided
	$\mathbb{E}[\,\lVert \mathbf{v} \rVert\,] < \infty$,
	\begin{equation*}
		\mathbb{E}[\mathbf{v}] = \mathbf{0},
		\; \text{and} \;\;
		\mathrm{median}(v_j) = 0 \ \text{ for every coordinate } j.
	\end{equation*}
\end{lemma}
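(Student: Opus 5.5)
The plan is to show first that $\mathbf{v}$ is itself symmetric in distribution, $\mathbf{v} \overset{d}{=} -\mathbf{v}$. Both the mean and the median claims then follow from elementary facts about symmetric random variables.

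\emph{Step 1 (symmetry of the pushforward).} Since $p_\mathbf{u}(\mathbf{u}) = p_\mathbf{u}(-\mathbf{u})$, we have $\mathbf{u} \overset{d}{=} -\mathbf{u}$. Apply the measurable map $f$ to both sides and use oddness:
\[
\mathbf{v} = f(\mathbf{u}) \overset{d}{=} f(-\mathbf{u}) = -f(\mathbf{u}) = -\mathbf{v}.
\]
Concretely, for any Borel set $A$, $P(\mathbf{v}\in A) = P(\mathbf{u}\in f^{-1}(A))$ and $P(-\mathbf{v}\in A) = P(\mathbf{u}\in -f^{-1}(A))$. This uses $f^{-1}(-A) = -f^{-1}(A)$, which holds for odd $f$. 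The two probabilities agree by symmetry of $p_\mathbf{u}$. Working with this distributional identity avoids any appeal to a Jacobian or to differentiability. If one wants the density version instead, the change of variables~\eqref{eq:cov}, together with the fact that the Jacobian of an odd map is even, $Df(-\mathbf{u}) = Df(\mathbf{u})$, gives $p_\mathbf{v}(\mathbf{v}) = p_\mathbf{v}(-\mathbf{v})$ directly.

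\emph{Step 2 (mean).} The hypothesis $\mathbb{E}\lVert\mathbf{v}\rVert<\infty$ makes $\mathbb{E}[\mathbf{v}]$ well defined and finite, and the same integrability holds for $-\mathbf{v}$. By Step 1, $\mathbb{E}[\mathbf{v}] = \mathbb{E}[-\mathbf{v}] = -\mathbb{E}[\mathbf{v}]$, so $\mathbb{E}[\mathbf{v}]=\mathbf{0}$. The integrability assumption is exactly what is needed here. Without it, symmetry alone would only give a formal $\infty-\infty$ cancellation, as with a Cauchy-distributed $\mathbf{v}$.

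\emph{Step 3 (median).} Projecting Step 1 onto coordinate $j$ gives $v_j \overset{d}{=} -v_j$. Hence $P(v_j\le 0) = P(v_j \ge 0)$. These two probabilities sum to $1 + P(v_j=0) \ge 1$, so each is at least $1/2$, and $0$ is a median of $v_j$.

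The main subtlety is the choice of median convention. If the median is required to be unique, an atom or a gap in the distribution near zero could make $0$ merely one point of a median interval. To close this, I would note that when $p_\mathbf{u}$ is a density and $f$ is a diffeomorphism (as for \ross\ and \scl), $v_j$ has a continuous distribution, so $P(v_j=0)=0$ and $P(v_j\le 0)=1/2$ exactly. I would then state the claim as ``$0$ is a median,'' which is unique whenever $v_j$ has positive density near the origin. Nothing in Step 3 requires the integrability assumption, so the median statement holds even without $\mathbb{E}\lVert\mathbf{v}\rVert<\infty$.
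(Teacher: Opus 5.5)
Your proof is correct and has the same three-part skeleton as the paper's: first symmetry of $\mathbf{v}$, then $\mathbb{E}[\mathbf{v}]=-\mathbb{E}[\mathbf{v}]$, then $\Pr(v_j\le 0)=\Pr(v_j\ge 0)$. The difference is in how you establish the key symmetry step.

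The paper works with densities. It notes that $f^{-1}$ is odd and that its Jacobian determinant is therefore even, then substitutes both facts into the change-of-variables formula \eqref{eq:cov} to obtain $p_\mathbf{v}(-\mathbf{v})=p_\mathbf{v}(\mathbf{v})$. This implicitly assumes $f$ is a diffeomorphism, which is harmless in the flow setting but is not stated in the lemma, where $f$ is only ``odd, invertible.''

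You instead push the law of $\mathbf{u}$ forward through $f$, using the preimage identity $f^{-1}(-A)=-f^{-1}(A)$. This needs only measurability and oddness. It requires no differentiability, no invertibility, and not even the existence of $p_\mathbf{v}$, since your mean and median steps use only $\mathbf{v}\overset{d}{=}-\mathbf{v}$. The paper's route in exchange stays within the density language used throughout the flow sections.

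Your median step is also slightly sharper. The paper asserts that $\Pr(v_j\le 0)$ and $\Pr(v_j\ge 0)$ sum to $1$, which silently uses $\Pr(v_j=0)=0$; that holds when $\mathbf{v}$ has a density. Your bound $1+\Pr(v_j=0)\ge 1$ gives that $0$ is a median in full generality, and it makes explicit when $0$ is the unique median.
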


\begin{proof}
	\textbf{Symmetry of $p_\mathbf{v}$.}
	Since $f$ is an odd bijection, $f^{-1}$ is also odd:
	$f^{-1}(-\mathbf{v}) = -f^{-1}(\mathbf{v})$.
	Its Jacobian satisfies
	$\big|\det \partial_\mathbf{v} f^{-1}(-\mathbf{v})\big|
	= \big|\det \partial_\mathbf{v} f^{-1}(\mathbf{v})\big|$,
	because differentiating the odd map $f^{-1}$ yields an even Jacobian.
	By the change-of-variables formula (Eq.~\eqref{eq:cov}),
	\begin{align*}
		p_\mathbf{v}(-\mathbf{v})
		&= p_\mathbf{u}\!\big(f^{-1}(-\mathbf{v})\big)\,
		\big|\det \partial_\mathbf{v} f^{-1}(-\mathbf{v})\big| \\
		&= p_\mathbf{u}\!\big(-f^{-1}(\mathbf{v})\big)\,
		\big|\det \partial_\mathbf{v} f^{-1}(\mathbf{v})\big| \\
		&= p_\mathbf{u}\!\big(f^{-1}(\mathbf{v})\big)\,
		\big|\det \partial_\mathbf{v} f^{-1}(\mathbf{v})\big|
		= p_\mathbf{v}(\mathbf{v}),
	\end{align*}
	using $p_\mathbf{u}(-\,\cdot) = p_\mathbf{u}(\cdot)$. Thus
	$p_\mathbf{v}$ is symmetric about the origin.

	\paragraph{Zero mean.} With $\mathbf{w} := -\mathbf{v}$,
	\begin{align*}
		\mathbb{E}[\mathbf{v}]
		&	= \int \mathbf{v}\, p_\mathbf{v}(\mathbf{v})\, d\mathbf{v}
		= \int (-\mathbf{w})\, p_\mathbf{v}(-\mathbf{w})\, d\mathbf{w} \\
		&	= -\int \mathbf{w}\, p_\mathbf{v}(\mathbf{w})\, d\mathbf{w}
		= -\mathbb{E}[\mathbf{v}],
	\end{align*}
	so $\mathbb{E}[\mathbf{v}] = \mathbf{0}$; absolute integrability
	ensures the integral is well defined.

	\paragraph{Zero median.} For any coordinate $j$, symmetry
	$p_\mathbf{v}(\mathbf{v}) = p_\mathbf{v}(-\mathbf{v})$ implies
	$\Pr(v_j \le 0) = \Pr(v_j \ge 0)$. Since these probabilities sum to
	$1$, each is $\tfrac{1}{2}$, so $0$ is the
	median of $v_j$.
\end{proof}

\begin{remark}
	In \textsc{TORF}, both LSL (Eq.~\eqref{eq:scl}, scaling) and ROSS
	(Eq.~\eqref{eq:ross-fwd}, spline) are odd, so their composition
	is odd; applied to the base $\mathcal{N}(\mathbf{0}, \mathbf{I})$
	(symmetric about $\mathbf{0}$) in the generative direction, this gives
	$\mathbb{E}_{p_{\theta_2}}[\boldsymbol{\varepsilon} \mid \mathbf{X}]
	= \mathbf{0}$ and coordinate-wise zero median, so
	$f_{\theta_1}(\mathbf{X})$ is preserved as both the analytical mean
	and median of $\hat{p}(\mathbf{Y} \mid \mathbf{X})$.
\end{remark}

\begin{lemma}[Analytical mean of \textsc{TORF}-Mixture]
	\label{lem:mixture-mean}
	Let each component $i = 1,\dots,M$ have conditional mean
	$\mathbb{E}[\varepsilon^{(i)}_{h,c} \mid \mathbf{F}] = \beta^{(i)}_{h,c}$,
	with mixture weights $\pi^{(i)}_{h,c} \ge 0$,
	$\sum_{i=1}^{M} \pi^{(i)}_{h,c} = 1$. If
	\begin{equation}
		\sum_{i=1}^{M} \pi^{(i)}_{h,c}\, \beta^{(i)}_{h,c} = 0 ,
		\label{eq:centering}
	\end{equation}
	then $\mathbb{E}[\varepsilon_{h,c} \mid \mathbf{F}] = 0$.
\end{lemma}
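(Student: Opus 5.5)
The plan is to show that the mixture's conditional mean is the $\pi$-weighted average of the component means, and then apply the centering condition \eqref{eq:centering}.

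First I fix the model. Conditionally on $\mathbf{F}$, the mixture residual $\varepsilon_{h,c}$ has density
\[
p(\varepsilon_{h,c}\mid\mathbf{F})=\sum_{i=1}^{M}\pi^{(i)}_{h,c}\,p^{(i)}(\varepsilon_{h,c}\mid\mathbf{F}),
\]
where $p^{(i)}$ is the law of the $i$-th component $\varepsilon^{(i)}_{h,c}$. Equivalently, I introduce a latent index $Z_{h,c}\in\{1,\dots,M\}$ with $\Pr(Z_{h,c}=i\mid\mathbf{F})=\pi^{(i)}_{h,c}$. Given $Z_{h,c}=i$, the residual $\varepsilon_{h,c}$ is distributed as $\varepsilon^{(i)}_{h,c}$.

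Next I compute the mean by linearity of the integral:
\[
\mathbb{E}[\varepsilon_{h,c}\mid\mathbf{F}]=\int\varepsilon\sum_i\pi^{(i)}_{h,c}p^{(i)}(\varepsilon\mid\mathbf{F})\,d\varepsilon=\sum_i\pi^{(i)}_{h,c}\int\varepsilon\,p^{(i)}(\varepsilon\mid\mathbf{F})\,d\varepsilon=\sum_i\pi^{(i)}_{h,c}\beta^{(i)}_{h,c}.
\]
The same identity also follows from the tower property, $\mathbb{E}[\varepsilon_{h,c}\mid\mathbf{F}]=\mathbb{E}\big[\mathbb{E}[\varepsilon_{h,c}\mid Z_{h,c},\mathbf{F}]\mid\mathbf{F}\big]$. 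By \eqref{eq:centering} the weighted sum $\sum_i\pi^{(i)}_{h,c}\beta^{(i)}_{h,c}$ is zero, which proves the claim.

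The one point needing care is exchanging the finite sum and the integral. This requires each component to be absolutely integrable, i.e.\ $\mathbb{E}|\varepsilon^{(i)}_{h,c}|<\infty$, which is implicit in the hypothesis that the means $\beta^{(i)}_{h,c}$ exist. Given that, the total $\int|\varepsilon|\,p(\varepsilon\mid\mathbf{F})\,d\varepsilon$ is at most the finite sum $\sum_i\pi^{(i)}_{h,c}\,\mathbb{E}|\varepsilon^{(i)}_{h,c}|$, so the mixture mean is well defined and splitting the integral is justified.

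I expect no real obstacle here. The only modelling remark is how the $\beta^{(i)}_{h,c}$ arise in practice: if each component is a shifted odd flow, $\beta^{(i)}_{h,c}+(\text{odd flow})$, then Lemma~\ref{lem:odd-flow} gives the odd part zero mean, so $\beta^{(i)}_{h,c}$ is exactly that component's mean. The lemma itself, however, only needs the component means as stated.
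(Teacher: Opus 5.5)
Your proposal is correct and follows the paper's proof. The paper likewise obtains $\mathbb{E}[\varepsilon_{h,c}\mid\mathbf{F}]=\sum_i \pi^{(i)}_{h,c}\beta^{(i)}_{h,c}$ from the law of total expectation, justifies each component mean via the shifted odd-flow structure and Lemma~\ref{lem:odd-flow}, and concludes from the centering constraint; your integrability argument for exchanging the finite sum and the integral is a small rigor step the paper leaves implicit.
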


\begin{proof}
	Each component has mean $\beta^{(i)}_{h,c}$ (odd transform of a
	symmetric base, shifted by $\beta^{(i)}_{h,c}$;
	Lemma~\ref{lem:odd-flow}). By the Law of total expectation rule,
	\begin{equation}
		\mathbb{E}[\varepsilon_{h,c} \mid \mathbf{F}]
		= \sum_{i=1}^{M} \pi^{(i)}_{h,c}\, \beta^{(i)}_{h,c}
		= 0 ,
	\end{equation}
\end{proof}

\begin{remark}[Mean, not median]
	Because the constraint fixes only the weighted average of the
	$\beta^{(i)}_{h,c}$ rather than each individually, the components are
	generally centered at distinct nonzero locations, so the mixture is
	asymmetric. Consequently the coordinate-wise median of
	$\varepsilon_{h,c}$ need not be $0$: unlike the single-flow case
	(Lemma~\ref{lem:odd-flow}), \textsc{TORF}-Mixture guarantees mean
	preservation but not median preservation.
\end{remark}
\section{Synthetic Data Experiment}
\label{appendix:syntheticDataExp}

To evaluate the robustness of our model under non-Gaussian, heteroscedastic conditions, we define a synthetic data generating
process where the observed variable $y_i$ is a function of a latent signal $\mu(x_i)$ and input-dependent heavy-tailed noise $\epsilon(x_i)$:

\begin{equation}
    y_i = \mu(x_i) + \epsilon(x_i), \quad x_i \in [0, 1]
\end{equation}

The deterministic component $\mu(x)$ is a high-frequency sinusoidal signal:
\begin{equation}
    \mu(x) = \sin(2\pi f x), \quad \text{with } f = 10.0
\end{equation}

The stochastic component $\epsilon(x)$ follows a Student's t-distribution characterized by input-dependent scale $\gamma(x)$ and degrees of freedom $\nu(x)$:
\begin{equation}
    \epsilon(x) \sim \text{Student-t}(\nu(x), 0, \gamma(x))
\end{equation}

To simulate realistic "noise traps" where the uncertainty structure changes across the input space, we define $\gamma(x)$ and $\nu(x)$ using Gaussian kernels centered at $x=0.5$:

\begin{itemize}
    \item \textbf{Scale Parameter:} $\gamma(x) = 0.05 + 0.3 \exp\left( -\frac{(x - 0.5)^2}{0.02} \right)$
    \item \textbf{Degrees of Freedom:} $\nu(x) = 3.0 + 7.0 \exp\left( -\frac{(x - 0.5)^2}{0.02} \right)$
\end{itemize}

Under this formulation, the noise exhibits heavier tails (lower $\nu$) away from the center, while the magnitude of the residuals ($\gamma$) peaks at the center. Importantly, since the Student's t-distribution is symmetric about zero, the conditional expectation remains $\mathbb{E}[y|x] = \mu(x)$, allowing us to strictly evaluate the decoupling performance of the \model{} framework.

\section{\model{} Architecture}
\label{sec:ablation:modelArchi}

\begin{figure*}[t]
	\centering
	\resizebox{\textwidth}{!}{\begin{tikzpicture}[
		font=\huge,
		>=Stealth,
		node distance=7mm and 9mm,
		data/.style={draw, thick, rounded corners=2pt, fill=white,
			minimum height=7mm, minimum width=11mm, align=center, inner sep=2pt},
		frozen/.style={draw, thick, rounded corners=2pt, fill=gray!18,
			minimum height=9mm, text width=35mm, align=center},
		trainable/.style={draw=triblue, very thick, rounded corners=2pt, fill=white,
			minimum height=9mm, text width=35mm, align=center},
		scalenetstyle/.style={draw, thick, rounded corners=2pt, fill=chanintra,
			minimum height=9mm, text width=35mm, align=center},
		splinenetstyle/.style={draw, thick, rounded corners=2pt, fill=timeintra,
			minimum height=13mm, text width=35mm, align=center},
		flowop/.style={draw, thick, rounded corners=2pt, fill=actcolor,
			minimum height=9mm, minimum width=14mm, align=center},
		gate/.style={draw, circle, fill=white, inner sep=1.5pt, font=\small},
		loopbox/.style={draw, thick, dashed, rounded corners=5pt, inner sep=9pt},
		insetbox/.style={draw, thin, dotted, rounded corners=3pt, inner sep=6pt, fill=gray!3},
		al/.style={font=\scriptsize, midway, above, align=center},
		bl/.style={font=\scriptsize, midway, below, align=center},
		shared/.style={->, thick, dashed, tridark},
		]

	\node[data] (X) at (0,0) {$\mathbf{X}$};

	\node[frozen, right=14mm of X] (enc) {Encoder\\$\mathrm{enc}(\mathbf{X};\theta_1)$};
	\draw[->, thick] (X) -- (enc);
    \node[anchor=south east, inner sep=2pt] at (enc.south east) {
        \includegraphics[width=15pt]{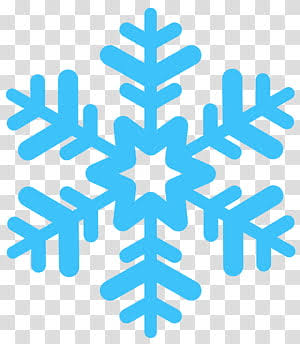}
    };

	\node[frozen, below right=9mm and 16mm of enc, text width=28mm, minimum height=9mm] (headfrozen) {Linear \\ Head};
	\node[anchor=south east, inner sep=2pt] at (headfrozen.south east) {
        \includegraphics[width=15pt]{fig/snow.png}
    };

	\node[trainable, above right=9mm and 16mm of enc, text width=28mm, minimum height=9mm] (headtrain) {Linear \\ Head$'$};
	\draw[->, thick] (enc.east) -- ++(4mm,0) |- (headfrozen.west);
	\draw[->, thick] (enc.east) -- ++(4mm,0) |- (headtrain.west);

	\node[data, below=12mm of headfrozen] (Yhat) {$\hat{\mathbf{Y}}$};
	\draw[->, thick] (headfrozen) -- (Yhat);

	\node[data, below=9mm of Yhat] (Y) {$\mathbf{Y}$};
	\node[gate, right=9mm of Yhat, yshift=-8mm] (minus) {$-$};
	\draw[->, thick] (Yhat) -- (minus);
	\draw[->, thick] (Y) -- (minus);

	\node[data, right=9mm of minus] (U0) {$\mathbf{U}^{(0)}$};
	\draw[->, thick] (minus) -- (U0);

	\node[data, right=13.5mm of headtrain] (F) {$\mathbf{F}$};
	\draw[->, thick] (headtrain) -- (F);

	\node[scalenetstyle, right=25mm of headfrozen] (lsl) {\scl\\ ($\cdot\,;\mathbf{s}$)};
	\draw[->, thick] (U0) |- ([yshift=-2mm]lsl.west);

	\node[data, right=9mm of lsl] (Uk) {$\mathbf{U}^{(k)}$};
	\draw[->, thick] (lsl) -- (Uk);

	\node[flowop, right=13mm of Uk] (rossk) {\ross};
	\draw[->, thick] (Uk) -- (rossk);
	\node[data, right=9mm of rossk] (Uhalf) {$\mathbf{U}^{(k+\frac{1}{2})}$};
	\draw[->, thick] (rossk) -- (Uhalf);

	\node[splinenetstyle, above=of rossk, right=62mm of F] (splinenetk) {
		\begin{tikzpicture}[scale=0.32, baseline]
			\foreach \i in {0,1,2}{
				\draw[thick, fill=white] (0,\i*0.55) rectangle (2.1,\i*0.55+0.4);
			}
			\draw[thick, tridark] (0.35,-0.15) rectangle (1.0,1.75);
		\end{tikzpicture}\\[1pt]
		SplineNet$^{(k)}$
	};
	\node[data, below = 13mm of splinenetk] (Phi) {$\boldsymbol{\Phi}^{(k)}$};
	\draw[->, thick] (splinenetk) -- (Phi);
	\draw[->, thick] (Phi) -- (rossk);

	\node[splinenetstyle, below = 8mm of F] (scalenet) {ScaleNet $(\cdot\,;\boldsymbol{\omega})$};

	\node[data, below = 8mm of scalenet] (s) {$\mathbf{s}$};

	\draw[->, thick] (F.south) -- ([yshift=2mm]scalenet.north);
	\draw[->, thick] (scalenet.south) -- ([yshift=2mm]s.north);
	\draw[->, thick] (s.south) |- ([yshift=2mm]lsl.west);

	\draw[->, thick] (F.east)  -- (splinenetk.west);

	\node[scalenetstyle, right=10mm of Uhalf] (scalenettrail) {\scl\\ ($\cdot\,;\mathbf{s}$)};
	\draw[->, thick] (Uhalf.east) -- (scalenettrail);

	\begin{scope}[on background layer]
		\node[loopbox, minimum height=90mm, fit=(scalenettrail)(Uk)(rossk)(Uhalf)(splinenetk)(Phi),
			label={[anchor=north east]north east:$\times K$}] (loop) {};
	\end{scope}

	\node[data, right=10mm of scalenettrail] (UK1) {$\mathbf{U}^{(K+1)}$};
	\draw[->, thick] (scalenettrail) -- (UK1);

	\node[right=10mm of UK1, align=center] (gauss) {
		\begin{tikzpicture}[baseline]
			\draw[->] (-0.55,0) -- (0.55,0);
			\draw[thick, tridark] plot[domain=-0.5:0.5, samples=20] (\x, {0.4*exp(-8*\x*\x)});
		\end{tikzpicture}\\[-1mm]
		{\scriptsize $\sim\mathcal{N}(\mathbf{0},\mathbf{I})$}
	};
	\draw[->, thick] (UK1) -- (gauss);

	\node[below=20mm of UK1, font=\large, align=center, text width=60mm]
		(loss) {$\mathcal{L}=-\log\mathcal{N}(\mathbf{U}^{(K+1)};\mathbf{0},\mathbf{I})-\mathrm{ldj}$};
	\draw[->, thick, dotted] (UK1.south) -- (loss.north);

	\draw[->, thick, red, dashed] ([yshift=-2mm]gauss.west) -- ([yshift=-2mm]UK1.east);
	\draw[->, thick, red, dashed] ([yshift=-2mm]UK1.west) -- ([yshift=-2mm]scalenettrail.east);
	\draw[->, thick, red, dashed] ([yshift=-2mm]scalenettrail.west) -- ([yshift=-2mm]Uhalf.east);
	\draw[->, thick, red, dashed] ([yshift=-2mm]Uhalf.west) -- ([yshift=-2mm]rossk.east);
	\draw[->, thick, red, dashed] ([yshift=-2mm]rossk.west) -- ([yshift=-2mm]Uk.east);
	\draw[->, thick, red, dashed] ([yshift=-2mm]Uk.west) -- ([yshift=-2mm]lsl.east);
	\draw[->, thick, red, dashed] ([yshift=-4mm]lsl.west) -| ([xshift=2mm]U0.north);

	\node[right=16mm of U0, align=center, yshift=-8mm] (multimodal0) {
		\begin{tikzpicture}[baseline, scale=1.6]
			\draw[gray] (-0.8,0) -- (0.8,0);
			\draw[->] (-0.8,0) -- (0.9,0);
			\draw[thick, red] plot[domain=-0.8:0.8, samples=60]
				(\x, {0.35*exp(-40*(\x+0.22)*(\x+0.22)) + 0.35*exp(-40*(\x-0.22)*(\x-0.22))});
		\end{tikzpicture}\\[-1mm]
		{\small $\sim p(\mathbf{0})$}
	};
	\draw[->, thick, red, dashed] (U0.south) |- (multimodal0.west);

	\node[right=14mm of multimodal0, align=center] (multimodalY) {
		\begin{tikzpicture}[baseline, scale=1.6]
			\draw[gray] (-0.8,0) -- (0.8,0);
			\draw[->] (-0.8,0) -- (0.9,0);
			\draw[thick, red] plot[domain=-0.8:0.8, samples=60]
				(\x, {0.35*exp(-40*(\x+0.22)*(\x+0.22)) + 0.35*exp(-40*(\x-0.22)*(\x-0.22))});
		\end{tikzpicture}\\[-1mm]
		{\small $\sim p(\hat{\mathbf{Y}})$}
	};
	\draw[->, thick, red, dashed] (multimodal0.east) -- (multimodalY.west);
	\draw[->, thick, red, dashed] (Yhat.east) -| (multimodalY.north);

	\node[anchor=north west, align=left] (legend) at ([yshift=-10mm]X.south west) {
		\begin{tikzpicture}[baseline, every node/.style={anchor=west, font=\large}]
			\node[anchor=south east, inner sep=1pt] at (4mm,-3mm) {\includegraphics[width=15pt]{fig/snow.png}};
			\node at (6mm,0mm) {Frozen: pretrained};

			\draw[thick, rounded corners=1pt, fill=gray!18] (0mm,-8mm) rectangle ++(4mm,3mm);
			\node at (6mm,-6.5mm) {Stage-1 mean model};

			\draw[triblue, very thick, rounded corners=1pt, fill=white] (0mm,-15mm) rectangle ++(4mm,3mm);
			\node at (6mm,-13.5mm) {Duplicated trainable head};

			\draw[thick, rounded corners=1pt, fill=chanintra] (0mm,-22mm) rectangle ++(4mm,3mm);
			\node at (6mm,-20.5mm) {ScaleNet: shared weights $\boldsymbol{\omega}$};

			\draw[->, thick] (0mm,-36mm) -- (4mm,-36mm);
			\node at (6mm,-36mm) {Forward / training pass};

			\draw[->, thick, red, dashed] (0mm,-42mm) -- (4mm,-42mm);
			\node at (6mm,-42mm) {Inverse / generative pass};
		\end{tikzpicture}
	};

	\end{tikzpicture}}
	\caption{\textsc{\model{}} architecture. Forward pass (black): the frozen encoder and head give $\hat{\mathbf{Y}}$; a trainable duplicate head gives context $\mathbf{F}$, which conditions the shared \textsc{ScaleNet} and per-block \textsc{SplineNet}$^{(k)}$. After $K$ many layers \model{} transforms the residual distribution into a standard normal distribution $\mathbf{U}^{(K+1)}\sim\mathcal{N}(\mathbf{0},\mathbf{I})$. Both \textsc{\scl} and \textsc{\ross} being odd preserves $\hat{\mathbf{Y}}$ as the analytical mean. Inverse pass (red, dashed): sampling back through the flow yields the multimodal residual $p(\mathbf{0})$, which shifts to $p(\hat{\mathbf{Y}})$ once $\hat{\mathbf{Y}}$ is added.}
	\label{fig:arch}
\end{figure*}

\Cref{fig:arch} shows the full \model{} computation graph, forward (training,
black) and inverse (sampling, red dashed) passes overlaid on the same nodes.

\paragraph{Training: forward pass.}
The frozen Stage-1 encoder $\mathrm{enc}(\mathbf{X};\theta_1)$
feeds two heads: the frozen Linear Head, which reproduces the Stage-1 point
forecast $\hat{\mathbf{Y}}$, and the duplicated, trainable Linear Head$'$,
which produces the context $\mathbf{F}$ that conditions every
downstream network. The residual $\mathbf{U}^{(0)} = \mathbf{Y} -
\hat{\mathbf{Y}}$ then enters the $K$-block loop (dashed box, $\times K$):
$\mathbf{F}$ is mapped once, by the shared ScaleNet, into the scale
$\mathbf{s}$ (\Cref{eq:scale}) reused by every \scl\ in the loop, and once per
block, by SplineNet$^{(k)}$, into the spline parameters $\boldsymbol{\Phi}^{(k)}$
(\Cref{eq:conv}) that condition that block's \ross. Each block applies
\scl$(\cdot;\mathbf{s})$ to obtain $\mathbf{U}^{(k+\frac12)}$ and then
\ross$(\cdot;\boldsymbol{\Phi}^{(k)})$ to obtain $\mathbf{U}^{(k+1)}$, and a
trailing \scl\ after block $K$ produces $\mathbf{U}^{(K+1)}$, with the
log-determinant $\mathrm{ldj}$ accumulated at every \scl\ and \ross\ along the
way (\Cref{alg:torf}). Training pushes $\mathbf{U}^{(K+1)}$ toward
$\mathcal{N}(\mathbf{0},\mathbf{I})$ through the loss $\mathcal{L} =
-\log\mathcal{N}(\mathbf{U}^{(K+1)};\mathbf{0},\mathbf{I}) - \mathrm{ldj}$
(\Cref{eq:loss}); since gradients never reach the frozen path
(snowflake nodes), $\hat{\mathbf{Y}}$ and the Stage-1 mean forecast it encodes
are entirely untouched by this Stage-2 objective.

\paragraph{Inference: inverse pass.}
At inference, the same flow runs in reverse (red, dashed arrows) to produce
the predictive distribution $\hat p(\mathbf{Y}\mid\mathbf{X})$
 by sampling rather than by density evaluation, so no query
point $\mathbf{Y}$ is needed. A draw $\mathbf{u}\sim\mathcal{N}(\mathbf{0},\mathbf{I})$
is pushed back through the trailing \scl, then through each block's \ross\ and
\scl\ applied in their inverse direction, in reverse block order, re-using
the same $\mathbf{s}$ and $\boldsymbol{\Phi}^{(k)}$ already computed from
$\mathbf{F}$, until it reaches $\mathbf{U}^{(0)}$. Because every \scl\ and
\ross\ is odd and $\mathcal{N}(\mathbf{0},\mathbf{I})$ is
symmetric, $\mathbf{U}^{(0)}$ is a draw from a symmetric, possibly
multimodal residual density, centered exactly at the origin,
rather than from any fixed parametric family. Adding
back the frozen Stage-1 forecast, $\hat{\mathbf{Y}} + \mathbf{U}^{(0)}$,
translates this density into the predictive distribution
$\hat p(\mathbf{Y}\mid\mathbf{X})$, centered at $\hat{\mathbf{Y}}$ by
construction: samples from \model{}
still average back to the exact Stage-1 mean (\Cref{lem:odd-flow}).

\section{Data Description}
\label{sec:datadesc}

\model{} is evaluated on the dataset collection\footnote{Datasets available at \url{https://drive.google.com/drive/folders/}\\\url{1l0c4H57xYKKQQ5Tm7kd4C8M2nCepky-y}} and split, assembled by
$K^2$VAE~\citep{wu2025kvae}, which in turn builds on ProbTS~\citep{Zhang2023ProbTSBP}, a general-purpose benchmark for probabilistic
forecasters spanning short and long horizons.
The short-horizon split contains eight series ETTh1-S, ETTh2-S, ETTm1-S,
ETTm2-S, Solar-S, Electricity-S, Traffic-S, and
Exchange-S for which the context window and the forecast horizon coincide:
$L=H=24$ for every series except Exchange-S,
where $L=H=30$. The long-horizon split spans nine series, the four ETT variants, Electricity-L, Traffic-L, Exchange-L,
Weather-L, and ILI-L evaluated at $H\in\{96,192,336,720\}$, with ILI-L instead swept over $H\in\{24,36,48,60\}$; the
context window is held fixed within each split at $L=96$ ($L=36$ for ILI-L) so that every
model sees the same amount of history.

A dataset and its counterpart from the other split can share a name yet be different data: Electricity-S and Electricity-L,
for instance, are separate extracts with their own channel counts and lengths rather than the same recording sliced at two
horizons. \Cref{tab:dataset_stats} lists, for every series, the number of channels, the value range, the sampling
frequency, the total number of recorded timesteps, and a brief description.

\section{Evaluation Metrics}
\label{sec:metrics}

Following the ProbTS protocol~\citep{Zhang2023ProbTSBP}, we report point-forecast
accuracy with NMAE and marginal probabilistic accuracy with CRPS. We
additionally report the Energy Score, which scores the predictive distribution
jointly over the full $H{\times}C$ target grid, to check whether \model{}'s
multivariate structure is well calibrated, unlike NMAE and CRPS, both of which
score one $(h,c)$ entry at a time.

\begin{itemize}
    \item \textbf{Normalized Mean Absolute Error (NMAE)} measures the accuracy
    of the Stage-1 point forecast $\hat{\mathbf{Y}} = f_{\theta_1}(\mathbf{X})$
    against the ground truth $\mathbf{Y}$, summed over the $H$ horizon steps
    and $C$ channels and normalized to be comparable across datasets of
    different scale:
    \begin{equation}
        \mathrm{NMAE} = \frac{\sum_{c=1}^{C}\sum_{h=1}^{H} \big|Y_{h,c} - \hat{Y}_{h,c}\big|}
        {\sum_{c=1}^{C}\sum_{h=1}^{H} \big|Y_{h,c}\big|}.
        \label{eq:nmae}
    \end{equation}

    \item \textbf{Continuous Ranked Probability Score (CRPS)} measures how well
    a single scalar predictive distribution matches a single scalar
    observation $y$, by comparing its predictive CDF $F$ against $y$:
    \begin{equation}
        \mathrm{CRPS}(F, y) = \int_{-\infty}^{\infty} \big(F(z) - \mathbb{I}\{y \le z\}\big)^2\, dz.
        \label{eq:crps}
    \end{equation}
    $F$ here is this scalar predictive CDF, unrelated to the context
    embedding $\mathbf{F}$ of \Cref{eq:ctx}. We evaluate \eqref{eq:crps} at
    every entry $Y_{h,c}$ of the target, using $F$'s own marginal of
    $\hat p(\mathbf{Y}\mid\mathbf{X})$ at that entry, and
    report the mean over $(h,c)$.

    \item \textbf{Energy Score (ES)} extends CRPS to the full joint predictive
    distribution~\citep{Szekely2013energy}: rather than scoring each $(h,c)$
    entry against its own marginal, it flattens the entire $H{\times}C$ target
    grid into one $HC$-dimensional vector $\mathbf{Y}$ and scores it jointly
    against samples drawn from $\hat p(\mathbf{Y}\mid\mathbf{X})$, rewarding a
    predictive distribution that captures dependence across time steps and
    channels together, not just per-entry accuracy:
    \begin{equation}
        \mathrm{ES}\big(\hat p(\mathbf{Y}\mid\mathbf{X}), \mathbf{Y}\big)
        = \mathbb{E}\big\|\mathbf{Y}^{(1)} - \mathbf{Y}\big\|_2
        - \tfrac{1}{2}\, \mathbb{E}\big\|\mathbf{Y}^{(1)} - \mathbf{Y}^{(2)}\big\|_2,
        \label{eq:energyscore}
    \end{equation}
    where $\|\cdot\|_2$ is the Euclidean norm over all $HC$ flattened entries
    and $\mathbf{Y}^{(1)}, \mathbf{Y}^{(2)} \overset{\text{i.i.d.}}{\sim}
    \hat p(\mathbf{Y}\mid\mathbf{X})$ are two independent samples from
    \model{}'s predictive distribution.
\end{itemize}

Both CRPS and the Energy Score are proper scoring rules: each is minimized
exactly when the predictive distribution matches the true data-generating
distribution, so a lower score reflects better calibration, not merely a
narrower interval. Neither has a closed form for \model{}'s predictive
distribution, so we estimate both from samples. Drawing $M$ i.i.d.\ samples
$\mathbf{Y}^{(1)},\dots,\mathbf{Y}^{(M)} \sim \hat p(\mathbf{Y}\mid\mathbf{X})$
per test window via the inverse flow, we
evaluate \eqref{eq:crps} by replacing $F$ with the empirical CDF
$\hat F(z) = \tfrac{1}{M}\sum_{i=1}^{M} \mathbb{I}\{Y^{(i)} \le z\}$ at each
$(h,c)$ entry, and \eqref{eq:energyscore} with the standard empirical
estimator over the same $M$ samples,
\begin{equation}
    \widehat{\mathrm{ES}} = \frac{1}{M}\sum_{i=1}^{M} \big\|\mathbf{Y}^{(i)} - \mathbf{Y}\big\|_2
    - \frac{1}{2M^2}\sum_{i=1}^{M}\sum_{j=1}^{M} \big\|\mathbf{Y}^{(i)} - \mathbf{Y}^{(j)}\big\|_2,
    \label{eq:energyscore-empirical}
\end{equation}
computed per test window and averaged over the evaluation set. We use
$M=100$ samples throughout. Because the raw \eqref{eq:energyscore-empirical}
grows with the dimensionality $HC$ of the flattened target, the ES values we
report (\Cref{tab:energyscore_crps}) additionally divide this per-window
average by the horizon length $H$. This does not fully normalize the score,
since $\widehat{\mathrm{ES}}$ scales roughly with $\sqrt{HC}$ rather than
$H$, but it removes the dominant horizon-driven growth and keeps values
across the different horizons compared in that table on a roughly similar
scale.

\section{Conditional Scaling Achieves an Exact Gaussian Transformation}
\label{sec:eqscalinggauss}

This appendix proves the $K{=}0$ special case of the $K$-block construction
(\Cref{sec:method:scl}, Algorithm~\ref{alg:torf}): with no \ross\ blocks, the
only transform applied is the trailing \scl, mapping the residual
$\mathbf{U}^{(0)} = \boldsymbol{\varepsilon}$ directly to $\mathbf{U}^{(1)}$.
We show this reduces \model{} exactly to a heteroscedastic Gaussian model,
recovering the MVE-2S baseline.

Recall from \Cref{eq:scl} that \scl\ maps $u_{h,c} \mapsto v_{h,c} = u_{h,c}\,s_{h,c}$
(forward) and $v_{h,c} \mapsto u_{h,c} = v_{h,c}/s_{h,c}$ (inverse), with the
scale $s_{h,c} > 0$ predicted by ScaleNet from the context $\mathbf{F}_{:,c}$
(\Cref{eq:scale}). With $K{=}0$, $u_{h,c} = \varepsilon_{h,c}$ is the raw
residual and $v_{h,c} = U^{(1)}_{h,c}$ is assumed standard normal, so the
induced distribution of the residual is $\varepsilon_{h,c} = v_{h,c}/s_{h,c}$
with $v_{h,c}\sim\mathcal{N}(0,1)$, i.e.\
$\varepsilon_{h,c}\sim\mathcal{N}(0,\sigma_{h,c}^2)$ with
$\sigma_{h,c} := 1/s_{h,c}$ the implied per-entry standard deviation. The
forward log-determinant is exactly \Cref{eq:scl-op}, restated here for
reference,
\begin{equation}
    \mathrm{ldj}_{\text{scale}} = \sum_{h,c} \log s_{h,c},
    \label{eq:scale-ldj}
\end{equation}
since the map is diagonal and factorizes over all $(h,c) \in [H]\times[C]$ entries.

\paragraph{Equivalence to a direct Gaussian NLL.}
If the flow terminates here, i.e.\ $\mathbf{U}^{(1)} \sim \mathcal{N}(\mathbf{0},\mathbf{I})$
exactly, the resulting model is mathematically identical to directly
parameterizing $\varepsilon_{h,c} \mid \mathbf{F}_{:,c} \sim \mathcal{N}(0,\sigma_{h,c}^2)$
and minimizing its negative log-likelihood. Writing out the per-entry
change-of-variables loss,
\begin{align}
    \mathcal{L}_{\text{scale}}
    &= -\log \mathcal{N}(v_{h,c};0,1) - \mathrm{ldj}_{\text{scale}} \nonumber \\
    &= \frac{1}{2}\log(2\pi) + \frac{1}{2}\big(u_{h,c}\, s_{h,c}\big)^{2}
       - \log s_{h,c} \nonumber \\
    &= \frac{1}{2}\log\big(2\pi\,\sigma_{h,c}^2\big) + \frac{\varepsilon_{h,c}^2}{2\,\sigma_{h,c}^2},
    \label{eq:scale-loss-equiv}
\end{align}
which is exactly the negative log-density of $\mathcal{N}(0,\sigma_{h,c}^2)$
evaluated at $\varepsilon_{h,c}$, which is the same objective realized by a direct
second stage heteroscedastic Gaussian model (MVE-2S) trained with \texttt{GaussianNLLLoss} and a
zero mean. This equivalence holds regardless of the specific link function
used to map ScaleNet's raw output to $s_{h,c}$ (equivalently $\sigma_{h,c}$),
as long as the same function is reproducible by both parameterizations; in
our implementation we use the bounded, numerically stable form of
\Cref{eq:scale}. \textbf{The practical implication is that conditional
affine scaling, by itself, can only ever transform the residual into a
distribution that is exactly Gaussian} (given a perfectly trained
ScaleNet): it can stretch or compress the density along each axis as a
function of $\mathbf{F}_{:,c}$, but it has no mechanism to alter the
\emph{shape} of the distribution beyond second-order (variance) effects.
Any genuinely non-Gaussian structure in the residual like heavy tails,
multi-modality, or sharply peaked unimodal mass near zero is invisible
to a purely affine transform and is consequently absorbed into the
likelihood as irreducible model misspecification.

\begin{table*}[t]
    \centering
    \resizebox{\textwidth}{!}{\footnotesize
    \setlength{\tabcolsep}{1mm}
    \begin{tabular}{c|c|cccccccccc}
    \toprule
        Model & Metric & ETTm1-L & ETTm2-L & ETTh1-L & ETTh2-L & Electricity-L & Traffic-L & Weather-L & Exchange-L & ILI-L \\ \midrule
        \multirow{2}{*}{FITS} & CRPS &$0.305\scriptstyle\pm0.024$
        &$0.449\scriptstyle\pm0.034$	&$0.348\scriptstyle\pm0.025$	&$0.314\scriptstyle\pm0.022$	&$0.115\scriptstyle\pm0.024$	&$0.374\scriptstyle\pm0.004$	&$0.267\scriptstyle\pm0.003$	& ${{0.074}\scriptstyle\pm0.011}$
        &$0.211 \scriptstyle{\pm 0.011}$ \\
        ~ & NMAE &$0.406\scriptstyle\pm0.072$	&$0.540\scriptstyle\pm0.052$	&$0.468\scriptstyle\pm0.012$	&$0.401\scriptstyle\pm0.022$	&$0.149\scriptstyle\pm0.012$	&$0.453\scriptstyle\pm0.022$	&$0.317\scriptstyle\pm0.021$
        &${{0.097}\scriptstyle\pm0.011}$&$0.245\scriptstyle\pm0.017$  \\ \hline
        \multirow{2}{*}{PatchTST} & CRPS &$0.304\scriptstyle\pm0.029$	&${{0.229}\scriptstyle\pm0.036}$	&$0.323\scriptstyle\pm0.020$	&$0.304\scriptstyle\pm0.018$	&$0.127\scriptstyle\pm0.015$	&$0.214\scriptstyle\pm0.001$	&$0.142\scriptstyle\pm0.005$	&$0.097\scriptstyle\pm0.007$
        &$0.233 \scriptstyle{\pm 0.019}$ \\
        ~ & NMAE & $0.382\scriptstyle\pm0.066$	&${{0.288}\scriptstyle\pm0.034}$	&$0.428\scriptstyle\pm0.024$	&$0.371\scriptstyle\pm0.021$	&$0.164\scriptstyle\pm0.024$	&${{0.253}\scriptstyle\pm0.012}$	&$0.152\scriptstyle\pm0.029$	&$0.126\scriptstyle\pm0.001$	&$0.287\scriptstyle\pm0.023$ \\ \hline
        \multirow{2}{*}{iTrans.} & CRPS & $0.455\scriptstyle\pm0.021$	&$0.311\scriptstyle\pm0.024$	&$0.350\scriptstyle\pm0.019$	&$0.542\scriptstyle\pm0.015$	&$0.109\scriptstyle\pm0.044$	&$0.284\scriptstyle\pm0.004$	&$0.133\scriptstyle\pm0.004$	&$0.087\scriptstyle\pm0.023$
        &$0.222 \scriptstyle{\pm 0.020}$ \\
        ~ & NMAE &$0.490\scriptstyle\pm0.038$	&$0.385\scriptstyle\pm0.042$	&$0.449\scriptstyle\pm0.022$	&$0.667\scriptstyle\pm0.012$	&$0.140\scriptstyle\pm0.009$	&$0.361\scriptstyle\pm0.030$	&$0.147\scriptstyle\pm0.019$	&$0.113\scriptstyle\pm0.015$	&$0.278\scriptstyle\pm0.017$ \\ \hline
        \multirow{2}{*}{Koopa} & CRPS & ${{0.295}\scriptstyle\pm0.027}$	&$0.233\scriptstyle\pm0.025$	&${{0.318}\scriptstyle\pm0.009}$	&${{0.293}\scriptstyle\pm0.026}$	&$0.113\scriptstyle\pm0.018$	&$0.358\scriptstyle\pm0.022$	&$0.140\scriptstyle\pm0.007$	&$0.091\scriptstyle\pm0.012$
        &$0.228 \scriptstyle{\pm 0.022}$ \\
        ~ & NMAE &${{0.377}\scriptstyle\pm0.037}$	&$0.290\scriptstyle\pm0.033$	&${{0.412}\scriptstyle\pm0.008}$	&${{0.286}\scriptstyle\pm0.042}$	&$0.149\scriptstyle\pm0.025$	&$0.432\scriptstyle\pm0.032$	&$0.162\scriptstyle\pm0.009$	&$0.116\scriptstyle\pm0.022$	&$0.288\scriptstyle\pm0.031$ \\ \hline
        \multirow{2}{*}{TSDiff} & CRPS &$0.478\scriptstyle\pm0.027$	&$0.344\scriptstyle\pm0.046$	&$0.516\scriptstyle\pm0.027$	&$0.406\scriptstyle\pm0.056$	&$0.478\scriptstyle\pm0.005$	&$0.391\scriptstyle\pm0.002$	&$0.152\scriptstyle\pm0.003$	&$0.082\scriptstyle\pm0.010$
        &$0.263 \scriptstyle{\pm 0.022}$  \\
        ~ & NMAE & $0.622\scriptstyle\pm0.045$	&$0.416\scriptstyle\pm0.065$	&$0.657\scriptstyle\pm0.017$	&$0.482\scriptstyle\pm0.022$	&$0.622\scriptstyle\pm0.142$	&$0.478\scriptstyle\pm0.006$	&$0.141\scriptstyle\pm0.026$	&$0.142\scriptstyle\pm0.009$	&$0.272\scriptstyle\pm0.020$ \\ \hline
        \multirow{2}{*}{GRU NVP} & CRPS &$0.546\scriptstyle\pm0.036$	&$0.561\scriptstyle\pm0.273$	&$0.502\scriptstyle\pm0.039$	&$0.539\scriptstyle\pm0.090$	&$0.114\scriptstyle\pm0.013$	&${{0.211}\scriptstyle\pm0.004}$	&$0.110\scriptstyle\pm0.004$	&$0.079\scriptstyle\pm0.009$	&$0.307\scriptstyle\pm0.005$  \\
        ~ & NMAE & $0.707\scriptstyle\pm0.050$	&$0.749\scriptstyle\pm0.385$	&$0.643\scriptstyle\pm0.046$	&$0.688\scriptstyle\pm0.161$	&$0.144\scriptstyle\pm0.017$	&$0.264\scriptstyle\pm0.006$	&$0.135\scriptstyle\pm0.008$	&$0.103\scriptstyle\pm0.009$	&$0.333\scriptstyle\pm0.005$ \\ \hline
        \multirow{2}{*}{GRU MAF} & CRPS &$0.536\scriptstyle\pm0.033$	&$0.272\scriptstyle\pm0.029$	&$0.393\scriptstyle\pm0.043$	&$0.990\scriptstyle\pm0.023$	&${{0.106}\scriptstyle\pm0.007}$	& \textemdash	&$0.122\scriptstyle\pm0.006$	&$0.160\scriptstyle\pm0.019$
        &$0.172 \scriptstyle{\pm 0.034}$ \\
        ~ & NMAE & $0.711\scriptstyle\pm0.081$	&$0.355\scriptstyle\pm0.048$	&$0.496\scriptstyle\pm0.019$	&$1.092\scriptstyle\pm0.019$	&$0.136\scriptstyle\pm0.098$
        &\textemdash	&$0.149\scriptstyle\pm0.034$	&$0.182\scriptstyle\pm0.010$	&$0.216\scriptstyle\pm0.014$  \\ \hline
        \multirow{2}{*}{Trans MAF} & CRPS & $0.688\scriptstyle\pm0.043$	&$0.355\scriptstyle\pm0.043$	&$0.363\scriptstyle\pm0.053$	&$0.327\scriptstyle\pm0.033$	&\textemdash	&\textemdash	&$0.113\scriptstyle\pm0.004$	&$0.148\scriptstyle\pm0.017$	&${{0.155} \scriptstyle{\pm 0.018}}$ \\
        ~ & NMAE &$0.822\scriptstyle\pm0.034$	&$0.475\scriptstyle\pm0.029$	&$0.455\scriptstyle\pm0.025$	&$0.412\scriptstyle\pm0.020$	&\textemdash	&\textemdash	&$0.148\scriptstyle\pm0.040$	&$0.191\scriptstyle\pm0.006$	&${{0.183}\scriptstyle\pm0.019}$ \\ \hline
        \multirow{2}{*}{TimeGrad} & CRPS & $0.621\scriptstyle\pm0.037$	&$0.470\scriptstyle\pm0.054$	&$0.523\scriptstyle\pm0.027$	&$0.445\scriptstyle\pm0.016$	&$0.108\scriptstyle\pm0.003$	&$0.220\scriptstyle\pm0.002$	&$0.113\scriptstyle\pm0.011$	&$0.099\scriptstyle\pm0.015$	&$0.295\scriptstyle\pm0.083$ \\
        ~ & NMAE & $0.793\scriptstyle\pm0.034$	&$0.561\scriptstyle\pm0.044$	&$0.672\scriptstyle\pm0.015$	&$0.550\scriptstyle\pm0.018$	&${{0.134}\scriptstyle\pm0.004}$	&$0.263\scriptstyle\pm0.001$	&$0.136\scriptstyle\pm0.020$	&$0.113\scriptstyle\pm0.016$	&$0.325\scriptstyle\pm0.068$  \\ \hline
        \multirow{2}{*}{CSDI} & CRPS & $0.448\scriptstyle\pm0.038$	&$0.239\scriptstyle\pm0.035$	&$0.528\scriptstyle\pm0.012$	&$0.302\scriptstyle\pm0.040$	&\textemdash	&\textemdash	&${{0.087}\scriptstyle\pm0.003}$	&$0.143\scriptstyle\pm0.020$	&$0.283\scriptstyle\pm0.012$  \\
        ~ & NMAE & $0.578\scriptstyle\pm0.051$	&$0.306\scriptstyle\pm0.040$	&$0.657\scriptstyle\pm0.014$	&$0.382\scriptstyle\pm0.030$	&\textemdash	&\textemdash	&$ {{0.102}\scriptstyle\pm0.005}$	&$0.173\scriptstyle\pm0.020$	&$0.299\scriptstyle\pm0.013$ \\ \hline
        \multirow{2}{*}{$K^2$VAE} & CRPS & $\underline{\textit{0.294}}\scriptstyle\pm0.026$	&$\underline{\textit{0.221}}\scriptstyle\pm0.023$	&$\underline{\textit{0.314}}\scriptstyle\pm0.011$	&$\underline{\textit{0.280}}\scriptstyle\pm0.014$	&$\textbf{0.057}\scriptstyle\pm0.005$	&$\textbf{0.200}\scriptstyle\pm0.001$	&$\underline{\textit{0.084}}\scriptstyle\pm0.003$	&$\underline{\textit{0.069}}\scriptstyle\pm0.005$	&$\underline{\textit{0.142}} \scriptstyle{\pm 0.008}$ \\
        ~ & NMAE & $\underline{\textit{0.373}}\scriptstyle\pm0.032$	&$\underline{\textit{0.275}}\scriptstyle\pm0.035$	&$\underline{\textit{0.396}}\scriptstyle\pm0.012$	&$\underline{\textit{0.278}}\scriptstyle\pm0.020$	&$\underline{\textit{0.117}}\scriptstyle\pm0.019$	&$\underline{\textit{0.248}}\scriptstyle\pm0.010$	&$\underline{\textit{0.099}}\scriptstyle\pm0.009$	&$\underline{\textit{0.084}}\scriptstyle\pm0.017$	&$\underline{\textit{0.167}}\scriptstyle\pm0.007$\\ \hline
        \multirow{2}{*}{\model{}} & CRPS & $\textbf{0.279}\scriptstyle\pm0.001$	&$\textbf{0.166}\scriptstyle\pm0.000$	&$\textbf{0.286}\scriptstyle\pm0.001$	&$\textbf{0.181}\scriptstyle\pm0.000$	&$\underline{\textit{0.081}}\scriptstyle\pm0.000$	&$\underline{\textit{0.203}}\scriptstyle\pm0.001$	&$\textbf{0.079}\scriptstyle\pm0.001$	&$\textbf{0.059}\scriptstyle\pm0.000$	&$\textbf{0.131} \scriptstyle{\pm 0.000}$ \\
        ~ & NMAE & $\textbf{0.354}\scriptstyle\pm0.003$	&$\textbf{0.206}\scriptstyle\pm0.001$	&$\textbf{0.367}\scriptstyle\pm0.001$	&$\textbf{0.229}\scriptstyle\pm0.020$	&$\textbf{0.108}\scriptstyle\pm0.003$	&$\textbf{0.242}\scriptstyle\pm0.004$	&$\textbf{0.096}\scriptstyle\pm0.001$	&$\textbf{0.080}\scriptstyle\pm0.000$	&$\textbf{0.158}\scriptstyle\pm0.007$\\ \hline
        \multirow{2}{*}{Imp (\%)} & CRPS & $+5.1$ & $+24.9$ & $+8.9$ & $+35.4$ & $-42.1$ & $-1.5$ & $+6.0$ & $+14.5$ & $+7.7$ \\
        ~ & NMAE & $+5.1$ & $+25.1$ & $+7.3$ & $+17.6$ & $+7.7$ & $+2.4$ & $+3.0$ & $+4.8$ & $+5.4$ \\
        \bottomrule
        \multicolumn{8}{l}{Due to the excessive time and memory consumption, some results are unavailable and denoted as -.} \\
    \end{tabular}}
    \caption{Comparison on long-term probabilistic forecasting (forecasting horizon L=720) scenarios across nine real-world datasets. Lower CRPS or NMAE values indicate better predictions. The means and standard errors are based on 5 independent runs of retraining and evaluation. \textbf{Bold}: the best, \underline{\textit{underlined italic}}: the 2nd best. The full results of all four horizons {96, 192, 336, 720} are listed in \Cref{sec:ablation:allhorizonresults}.}
    \label{tab: long-term:appendix}

\end{table*}

\begin{table*}[t]
    \centering
    \footnotesize
    \setlength{\tabcolsep}{1.5mm}
    \begin{tabular}{c|c|cccccccc}
    \toprule
        Model & Metric & Exchange-S & Solar-S & Electricity-S & Traffic-S & ETTh1-S & ETTh2-S & ETTm1-S & ETTm2-S  \\ \midrule
        \multirow{2}{*}{FITS} & CRPS & $0.012\scriptstyle{\pm 0.002}$ & $0.516\scriptstyle{\pm 0.011} $& $0.068\scriptstyle{\pm 0.003}$ & $0.298\scriptstyle{\pm 0.022}$ & $0.320\scriptstyle{\pm 0.017}$ & $0.212\scriptstyle{\pm 0.012}$ & $0.193\scriptstyle{\pm 0.005}$ & $0.199\scriptstyle{\pm 0.003}$  \\
        ~ & NMAE & $0.017\scriptstyle{\pm 0.003}$ & $0.701\scriptstyle{\pm 0.014}$ & $0.092\scriptstyle{\pm 0.004}$ & $0.392\scriptstyle{\pm 0.028}$ & $0.423\scriptstyle{\pm 0.033}$ & $0.278\scriptstyle{\pm 0.009}$ & $0.249\scriptstyle{\pm 0.007}$ & $0.260\scriptstyle{\pm 0.011}$ \\ \hline
        \multirow{2}{*}{PatchTST} & CRPS & $0.052\scriptstyle{\pm 0.016}$ & $0.491\scriptstyle{\pm 0.008}$ & $0.063\scriptstyle{\pm 0.003}$ & $0.278\scriptstyle{\pm 0.018}$ & $0.314\scriptstyle{\pm 0.022}$ & $0.207\scriptstyle{\pm 0.006}$ & $0.234\scriptstyle{\pm 0.011}$ & $0.212\scriptstyle{\pm 0.018}$  \\
        ~ & NMAE & $0.069\scriptstyle{\pm 0.013}$ & $0.663\scriptstyle{\pm 0.010}$ & $0.085\scriptstyle{\pm 0.006}$ & $0.363\scriptstyle{\pm 0.023}$ & $0.407\scriptstyle{\pm 0.030}$ & $0.260\scriptstyle{\pm 0.009}$ & $0.271\scriptstyle{\pm 0.009}$ & $0.257\scriptstyle{\pm 0.011}$  \\ \hline
        \multirow{2}{*}{iTrans.} & CRPS & $0.059\scriptstyle{\pm 0.018}$ & $0.504\scriptstyle{\pm 0.012}$ & $0.066\scriptstyle{\pm 0.004}$ & $0.244\scriptstyle{\pm 0.011}$ & $0.317\scriptstyle{\pm 0.020}$ & $0.219\scriptstyle{\pm 0.008}$ & $0.254\scriptstyle{\pm 0.012}$ & $0.201\scriptstyle{\pm 0.018}$  \\
        ~ & NMAE & $0.081\scriptstyle{\pm 0.022}$ & $0.695\scriptstyle{\pm 0.017}$ & $0.087\scriptstyle{\pm 0.006}$ & $0.319\scriptstyle{\pm 0.019}$ & $0.408\scriptstyle{\pm 0.028}$ & $0.276\scriptstyle{\pm 0.017}$ & $0.291\scriptstyle{\pm 0.017}$ & $0.242\scriptstyle{\pm 0.009}$
        \\ \hline
        \multirow{2}{*}{Koopa} & CRPS & $0.012\scriptstyle{\pm 0.001}$ & $0.545\scriptstyle{\pm 0.016}$ & $0.085\scriptstyle{\pm 0.014}$ & $0.253\scriptstyle{\pm 0.018}$ & $0.326\scriptstyle{\pm 0.013}$ & $0.211\scriptstyle{\pm 0.019}$ & $0.288\scriptstyle{\pm 0.022}$ & $0.220\scriptstyle{\pm 0.015}$  \\
        ~ & NMAE & $0.015\scriptstyle{\pm 0.002}$ & $0.742\scriptstyle{\pm 0.022}$ & $0.112\scriptstyle{\pm 0.019}$ & $0.330\scriptstyle{\pm 0.019}$ & $0.423\scriptstyle{\pm 0.017}$ & $0.266\scriptstyle{\pm 0.022}$ & $0.329\scriptstyle{\pm 0.026}$ & $0.278\scriptstyle{\pm 0.022}$  \\ \hline
        \multirow{2}{*}{TSDiff} & CRPS & $0.077\scriptstyle{\pm 0.019}$ & $0.568\scriptstyle{\pm 0.015}$ & $0.111\scriptstyle{\pm 0.013}$ & $0.189\scriptstyle{\pm 0.009}$ & $0.304\scriptstyle{\pm 0.016}$ & $0.204\scriptstyle{\pm 0.006}$ &$0.209\scriptstyle{\pm 0.013}$ & ${{0.124}\scriptstyle{\pm 0.008}}$  \\
        ~ & NMAE & $0.096\scriptstyle{\pm 0.024}$ & $0.635\scriptstyle{\pm 0.012}$ & $0.115\scriptstyle{\pm 0.018}$ & $0.206\scriptstyle{\pm 0.011}$ & $0.400\scriptstyle{\pm 0.025}$ & $0.272\scriptstyle{\pm 0.015}$ & $0.276\scriptstyle{\pm 0.008}$ & ${{0.162}\scriptstyle{\pm 0.008}}$  \\ \hline
        \multirow{2}{*}{$D^3$VAE} & CRPS & $0.011\scriptstyle{\pm 0.002}$ & $0.769\scriptstyle{\pm 0.029}$ & $0.071\scriptstyle{\pm 0.009}$ & $0.143\scriptstyle{\pm 0.008}$ & $0.324\scriptstyle{\pm 0.019}$ & $0.216\scriptstyle{\pm 0.015}$ &$0.198\scriptstyle{\pm 0.015}$ & $0.303\scriptstyle{\pm 0.024}$  \\
        ~ & NMAE & $\underline{\textit{0.012}}\scriptstyle{\pm 0.002}$ & $0.998\scriptstyle{\pm 0.049}$ & $0.092\scriptstyle{\pm 0.013}$ & $0.178\scriptstyle{\pm 0.013}$ & $0.410\scriptstyle{\pm 0.016}$ & $0.267\scriptstyle{\pm 0.018}$ & $0.250\scriptstyle{\pm 0.018}$ & $0.378\scriptstyle{\pm 0.031}$  \\ \hline
        \multirow{2}{*}{GRU NVP} & CRPS & $0.019\scriptstyle{\pm 0.006}$ & $0.530\scriptstyle{\pm 0.008}$ & $0.062\scriptstyle{\pm 0.003}$ & $0.168\scriptstyle{\pm 0.008}$ & $0.398\scriptstyle{\pm 0.034}$ & $0.309\scriptstyle{\pm 0.023}$ & $0.455\scriptstyle{\pm 0.029}$ & $0.276\scriptstyle{\pm 0.014}$  \\
        ~ & NMAE & $0.024\scriptstyle{\pm 0.007}$ & $0.670\scriptstyle{\pm 0.011}$ & $0.081\scriptstyle{\pm 0.006}$ & $0.209\scriptstyle{\pm 0.013}$ & $0.477\scriptstyle{\pm 0.040}$ & $0.375\scriptstyle{\pm 0.024}$ & $0.584\scriptstyle{\pm 0.047}$ & $0.349\scriptstyle{\pm 0.028}$  \\ \hline
        \multirow{2}{*}{GRU MAF} & CRPS & $0.012\scriptstyle{\pm 0.003}$ & $0.486\scriptstyle{\pm 0.007}$ & $0.056\scriptstyle{\pm 0.002}$ & $0.144\scriptstyle{\pm 0.022}$ & ${{0.258}\scriptstyle{\pm 0.013}}$ & $0.160\scriptstyle{\pm 0.008}$ & $0.151\scriptstyle{\pm 0.009}$ & $0.146\scriptstyle{\pm 0.011}$  \\
        ~ & NMAE & $0.016\scriptstyle{\pm 0.002}$ & $0.603\scriptstyle{\pm 0.009}$ & $0.073\scriptstyle{\pm 0.004}$ & $0.182\scriptstyle{\pm 0.029}$ & ${{0.326}\scriptstyle{\pm 0.016}}$ & $0.208\scriptstyle{\pm 0.003}$ & $0.198\scriptstyle{\pm 0.004}$ & $0.193\scriptstyle{\pm 0.008}$  \\ \hline
        \multirow{2}{*}{Trans MAF} & CRPS & $0.012\scriptstyle{\pm 0.001} $& $0.442\scriptstyle{\pm 0.011}$ & $0.054\scriptstyle{\pm 0.002}$ & $0.133\scriptstyle{\pm 0.004}$ & $0.309\scriptstyle{\pm 0.009}$ & $0.200\scriptstyle{\pm 0.012}$ & ${{0.139}\scriptstyle{\pm 0.005}}$ & $0.180\scriptstyle{\pm 0.010}$  \\
        ~ & NMAE & $0.016\scriptstyle{\pm 0.001}$ & $0.577\scriptstyle{\pm 0.014}$ & $0.071\scriptstyle{\pm 0.003}$ & $0.160\scriptstyle{\pm 0.006}$ & $0.400\scriptstyle{\pm 0.011}$ & $0.256\scriptstyle{\pm 0.009}$ & ${{0.162}\scriptstyle{\pm 0.006}}$ & $0.224\scriptstyle{\pm 0.009}$  \\ \hline
        \multirow{2}{*}{TimeGrad} & CRPS & $\underline{\textit{0.009}}\scriptstyle{\pm 0.001}$& $0.465\scriptstyle{\pm 0.016}$ & $0.057\scriptstyle{\pm 0.002}$ & ${{0.130}\scriptstyle{\pm 0.005}}$ & $0.273\scriptstyle{\pm 0.007}$ &$0.184\scriptstyle{\pm 0.006}$ & $0.186\scriptstyle{\pm 0.003}$ & $0.148\scriptstyle{\pm 0.004}$  \\
        ~ & NMAE & $\underline{\textit{0.012}}\scriptstyle{\pm 0.002}$& $0.609\scriptstyle{\pm 0.015}$ & $0.073\scriptstyle{\pm 0.004}$ & $\textbf{0.155}\scriptstyle{\pm 0.007}$ & $0.356\scriptstyle{\pm 0.013}$ & $0.224\scriptstyle{\pm 0.014}$ &$ 0.246\scriptstyle{\pm 0.007}$ & $0.189\scriptstyle{\pm 0.006}$  \\ \hline
        \multirow{2}{*}{CSDI} & CRPS & $\underline{\textit{0.009}}\scriptstyle{\pm 0.001}$ & $\underline{\textit{0.392}}\scriptstyle{\pm 0.006}$ & $\textbf{0.051}\scriptstyle{\pm 0.001}$ & $0.147\scriptstyle{\pm 0.014}$ & $0.262\scriptstyle{\pm 0.012}$ & ${{0.133}\scriptstyle{\pm 0.006}}$ & $0.140\scriptstyle{\pm 0.012}$ & $0.144\scriptstyle{\pm 0.018}$  \\
        ~ & NMAE & $0.013\scriptstyle{\pm 0.001}$& $\underline{\textit{0.533}}\scriptstyle{\pm 0.007}$& $\textbf{0.066}\scriptstyle{\pm 0.001}$ & $0.175\scriptstyle{\pm 0.013}$ & $0.339\scriptstyle{\pm 0.009}$ & ${{0.161}\scriptstyle{\pm 0.013}}$ & $0.169\scriptstyle{\pm 0.021}$ & $0.181\scriptstyle{\pm 0.024}$  \\ \hline
        \multirow{2}{*}{$K^2$VAE} & CRPS & $\underline{\textit{0.009}}\scriptstyle{\pm 0.001}$& $\textbf{0.367}\scriptstyle{\pm 0.006}$ & $\underline{\textit{0.053}}\scriptstyle{\pm 0.002}$ & $\underline{\textit{0.129}}\scriptstyle{\pm 0.004}$ & $\underline{\textit{0.256}}\scriptstyle{\pm 0.008}$ & $\underline{\textit{0.128}}\scriptstyle{\pm 0.006}$ & $\underline{\textit{0.135}}\scriptstyle{\pm 0.008}$ & $\underline{\textit{0.122}}\scriptstyle{\pm 0.008}$  \\
        ~ & NMAE & $\textbf{0.009}\scriptstyle{\pm 0.001}$ & $\textbf{0.480}\scriptstyle{\pm 0.008}$ & $\underline{\textit{0.068}}\scriptstyle{\pm 0.002}$& $\underline{\textit{0.157}}\scriptstyle{\pm 0.007}$ & $\underline{\textit{0.312}}\scriptstyle{\pm 0.008}$ & $\underline{\textit{0.140}}\scriptstyle{\pm 0.007}$ & $\underline{\textit{0.152}}\scriptstyle{\pm 0.007}$ & $\underline{\textit{0.146}}\scriptstyle{\pm 0.009}$ \\ \hline
        \multirow{2}{*}{\model{}} & CRPS & $\textbf{0.008}\scriptstyle{\pm 0.000}$& ${{0.456}\scriptstyle{\pm 0.002}}$ & $\underline{\textit{0.053}}\scriptstyle{\pm 0.000}$ & $\textbf{0.126}\scriptstyle{\pm 0.000}$ & $\textbf{0.231}\scriptstyle{\pm 0.001}$ & $\textbf{0.106}\scriptstyle{\pm 0.000}$ & $\textbf{0.108}\scriptstyle{\pm 0.000}$ & $\textbf{0.085}\scriptstyle{\pm 0.000}$  \\
        ~ & NMAE & $\textbf{0.009}\scriptstyle{\pm 0.001}$ & ${{0.583}\scriptstyle{\pm 0.025}}$ & ${{0.070}\scriptstyle{\pm 0.001}}$& ${{0.159}\scriptstyle{\pm 0.000}}$ & $\textbf{0.286}\scriptstyle{\pm 0.001}$ & $\textbf{0.132}\scriptstyle{\pm 0.001}$ & $\textbf{0.139}\scriptstyle{\pm 0.001}$ & $\textbf{0.105}\scriptstyle{\pm 0.003}$ \\ \hline
        \multirow{2}{*}{Imp (\%)} & CRPS & $+11.1$ & $-24.3$ & $-3.9$ & $+2.3$ & $+9.8$ & $+17.2$ & $+20.0$ & $+30.3$ \\
        ~ & NMAE & $+0.0$ & $-21.5$ & $-6.1$ & $-2.6$ & $+8.3$ & $+5.7$ & $+8.6$ & $+28.1$ \\
        \bottomrule
    \end{tabular}
    \caption{Comparison on short-term probabilistic forecasting scenarios across eight real-world datasets. Lower CRPS or NMAE values indicate better predictions. The means and standard errors are based on 5 independent runs of retraining and evaluation. \textbf{Bold}: the best, \underline{\textit{underlined italic}}: the 2nd best.}
    \label{tab: short-term:appendix}
\end{table*}

\section{Odd RealNVP Construction}
\label{sec:ablation:oddRealNVP}

This appendix details the construction of \textit{w OddRealNVP}
variant of \model{}: \ross\ is replaced
by a RealNVP-style affine coupling layer~\citep{Dinh2016DensityEU}, while \scl\
 is left unchanged, so the ablation isolates the coupling transform alone.

\paragraph{Restoring oddness in a RealNVP coupling.}
At each time step $h$, the coupling layer masks the $C$ channels with
$\mathbf{m}\in\{0,1\}^C$ and applies $v_{h,c} = s_{h,c}\cdot u_{h,c} + t_{h,c}$
to the unmasked channels, with $s_{h,c}, t_{h,c}$ predicted from the masked
channels $\mathbf{u}^{\mathbf m}_h := \mathbf{m}\odot\mathbf{U}_{h,:}$ and
context $\mathbf{F}_{h,:}$ with the help of an MLP network $r$. Left unconstrained, this need not satisfy the
odd condition. Substituting
$\mathbf{U}_{h,:}\mapsto-\mathbf{U}_{h,:}$ shows the map is odd (with
$\mathbf{F}$ fixed) precisely when $s_{h,c}$ is an \emph{even} and $t_{h,c}$
an \emph{odd} function of $\mathbf{u}^{\mathbf m}_h$:
\begin{align}
	s_{h,c}(-\mathbf{u}^{\mathbf m}_h) &= s_{h,c}(\mathbf{u}^{\mathbf m}_h), \label{eq:rnvp-even}\\
	t_{h,c}(-\mathbf{u}^{\mathbf m}_h) &= -t_{h,c}(\mathbf{u}^{\mathbf m}_h), \label{eq:rnvp-odd}
\end{align}
the same even/odd requirement \citet{Kobayashi2023DesignOR} give for
restricting a RealNVP coupling to be odd. Any raw, unconstrained network
output $r(\cdot)$ is split into such a pair by evaluating it on both $+$
and $-$ its input:
\begin{align}
	s_{h,c} &= \tfrac{1}{2}\big[r_s(\mathbf{u}^{\mathbf m}_h) + r_s(-\mathbf{u}^{\mathbf m}_h)\big], \label{eq:rnvp-sym-s}\\
	t_{h,c} &= \tfrac{1}{2}\big[r_t(\mathbf{u}^{\mathbf m}_h) - r_t(-\mathbf{u}^{\mathbf m}_h)\big], \label{eq:rnvp-sym-t}
\end{align}
exactly what the \texttt{EvenNetwork}/\texttt{OddNetwork} pair computes:
each runs the same backbone once on $+\mathbf{u}^{\mathbf m}_h$ and once on
$-\mathbf{u}^{\mathbf m}_h$, recombining with a $+$ (scale) or $-$
(translation) sign.

\begin{lstlisting}[language=Python, caption={Odd-making step of OddRealNVP.}, label={lst:realnvp}]
xc_pos = r(x)
xc_neg = r(-x)
t = 0.5 * (xc_pos - xc_neg)  # odd
s = 0.5 * (xc_pos + xc_neg)  # even
\end{lstlisting}

\section{Mixture of Odd Flows Construction}
\label{sec:ablation:mixOddFlows}

Single-component \model{} is restricted to \emph{symmetric} residual densities
(\Cref{sec:method:twostage}, Assumption~2), so it cannot represent skewed
residuals. TORF-Mixture lifts this restriction by mixing $M$ independently
parameterized copies of the \scl$\to$\ross\ flow, while keeping the exact
zero-mean guarantee of the single-component model.

\paragraph{Shifted components.}
Let $T^{(i)}(\cdot\,;\mathbf{F})$, $i=1,\dots,M$, be $M$ independent
\scl$\to$\ross\ flows, each with its own
parameters and each individually odd. Instead of applying $T^{(i)}$ directly,
component $i$ first shifts $u_{h,c}$ by its own learned location
$\beta^{(i)}_{h,c}(\mathbf{F})$:
\begin{equation}
	v^{(i)}_{h,c},\; \ell^{(i)}_{h,c} = T^{(i)}\!\left(u_{h,c} - \beta^{(i)}_{h,c}(\, \mathbf{F})\right),
	\label{eq:mix-component}
\end{equation}
with $\ell^{(i)}_{h,c}$ the log-determinant of $T^{(i)}$ at $(h,c)$. Since
$T^{(i)}$ is odd and $\mathcal{N}(0,1)$ is symmetric, component $i$'s density over $u_{h,c}$ is symmetric about $\beta^{(i)}_{h,c}$ rather than about $0$.

\paragraph{Weights and the centering constraint.}
Two lightweight MLPs on $\mathbf{F}$, applied per time step and shared across
channels as ScaleNet is (\Cref{sec:method:scl}), output the raw weight logits and shifts
$\beta^{(i)}_{h,c}$.
Weights are a softmax floored at $\delta=\text{\texttt{min\_weight}}$ so no
component's weight vanishes during training:
\begin{equation}
	\pi^{(i)}_{h,c} = (1 - M\delta)\cdot\mathrm{softmax}_i(\cdot) + \delta,
	\qquad \textstyle\sum_{i=1}^{M}\pi^{(i)}_{h,c} = 1.
	\label{eq:mix-weights}
\end{equation}
The raw shifts are then re-centered against these weights,
\begin{equation}
	\beta^{(i)}_{h,c} \;\leftarrow\; \beta^{(i)}_{h,c} - \textstyle\sum_{j=1}^{M}\pi^{(j)}_{h,c}\,\beta^{(j)}_{h,c},
	\label{eq:mix-center}
\end{equation}
which forces $\sum_{i=1}^{M}\pi^{(i)}_{h,c}\beta^{(i)}_{h,c} = 0$ exactly, for
every $(h,c)$ and $\mathbf{F}$, by construction rather than by training. The
shift head is zero-initialized, so training starts from $M$ coincident
components equivalent to single-component \model, and only splits them apart
when doing so improves likelihood.

Each $T^{(i)}$ acts elementwise, the mixture
factorizes exactly per residual entry and, under Conditional Independence, training
minimizes $\mathcal{L} = -\sum_{h,c}\log p(u_{h,c}\mid\mathbf{F})$ as before.
Sampling draws a component $z_{h,c}\sim\mathrm{Categorical}(\pi^{(1)}_{h,c},\dots,\pi^{(M)}_{h,c})$.

\paragraph{Mean preservation despite asymmetry.}
Component $i$ has mean $\mathbb{E}[u_{h,c}\mid z_{h,c}=i,\mathbf{F}] = \beta^{(i)}_{h,c}$,
so the mixture mean is
\begin{equation}
	\mathbb{E}[u_{h,c}\mid\mathbf{F}]
	= \sum_{i=1}^{M} \pi^{(i)}_{h,c}\,\beta^{(i)}_{h,c} = 0
\end{equation}
by the centering constraint \eqref{eq:mix-center} alone -- for any weights, any
$M$, and any component shapes, so
$\mathbb{E}_{p_{\theta_2}}[\boldsymbol{\varepsilon}\mid\mathbf{X}]=\mathbf{0}$
carries over from single-component \model{} unchanged.

This same construction is also what lets the mixture be asymmetric. Components
symmetric about a \emph{common} point would keep the mixture symmetric about
that point regardless of weights. Here each component is symmetric about its
own $\beta^{(i)}_{h,c}$, and only their weighted \emph{average} is pinned to
zero, so the resulting mixture can be asymmetric, as in classical two-piece
and split-normal distributions, here with $M$ learned, flow-shaped components.
\textbf{One consequence is that the guarantee is now on the \emph{mean}, not the median
once the mixture is genuinely asymmetric, its median need no longer coincide
with $f_{\theta_1}(\mathbf{X})$.} Setting $M=1$ collapses \model{}-Mixture back to \model{} exactly,
with $\pi^{(1)}_{h,c}=1$ and
$\beta^{(1)}_{h,c}=0$. TORF-Mixture is a strict generalization, not a separate
model family.

\begin{table*}[!htbp]
    \centering
    \setlength\tabcolsep{1mm}
    \scriptsize
    \resizebox{\textwidth}{!}{
  \begin{tabular}{c|c|c|c|c|c|c|c|c|c|c|c|c|c}
    \toprule
        Dataset & Horizon & Koopa & iTransformer & FITS & PatchTST & GRU MAF & Trans MAF & TSDiff & CSDI & TimeGrad & GRU NVP & $K^2$VAE  & \model{}\\ \midrule
        \multirow{4}{*}{ETTm1-L} & 96 &$0.285\scriptstyle\pm0.018$ & $0.301\scriptstyle\pm0.033$ & $0.267\scriptstyle\pm0.023$ & $0.261\scriptstyle\pm0.051$ & $0.295\scriptstyle\pm0.055$ & $0.313\scriptstyle\pm0.045$ & $0.344\scriptstyle\pm0.050$ & $0.236\scriptstyle\pm0.006$ & $0.522\scriptstyle\pm0.105$ & $0.383\scriptstyle\pm0.053$ & $\underline{\textit{0.232}}\scriptstyle\pm0.010$   & $\textbf{0.2088}\scriptstyle\pm0.0011$\\
        ~ & 192 &$0.289\scriptstyle\pm0.024$& $0.314\scriptstyle\pm0.023$ & $0.261\scriptstyle\pm0.022$ & $0.275\scriptstyle\pm0.030$ & $0.389\scriptstyle\pm0.033$ & $0.424\scriptstyle\pm0.029$ & $0.345\scriptstyle\pm0.035$ & $0.291\scriptstyle\pm0.025$ & $0.603\scriptstyle\pm0.092$ & $0.396\scriptstyle\pm0.030$ & $\underline{\textit{0.259}}\scriptstyle\pm0.013$   & $\textbf{0.2310}\scriptstyle\pm0.0003$\\
        ~ & 336 &$0.286\scriptstyle\pm0.035$& $0.311\scriptstyle\pm0.029$ & $0.275\scriptstyle\pm0.030$ & $0.285\scriptstyle\pm0.028$ & $0.429\scriptstyle\pm0.021$ & $0.481\scriptstyle\pm0.019$ & $0.462\scriptstyle\pm0.043$ & $0.322\scriptstyle\pm0.033$ & $0.601\scriptstyle\pm0.028$ & $0.486\scriptstyle\pm0.032$ & $\underline{\textit{0.262}}\scriptstyle\pm0.030$   & $\textbf{0.2457}\scriptstyle\pm0.0005$\\
        ~ & 720 &$0.295\scriptstyle\pm0.027$ & $0.455\scriptstyle\pm0.021$ & $0.305\scriptstyle\pm0.024$ & $0.304\scriptstyle\pm0.029$ & $0.536\scriptstyle\pm0.033$ & $0.688\scriptstyle\pm0.043$ & $0.478\scriptstyle\pm0.027$ & $0.448\scriptstyle\pm0.038$ & $0.621\scriptstyle\pm0.037$ & $0.546\scriptstyle\pm0.036$ & $\underline{\textit{0.294}}\scriptstyle\pm0.026$  & $\textbf{0.2796}\scriptstyle\pm0.0007$ \\ \midrule

        \multirow{4}{*}{ETTm2-L} & 96 &$0.178\scriptstyle\pm0.023$ & $0.181\scriptstyle\pm0.031$ & $0.162\scriptstyle\pm0.053$ & $0.142\scriptstyle\pm0.034$ & $0.177\scriptstyle\pm0.024$ & $0.227\scriptstyle\pm0.013$ & $0.175\scriptstyle\pm0.019$ & $\underline{\textit{0.115}}\scriptstyle\pm0.009$ & $0.427\scriptstyle\pm0.042$ & $0.319\scriptstyle\pm0.044$ & $0.126\scriptstyle\pm0.007$ & $\textbf{0.1091}\scriptstyle\pm0.0001$  \\
        ~ & 192 &$0.185\scriptstyle\pm0.014$& $0.190\scriptstyle\pm0.010$ &$0.185\scriptstyle\pm0.053$ & $0.172\scriptstyle\pm0.023$ & $0.411\scriptstyle\pm0.026$ & $0.253\scriptstyle\pm0.037$ & $0.255\scriptstyle\pm0.029$ & $\underline{\textit{0.147}}\scriptstyle\pm0.008$ & $0.424\scriptstyle\pm0.061$ & $0.326\scriptstyle\pm0.025$ & $0.148\scriptstyle\pm0.009$   & $\textbf{0.1279}\scriptstyle\pm0.0001$\\
        ~ & 336 &$0.198\scriptstyle\pm0.015$& $0.206\scriptstyle\pm0.055$ & $0.218\scriptstyle\pm0.053$ & $0.195\scriptstyle\pm0.042$ & $0.377\scriptstyle\pm0.023$ & $0.253\scriptstyle\pm0.013$ & $0.328\scriptstyle\pm0.047$ & $0.190\scriptstyle\pm0.018 $& $0.469\scriptstyle\pm0.049$ & $0.449\scriptstyle\pm0.145$ & $\underline{\textit{0.164}}\scriptstyle\pm0.010$  & $\textbf{0.1441}\scriptstyle\pm0.0001 $\\
        ~ & 720 &$0.233\scriptstyle\pm0.025$& $0.311\scriptstyle\pm0.024$ & $0.449\scriptstyle\pm0.034$ & $0.229\scriptstyle\pm0.036$ & $0.272\scriptstyle\pm0.029$ & $0.355\scriptstyle\pm0.043$ & $0.344\scriptstyle\pm0.046$ & $0.239\scriptstyle\pm0.035$ & $0.470\scriptstyle\pm0.054$ & $0.561\scriptstyle\pm0.273$ & $\underline{\textit{0.221}}\scriptstyle\pm0.023$   & $\textbf{0.1656}\scriptstyle\pm0.0002$\\ \midrule

        \multirow{4}{*}{ETTh1-L} & 96 &$0.307\scriptstyle\pm0.033$ & $0.292\scriptstyle\pm0.032$ & $0.294\scriptstyle\pm0.023$ & $0.312\scriptstyle\pm0.036$ & $0.293\scriptstyle\pm0.037$ & $0.333\scriptstyle\pm0.045$ & $0.395\scriptstyle\pm0.052$ & $0.437\scriptstyle\pm0.018$ & $0.455\scriptstyle\pm0.046$ & $0.379\scriptstyle\pm0.030$ & $\underline{\textit{0.264}}\scriptstyle\pm0.020$   & $\textbf{0.2426}\scriptstyle\pm0.0005$\\
        ~ & 192 &$0.301\scriptstyle\pm0.014$ & $0.298\scriptstyle\pm0.020$ & $0.304\scriptstyle\pm0.028$ & $0.313\scriptstyle\pm0.034$ & $0.348\scriptstyle\pm0.075$ & $0.351\scriptstyle\pm0.063$ & $0.467\scriptstyle\pm0.044$ & $0.496\scriptstyle\pm0.051$ & $0.516\scriptstyle\pm0.038$ & $0.425\scriptstyle\pm0.019$ & $\underline{\textit{0.290}}\scriptstyle\pm0.016$   & $\textbf{0.2606}\scriptstyle\pm0.0006$\\
        ~ & 336 &$0.312\scriptstyle\pm0.019$& $0.327\scriptstyle\pm0.043$ & $0.318\scriptstyle\pm0.023$ & $0.319\scriptstyle\pm0.035$ & $0.377\scriptstyle\pm0.026$ & $0.371\scriptstyle\pm0.031$ & $0.450\scriptstyle\pm0.027$ & $0.454\scriptstyle\pm0.025$ & $0.512\scriptstyle\pm0.026$ & $0.458\scriptstyle\pm0.054$ & $\underline{\textit{0.308}}\scriptstyle\pm0.021$  & $\textbf{0.2752}\scriptstyle\pm0.0010$\\
        ~ & 720 &$0.318\scriptstyle\pm0.009$& $0.350\scriptstyle\pm0.019$ & $0.348\scriptstyle\pm0.025$ & $0.323\scriptstyle\pm0.020$& $0.393\scriptstyle\pm0.043$ & $0.363\scriptstyle\pm0.053$ & $0.516\scriptstyle\pm0.027$ & $0.528\scriptstyle\pm0.012$ & $0.523\scriptstyle\pm0.027$ & $0.502\scriptstyle\pm0.039$ & $\underline{\textit{0.314}}\scriptstyle\pm0.011$   & $\textbf{0.2860}\scriptstyle\pm0.0006$\\ \midrule

        \multirow{4}{*}{ETTh2-L} & 96 &$0.199\scriptstyle\pm0.012$ & $0.185\scriptstyle\pm0.013$ & $0.187\scriptstyle\pm0.011$ & $0.197\scriptstyle\pm0.021$ & $0.239\scriptstyle\pm0.019$ & $0.263\scriptstyle\pm0.020$ & $0.336\scriptstyle\pm0.021$ & $0.164\scriptstyle\pm0.013$ & $0.358\scriptstyle\pm0.026$ & $0.432\scriptstyle\pm0.141$ & $\underline{\textit{0.162}}\scriptstyle\pm0.009$  & $\textbf{0.1377}\scriptstyle\pm0.0004$ \\
        ~ & 192 &$0.198\scriptstyle\pm0.022$& $0.199\scriptstyle\pm0.019$ & $0.195\scriptstyle\pm0.022$ & $0.204\scriptstyle\pm0.055$ & $0.313\scriptstyle\pm0.034$ & $0.273\scriptstyle\pm0.024$ & $0.265\scriptstyle\pm0.043$ & $0.226\scriptstyle\pm0.018$ & $0.457\scriptstyle\pm0.081$ & $0.625\scriptstyle\pm0.170$ &$\underline{\textit{0.186}}\scriptstyle\pm0.018$  & $\textbf{0.1606}\scriptstyle\pm0.0001$\\
        ~ & 336 &$0.262\scriptstyle\pm0.019$& $0.271\scriptstyle\pm0.033$ & $\underline{\textit{0.246}}\scriptstyle\pm0.044$ & $0.277\scriptstyle\pm0.054$ & $0.376\scriptstyle\pm0.034$ & $0.265\scriptstyle\pm0.042$ & $0.350\scriptstyle\pm0.031$ & $0.274\scriptstyle\pm0.022$ & $0.481\scriptstyle\pm0.078$ & $0.793\scriptstyle\pm0.319$ & $0.257\scriptstyle\pm0.023$   & $\textbf{0.1766}\scriptstyle\pm0.0007$\\
        ~ & 720 &$0.293\scriptstyle\pm0.026$& $0.542\scriptstyle\pm0.015$ & $0.314\scriptstyle\pm0.022$ & $0.304\scriptstyle\pm0.018$ & $0.990\scriptstyle\pm0.023$ & $0.327\scriptstyle\pm0.033$ & $0.406\scriptstyle\pm0.056$ & $0.302\scriptstyle\pm0.040$ & $0.445\scriptstyle\pm0.016$ & $0.539\scriptstyle\pm0.090$ & $\underline{\textit{0.280}}\scriptstyle\pm0.014$   & $\textbf{0.1809}\scriptstyle\pm0.0002$\\ \midrule

        \multirow{4}{*}{Electricity-L} & 96 & $0.110\scriptstyle\pm0.004$& $0.102\scriptstyle\pm0.004$ & $0.105\scriptstyle\pm0.006$ & $0.126\scriptstyle\pm0.005$ & $0.083\scriptstyle\pm0.009$ & $0.088\scriptstyle\pm0.014$ & $0.344\scriptstyle\pm0.006$ & $0.153\scriptstyle\pm0.137$ & $0.096\scriptstyle\pm0.002$ & $0.094\scriptstyle\pm0.003$ & $\underline{\textit{0.073}}\scriptstyle\pm0.002$  & $\textbf{0.0619}\scriptstyle\pm0.0001$ \\
        ~ & 192 &$0.109\scriptstyle\pm0.011$ & $0.104\scriptstyle\pm0.014$ & $0.112\scriptstyle\pm0.104$ & $0.123\scriptstyle\pm0.032$ & $0.093\scriptstyle\pm0.024$ & $0.097\scriptstyle\pm0.009$ & $0.345\scriptstyle\pm0.006$ & $0.200\scriptstyle\pm0.094$ & $0.100\scriptstyle\pm0.004$ & $0.097\scriptstyle\pm0.002$ & $\underline{\textit{0.080}}\scriptstyle\pm0.004$  & $\textbf{0.0687}\scriptstyle\pm0.0001$ \\
        ~ & 336 &$0.121\scriptstyle\pm0.011$& $0.104\scriptstyle\pm0.010$ & $0.111\scriptstyle\pm0.014$ & $0.131\scriptstyle\pm0.024$ & $0.095\scriptstyle\pm0.001$ & - & $0.462\scriptstyle\pm0.054$ & - & $0.102\scriptstyle\pm0.007$ & $0.099\scriptstyle\pm0.001$ & $\textbf{0.054}\scriptstyle\pm0.001$   & $\underline{\textit{0.0742}}\scriptstyle\pm0.0001$\\
        ~ & 720 &$0.113\scriptstyle\pm0.018$ & $0.109\scriptstyle\pm0.044$ &  $0.115\scriptstyle\pm0.024$ & $0.127\scriptstyle\pm0.015$ & $0.106\scriptstyle\pm0.007$ & - & $0.478\scriptstyle\pm0.005$ & - & $0.108\scriptstyle\pm0.003$ & $0.114\scriptstyle\pm0.013$ & $\textbf{0.057}\scriptstyle\pm0.005$   & $\underline{\textit{0.0807}}\scriptstyle\pm0.0001$\\ \midrule

        \multirow{4}{*}{Traffic-L} & 96 &$0.297\scriptstyle\pm0.019$& $0.256\scriptstyle\pm0.004$ & $0.258\scriptstyle\pm0.004$ & $0.194\scriptstyle\pm0.002$ & $0.215\scriptstyle\pm0.003$ & $0.208\scriptstyle\pm0.004$ & $0.294\scriptstyle\pm0.003$ & - & $0.202\scriptstyle\pm0.004$ & $\underline{\textit{0.187}}\scriptstyle\pm0.002$ & $\textbf{0.086}\scriptstyle\pm0.001$   & $0.1925\scriptstyle\pm0.0002$\\
        ~ & 192 &$0.308\scriptstyle\pm0.009$& $0.250\scriptstyle\pm0.002$ & $0.275\scriptstyle\pm0.003$ & $0.198\scriptstyle\pm0.004$ & - & - & $0.306\scriptstyle\pm0.004$ & - & $0.208\scriptstyle\pm0.003 $& $\underline{\textit{0.192}}\scriptstyle\pm0.001$ & $\textbf{0.088}\scriptstyle\pm0.002$   & $0.1961\scriptstyle\pm0.0003$\\
        ~ & 336 &$0.334\scriptstyle\pm0.017$ & $0.261\scriptstyle\pm0.001$ & $0.327\scriptstyle\pm0.001$ & $0.204\scriptstyle\pm0.002$ & - & - & $0.317\scriptstyle\pm0.006$ & - & $0.213\scriptstyle\pm0.003$ & $0.201\scriptstyle\pm0.004$ & $\underline{\textit{0.195}}\scriptstyle\pm0.003$  & $\textbf{0.1936}\scriptstyle\pm0.0003$ \\
        ~ & 720 &$0.358\scriptstyle\pm0.022$& $0.284\scriptstyle\pm0.004$ & $0.374\scriptstyle\pm0.004$ & $0.214\scriptstyle\pm0.001$ & - & - & $0.391\scriptstyle\pm0.002$ & - & $0.220\scriptstyle\pm0.002$ & $0.211\scriptstyle\pm0.004$ & $\textbf{0.200}\scriptstyle\pm0.001$  & $\underline{\textit{0.2027}}\scriptstyle\pm0.0001$ \\ \midrule

        \multirow{4}{*}{Weather-L} & 96 &$0.132\scriptstyle\pm0.008$& $0.131\scriptstyle\pm0.011$ & $0.210\scriptstyle\pm0.013$ & $0.131\scriptstyle\pm0.007$ & $0.139\scriptstyle\pm0.008$ & $0.105\scriptstyle\pm0.011$ & $0.104\scriptstyle\pm0.020$ & $\textbf{0.068}\scriptstyle\pm0.008$ & $0.130\scriptstyle\pm0.017$ & $0.116\scriptstyle\pm0.013 $& $0.080\scriptstyle\pm0.007$   & $\underline{\textit{0.0693}}\scriptstyle\pm0.0010$\\
        ~ & 192 &$0.133\scriptstyle\pm0.017$& $0.132\scriptstyle\pm0.018$ & $0.205\scriptstyle\pm0.019$ & $0.131\scriptstyle\pm0.014$ & $0.143\scriptstyle\pm0.020$ & $0.142\scriptstyle\pm0.022$ & $0.134\scriptstyle\pm0.012$ & $\textbf{0.068}\scriptstyle\pm0.006$ & $0.127\scriptstyle\pm0.019$ & $0.122\scriptstyle\pm0.021$ & $0.079\scriptstyle\pm0.009$  & $\underline{\textit{0.0752}}\scriptstyle\pm0.0012$ \\
        ~ & 336 &$0.136\scriptstyle\pm0.021$& $0.132\scriptstyle\pm0.010$ & $0.221\scriptstyle\pm0.005$ & $0.137\scriptstyle\pm0.008$ & $0.129\scriptstyle\pm0.012$ & $0.133\scriptstyle\pm0.014$ & $0.137\scriptstyle\pm0.010$ & $0.083\scriptstyle\pm0.002$ & $0.130\scriptstyle\pm0.006$ & $0.128\scriptstyle\pm0.011$ &$\underline{\textit{0.082}}\scriptstyle\pm0.010$  & $\textbf{0.0781}\scriptstyle\pm0.0017$ \\
        ~ & 720 &$0.140\scriptstyle\pm0.007$ & $0.133\scriptstyle\pm0.004$ & $0.267\scriptstyle\pm0.003$ & $0.142\scriptstyle\pm0.005$ & $0.122\scriptstyle\pm0.006$ & $0.113\scriptstyle\pm0.004$ & $0.152\scriptstyle\pm0.003$ & $0.087\scriptstyle\pm0.003$ & $0.113\scriptstyle\pm0.011$ & $0.110\scriptstyle\pm0.004$ & $\underline{\textit{0.084}}\scriptstyle\pm0.003$  & $\textbf{0.0792}\scriptstyle\pm0.0007$ \\ \midrule

        \multirow{4}{*}{Exchange-L} & 96 &$0.063\scriptstyle\pm0.006$& $0.061\scriptstyle\pm0.003$ & $0.048\scriptstyle\pm0.004$ & $0.063\scriptstyle\pm0.006$ & $0.026\scriptstyle\pm0.010$ & $0.028\scriptstyle\pm0.002$ &$0.079\scriptstyle\pm0.007$ & $\underline{\textit{0.028}}\scriptstyle\pm0.003$ & $0.068\scriptstyle\pm0.003$ & $0.071\scriptstyle\pm0.006$ & $0.031\scriptstyle\pm0.002$  & $\textbf{0.0198}\scriptstyle\pm0.0001$ \\
        ~ & 192 &$0.065\scriptstyle\pm0.020$ & $0.062\scriptstyle\pm0.010$ & $0.049\scriptstyle\pm0.011$ & $0.067\scriptstyle\pm0.008$ & $0.034\scriptstyle\pm0.009$ & $0.046\scriptstyle\pm0.017$ & $0.093\scriptstyle\pm0.011$ & $0.045\scriptstyle\pm0.003$ & $0.087\scriptstyle\pm0.013$ & $0.068\scriptstyle\pm0.004$ & $\underline{\textit{0.032}}\scriptstyle\pm0.010$  & $\textbf{0.0265}\scriptstyle\pm0.0001$ \\
        ~ & 336 &$0.072\scriptstyle\pm0.008$& $0.067\scriptstyle\pm0.008$ & $0.052\scriptstyle\pm0.013$ & $0.071\scriptstyle\pm0.017$ & $0.058\scriptstyle\pm0.023$ & $\underline{\textit{0.045}}\scriptstyle\pm0.010$ & $0.081\scriptstyle\pm0.007$ & $0.060\scriptstyle\pm0.004$ & $0.074\scriptstyle\pm0.009$ & $0.072\scriptstyle\pm0.002$ & $0.048\scriptstyle\pm0.004$  & $\textbf{0.0382}\scriptstyle\pm0.0001$ \\
        ~ & 720 &$0.091\scriptstyle\pm0.012$& $0.087\scriptstyle\pm0.023$ & $0.074\scriptstyle\pm0.011$ & $0.097\scriptstyle\pm0.007$ & $0.160\scriptstyle\pm0.019$ & $0.148\scriptstyle\pm0.017$ & $0.082\scriptstyle\pm0.010$ & $0.143\scriptstyle\pm0.020$ & $0.099\scriptstyle\pm0.015$ & $0.079\scriptstyle\pm0.009$ & $\underline{\textit{0.069}}\scriptstyle\pm0.005$  & $\textbf{0.0591}\scriptstyle\pm0.0002$ \\ \midrule

        \multirow{4}{*}{ILI-L} & 24 &$0.245\scriptstyle\pm0.018$& $0.212 \scriptstyle{\pm 0.013}$ & $0.233 \scriptstyle{\pm 0.015}$ & $0.312 \scriptstyle{\pm 0.014}$ & $0.097 \scriptstyle{\pm 0.010}$ & $0.092 \scriptstyle{\pm 0.019}$ & $0.228 \scriptstyle{\pm 0.024}$ & $0.250\scriptstyle\pm0.013$ & $0.275\scriptstyle\pm0.047$ & $0.257\scriptstyle\pm0.003$ & $\underline{\textit{0.087}}\scriptstyle\pm0.003$  & $\textbf{0.0813}\scriptstyle\pm0.0007$ \\
        ~ & 36 &$0.214\scriptstyle\pm0.008$& $0.182 \scriptstyle{\pm 0.016}$ & $0.217 \scriptstyle{\pm 0.023}$ & $0.241 \scriptstyle{\pm 0.021}$ & $0.117 \scriptstyle{\pm 0.017}$ & $0.115 \scriptstyle{\pm 0.011}$ & $0.235 \scriptstyle{\pm 0.010}$ & $0.285\scriptstyle\pm0.010$ & $0.272\scriptstyle\pm0.057$ & $0.281\scriptstyle\pm0.004$ & $\underline{\textit{0.113}} \scriptstyle{\pm 0.005}$  & $\textbf{0.1122}\scriptstyle\pm0.0009$ \\
        ~ & 48 &$0.271\scriptstyle\pm0.021$& $0.213 \scriptstyle{\pm 0.012}$ & $0.185 \scriptstyle{\pm 0.026}$ & $0.242 \scriptstyle{\pm 0.018}$ & $0.128 \scriptstyle{\pm 0.019}$ & $0.133 \scriptstyle{\pm 0.022}$ & $0.265 \scriptstyle{\pm 0.039}$ & $0.285\scriptstyle\pm0.036$ & $0.295\scriptstyle\pm0.033$ & $0.288\scriptstyle\pm0.008$ & $\underline{\textit{0.124}} \scriptstyle{\pm 0.010}$  & $\textbf{0.1073}\scriptstyle\pm0.0008$ \\
        ~ & 60 &$0.228\scriptstyle\pm0.022$& $0.222 \scriptstyle{\pm 0.020}$ &$0.211 \scriptstyle{\pm 0.011}$ & $0.233 \scriptstyle{\pm 0.019}$ & $0.172 \scriptstyle{\pm 0.034}$ & $0.155 \scriptstyle{\pm 0.018}$ & $0.263 \scriptstyle{\pm 0.022}$ & $0.283\scriptstyle\pm0.012$ & $0.295\scriptstyle\pm0.083$ & $0.307\scriptstyle\pm0.005$ & $\underline{\textit{0.142}} \scriptstyle{\pm 0.008}$  & $\textbf{0.1313}\scriptstyle\pm0.0003$ \\ \bottomrule
        \multicolumn{13}{l}{Due to the excessive time and memory consumption, some results are unavailable in our implementation  and denoted as -.}
    \end{tabular}}
    \caption{Results of CRPS ($\textrm{mean}_{\textrm{std}}$) on long-term forecasting scenarios, each containing five independent runs with different seeds. The context length is set to 36 for the ILI-L dataset and 96 for the others. Lower CRPS values indicate better predictions. The means and standard errors are based on 5 independent runs of retraining and evaluation. \textbf{Bold}: the best, \underline{\textit{italics}}: the 2nd best.}
  \label{tab:long_term_fore_CRPS}
  \end{table*}

\begin{table*}
    \centering
    \setlength\tabcolsep{2pt}
    \scriptsize
    \resizebox{\textwidth}{!}{
  \begin{tabular}{c|c|c|c|c|c|c|c|c|c|c|c|c|c}
    \toprule
        Dataset & Horizon & Koopa & iTransformer & FITS & PatchTST & GRU MAF & Trans MAF & TSDiff & CSDI & TimeGrad & GRU NVP & $K^2$VAE & \shortstack{\model{} \\ (SimpleTM)} \\  \midrule
        \multirow{4}{*}{ETTm1-L} & 96 &$0.362\scriptstyle\pm0.022$ & $0.369\scriptstyle\pm0.029$ & $0.349\scriptstyle\pm0.032$ & $0.329\scriptstyle\pm0.100$ & $0.402\scriptstyle\pm0.087$ & $0.456\scriptstyle\pm0.042$ & $0.441\scriptstyle\pm0.021$ & $0.308\scriptstyle\pm0.005$ & $0.645\scriptstyle\pm0.129$ & $0.488\scriptstyle\pm0.058$ & $\underline{\textit{0.284}}\scriptstyle\pm0.011$ & $\textbf{0.2630}\scriptstyle\pm0.0010$ \\
        ~ & 192 &$0.365\scriptstyle\pm0.032$ & $0.384\scriptstyle\pm0.041$ & $0.341\scriptstyle\pm0.032$ & $0.338\scriptstyle\pm0.022$ & $0.476\scriptstyle\pm0.046$ & $0.553\scriptstyle\pm0.012$ & $0.441\scriptstyle\pm0.019$ & $0.377\scriptstyle\pm0.026$ & $0.748\scriptstyle\pm0.084$ & $0.514\scriptstyle\pm0.042$ & $\underline{\textit{0.323}}\scriptstyle\pm0.020$ & $\textbf{0.2950}\scriptstyle\pm0.0020$ \\
        ~ & 336 &$0.364\scriptstyle\pm0.026$ & $0.380\scriptstyle\pm0.020$ & $0.356\scriptstyle\pm0.022$ & $0.344\scriptstyle\pm0.013$ & $0.522\scriptstyle\pm0.019$ & $0.590\scriptstyle\pm0.047$ & $0.571\scriptstyle\pm0.033$ & $0.419\scriptstyle\pm0.042$ & $0.759\scriptstyle\pm0.015$ & $0.630\scriptstyle\pm0.029$ & $\underline{\textit{0.330}}\scriptstyle\pm0.014$ & $\textbf{0.3140}\scriptstyle\pm0.0020$ \\
        ~ & 720 &$0.377\scriptstyle\pm0.037$ & $0.490\scriptstyle\pm0.038$ & $0.406\scriptstyle\pm0.072$ & $0.382\scriptstyle\pm0.066$ & $0.711\scriptstyle\pm0.081$ & $0.822\scriptstyle\pm0.034$ & $0.622\scriptstyle\pm0.045$ & $0.578\scriptstyle\pm0.051$ & $0.793\scriptstyle\pm0.034$ & $0.707\scriptstyle\pm0.050$ & $\underline{\textit{0.373}}\scriptstyle\pm0.032$ & $\textbf{0.3540}\scriptstyle\pm0.0040$ \\ \midrule

        \multirow{4}{*}{ETTm2-L} & 96 &$0.225\scriptstyle\pm0.039$ & $0.221\scriptstyle\pm0.039$ & $0.210\scriptstyle\pm0.040$ & $0.216\scriptstyle\pm0.035$ & $0.212\scriptstyle\pm0.082$ & $0.279\scriptstyle\pm0.031$ & $0.224\scriptstyle\pm0.033$ & $0.146\scriptstyle\pm0.012$ & $0.525\scriptstyle\pm0.047$ & $0.413\scriptstyle\pm0.059$ & $\underline{\textit{0.144}}\scriptstyle\pm0.011$ & $\textbf{0.1350}\scriptstyle\pm0.0010$ \\
        ~ & 192 &$0.233\scriptstyle\pm0.026$ & $0.229\scriptstyle\pm0.031$ & $0.234\scriptstyle\pm0.038$ & $0.215\scriptstyle\pm0.022$ & $0.535\scriptstyle\pm0.029$ & $0.292\scriptstyle\pm0.041$ & $0.316\scriptstyle\pm0.040$ & $0.189\scriptstyle\pm0.012$ & $0.530\scriptstyle\pm0.060$ & $0.427\scriptstyle\pm0.033$ & $\underline{\textit{0.170}}\scriptstyle\pm0.009$ & $\textbf{0.1600}\scriptstyle\pm0.0010$ \\
        ~ & 336 &$0.267\scriptstyle\pm0.023$ & $0.245\scriptstyle\pm0.049$ & $0.276\scriptstyle\pm0.019$ & $0.234\scriptstyle\pm0.024$ & $0.407\scriptstyle\pm0.043$ & $0.309\scriptstyle\pm0.032$ & $0.397\scriptstyle\pm0.051$ & $0.248\scriptstyle\pm0.024$ & $0.566\scriptstyle\pm0.047$ & $0.580\scriptstyle\pm0.169$ & $\underline{\textit{0.187}}\scriptstyle\pm0.021$ & $\textbf{0.1800}\scriptstyle\pm0.0010$ \\
        ~ & 720 &$0.290\scriptstyle\pm0.033$ & $0.385\scriptstyle\pm0.042$ & $0.540\scriptstyle\pm0.052$ & $0.288\scriptstyle\pm0.034$ & $0.355\scriptstyle\pm0.048$ & $0.475\scriptstyle\pm0.029$ & $0.416\scriptstyle\pm0.065$ & $0.306\scriptstyle\pm0.040$ & $0.561\scriptstyle\pm0.044$ & $0.749\scriptstyle\pm0.385$ & $\underline{\textit{0.275}}\scriptstyle\pm0.035$ & $\textbf{0.2060}\scriptstyle\pm0.0010$ \\ \midrule

        \multirow{4}{*}{ETTh1-L} & 96 &$0.407\scriptstyle\pm0.052$ & $0.386\scriptstyle\pm0.092$ & $0.393\scriptstyle\pm0.142$ & $0.407\scriptstyle\pm0.022$ & $0.371\scriptstyle\pm0.034$ & $0.423\scriptstyle\pm0.047$ & $0.510\scriptstyle\pm0.029$ & $0.557\scriptstyle\pm0.022$ & $0.585\scriptstyle\pm0.058$ & $0.481\scriptstyle\pm0.037$ & $\underline{\textit{0.336}}\scriptstyle\pm0.041$ & $\textbf{0.3150}\scriptstyle\pm0.0020$ \\
        ~ & 192 &$0.396\scriptstyle\pm0.022$ & $0.388\scriptstyle\pm0.041$ & $0.406\scriptstyle\pm0.079$ & $0.405\scriptstyle\pm0.088$ & $0.430\scriptstyle\pm0.022$ & $0.451\scriptstyle\pm0.012$ & $0.596\scriptstyle\pm0.056$ & $0.625\scriptstyle\pm0.065$ & $0.680\scriptstyle\pm0.058$ & $0.531\scriptstyle\pm0.018$ & $\underline{\textit{0.372}}\scriptstyle\pm0.023$ & $\textbf{0.3420}\scriptstyle\pm0.0020$ \\
        ~ & 336 &$0.406\scriptstyle\pm0.028$ & $0.415\scriptstyle\pm0.022$ & $0.410\scriptstyle\pm0.063$ & $0.412\scriptstyle\pm0.024$ & $0.462\scriptstyle\pm0.049$ & $0.481\scriptstyle\pm0.041$ & $0.581\scriptstyle\pm0.035$ & $0.574\scriptstyle\pm0.026$ & $0.666\scriptstyle\pm0.047$ & $0.580\scriptstyle\pm0.064$ & $\underline{\textit{0.394}}\scriptstyle\pm0.022$ & $\textbf{0.3610}\scriptstyle\pm0.0020$ \\
        ~ & 720 &$0.412\scriptstyle\pm0.008$ & $0.449\scriptstyle\pm0.022$ & $0.468\scriptstyle\pm0.012$ & $0.428\scriptstyle\pm0.024$ & $0.496\scriptstyle\pm0.019$ & $0.455\scriptstyle\pm0.025$ & $0.657\scriptstyle\pm0.017$ & $0.657\scriptstyle\pm0.014$ & $0.672\scriptstyle\pm0.015$ & $0.643\scriptstyle\pm0.046$ & $\underline{\textit{0.396}}\scriptstyle\pm0.012$ & $\textbf{0.3670}\scriptstyle\pm0.0020$ \\ \midrule

        \multirow{4}{*}{ETTh2-L} & 96 &$0.249\scriptstyle\pm0.015$ & $0.234\scriptstyle\pm0.011$ & $0.243\scriptstyle\pm0.009$ & $0.247\scriptstyle\pm0.028$ & $0.292\scriptstyle\pm0.012$ & $0.345\scriptstyle\pm0.042$ & $0.421\scriptstyle\pm0.033$ & $0.214\scriptstyle\pm0.018$ & $0.448\scriptstyle\pm0.031$ & $0.548\scriptstyle\pm0.158$ & $\underline{\textit{0.189}}\scriptstyle\pm0.010$ & $\textbf{0.1750}\scriptstyle\pm0.0040$ \\
        ~ & 192 &$0.249\scriptstyle\pm0.032$ & $0.247\scriptstyle\pm0.040$ & $0.252\scriptstyle\pm0.022$ & $0.265\scriptstyle\pm0.091$ & $0.376\scriptstyle\pm0.112$ & $0.343\scriptstyle\pm0.044$ & $0.339\scriptstyle\pm0.033$ & $0.294\scriptstyle\pm0.027$ & $0.575\scriptstyle\pm0.089$ & $0.766\scriptstyle\pm0.223$ & $\underline{\textit{0.213}}\scriptstyle\pm0.021$ & $\textbf{0.2030}\scriptstyle\pm0.0010$ \\
        ~ & 336 &$0.274\scriptstyle\pm0.027$ & $0.297\scriptstyle\pm0.029$ & $0.291\scriptstyle\pm0.032$ & $0.314\scriptstyle\pm0.045$ & $0.454\scriptstyle\pm0.057$ & $0.333\scriptstyle\pm0.078$ & $0.427\scriptstyle\pm0.041$ & $0.353\scriptstyle\pm0.028$ & $0.606\scriptstyle\pm0.095$ & $0.942\scriptstyle\pm0.408$ & $\underline{\textit{0.263}}\scriptstyle\pm0.039$ & $\textbf{0.2250}\scriptstyle\pm0.0010$ \\
        ~ & 720 &$0.286\scriptstyle\pm0.042$ & $0.667\scriptstyle\pm0.012$ & $0.401\scriptstyle\pm0.022$ & $0.371\scriptstyle\pm0.021$ & $1.092\scriptstyle\pm0.019$ & $0.412\scriptstyle\pm0.020$ & $0.482\scriptstyle\pm0.022$ & $0.382\scriptstyle\pm0.030$ & $0.550\scriptstyle\pm0.018$ & $0.688\scriptstyle\pm0.161$ & $\underline{\textit{0.278}}\scriptstyle\pm0.020$ & $\textbf{0.2290}\scriptstyle\pm0.0030$ \\ \midrule

        \multirow{4}{*}{Electricity-L} & 96 &$0.146\scriptstyle\pm0.015$ & $0.134\scriptstyle\pm0.002$ & $0.137\scriptstyle\pm0.002$ & $0.168\scriptstyle\pm0.012$ & $0.108\scriptstyle\pm0.009$ & $0.114\scriptstyle\pm0.010$ & $0.441\scriptstyle\pm0.013$ & $0.203\scriptstyle\pm0.189$ & $0.119\scriptstyle\pm0.003$ & $0.118\scriptstyle\pm0.003$ & $\underline{\textit{0.093}}\scriptstyle\pm0.002$ & $\textbf{0.0794}\scriptstyle\pm0.0005$ \\
        ~ & 192 &$0.143\scriptstyle\pm0.023$ & $0.137\scriptstyle\pm0.022$ & $0.143\scriptstyle\pm0.112$ & $0.163\scriptstyle\pm0.032$ & $0.120\scriptstyle\pm0.033$ & $0.131\scriptstyle\pm0.008$ & $0.441\scriptstyle\pm0.005$ & $0.264\scriptstyle\pm0.129$ & $0.124\scriptstyle\pm0.005$ & $0.121\scriptstyle\pm0.003$ & $\underline{\textit{0.102}}\scriptstyle\pm0.010$ & $\textbf{0.0896}\scriptstyle\pm0.0011$ \\
        ~ & 336 &$0.151\scriptstyle\pm0.017$ & $0.136\scriptstyle\pm0.002$ & $0.139\scriptstyle\pm0.002$ & $0.168\scriptstyle\pm0.010$ & $0.122\scriptstyle\pm0.018$ & - & $0.571\scriptstyle\pm0.022$ & - & $0.126\scriptstyle\pm0.008$ & $0.123\scriptstyle\pm0.001$ & $\underline{\textit{0.107}}\scriptstyle\pm0.002$ & $\textbf{0.0966}\scriptstyle\pm0.0007$ \\
        ~ & 720 &$0.149\scriptstyle\pm0.025$ & $0.140\scriptstyle\pm0.009$ & $0.149\scriptstyle\pm0.012$ & $0.164\scriptstyle\pm0.024$ & $0.136\scriptstyle\pm0.098$ & - & $0.622\scriptstyle\pm0.142$ & - & $0.134\scriptstyle\pm0.004$ & $0.144\scriptstyle\pm0.017$ & $\underline{\textit{0.117}}\scriptstyle\pm0.019$ & $\textbf{0.1085}\scriptstyle\pm0.0032$ \\ \midrule

        \multirow{4}{*}{Traffic-L} & 96 &$0.377\scriptstyle\pm0.024$ & $0.332\scriptstyle\pm0.008$ & $0.332\scriptstyle\pm0.007$ & $\textbf{0.228}\scriptstyle\pm0.010$ & $0.274\scriptstyle\pm0.012$ & $0.265\scriptstyle\pm0.007$ & $0.342\scriptstyle\pm0.042$ & - & $0.234\scriptstyle\pm0.006$ & $0.231\scriptstyle\pm0.003$ & $\underline{\textit{0.230}}\scriptstyle\pm0.010$ & $0.2450\scriptstyle\pm0.0010$ \\
        ~ & 192 &$0.388\scriptstyle\pm0.011$ & $0.326\scriptstyle\pm0.009$ & $0.350\scriptstyle\pm0.010$ & $\textbf{0.225}\scriptstyle\pm0.012$ & - & - & $0.354\scriptstyle\pm0.012$ & - & $0.239\scriptstyle\pm0.004$ & $0.236\scriptstyle\pm0.002$ & $\underline{\textit{0.234}}\scriptstyle\pm0.003$ & $0.2470\scriptstyle\pm0.0020$ \\
        ~ & 336 &$0.416\scriptstyle\pm0.028$ & $0.335\scriptstyle\pm0.010$ & $0.405\scriptstyle\pm0.011$ & $\underline{\textit{0.242}}\scriptstyle\pm0.022$ & - & - & $0.392\scriptstyle\pm0.006$ & - & $0.246\scriptstyle\pm0.003$ & $0.248\scriptstyle\pm0.006$ & $\textbf{0.242}\scriptstyle\pm0.007$ & $0.2450\scriptstyle\pm0.0030$ \\
        ~ & 720 &$0.432\scriptstyle\pm0.032$ & $0.361\scriptstyle\pm0.030$ & $0.453\scriptstyle\pm0.022$ & $0.253\scriptstyle\pm0.012$ & - & - & $0.478\scriptstyle\pm0.006$ & - & $0.263\scriptstyle\pm0.001$ & $0.264\scriptstyle\pm0.006$ & $\textbf{0.248}\scriptstyle\pm0.010$ & $\underline{\textit{0.2520}}\scriptstyle\pm0.0020$ \\ \midrule

        \multirow{4}{*}{Weather-L} & 96 &$0.146\scriptstyle\pm0.019$ & $0.144\scriptstyle\pm0.017$ & $0.279\scriptstyle\pm0.027$ & $0.145\scriptstyle\pm0.016$ & $0.176\scriptstyle\pm0.011$ & $0.139\scriptstyle\pm0.010$ & $0.113\scriptstyle\pm0.022$ & $0.087\scriptstyle\pm0.012$ & $0.164\scriptstyle\pm0.023$ & $0.145\scriptstyle\pm0.017$ & $\underline{\textit{0.086}}\scriptstyle\pm0.011$ & $\textbf{0.0850}\scriptstyle\pm0.0010$ \\
        ~ & 192 &$0.148\scriptstyle\pm0.022$ & $0.145\scriptstyle\pm0.015$ & $0.264\scriptstyle\pm0.013$ & $0.144\scriptstyle\pm0.012$ & $0.166\scriptstyle\pm0.022$ & $0.160\scriptstyle\pm0.037$ & $0.144\scriptstyle\pm0.020$ & $\underline{\textit{0.086}}\scriptstyle\pm0.007$ & $0.158\scriptstyle\pm0.024$ & $0.147\scriptstyle\pm0.025$ & $\textbf{0.083}\scriptstyle\pm0.011$ & $0.0880\scriptstyle\pm0.0010$ \\
        ~ & 336 &$0.152\scriptstyle\pm0.032$ & $0.146\scriptstyle\pm0.011$ & $0.283\scriptstyle\pm0.021$ & $0.149\scriptstyle\pm0.023$ & $0.168\scriptstyle\pm0.014$ & $0.170\scriptstyle\pm0.027$ & $0.138\scriptstyle\pm0.033$ & $0.098\scriptstyle\pm0.002$ & $0.162\scriptstyle\pm0.006$ & $0.160\scriptstyle\pm0.012$ & $\underline{\textit{0.093}}\scriptstyle\pm0.010$ & $\textbf{0.0910}\scriptstyle\pm0.0010$ \\
        ~ & 720 &$0.162\scriptstyle\pm0.009$ & $0.147\scriptstyle\pm0.019$ & $0.317\scriptstyle\pm0.021$ & $0.152\scriptstyle\pm0.029$ & $0.149\scriptstyle\pm0.034$ & $0.148\scriptstyle\pm0.040$ & $0.141\scriptstyle\pm0.026$ & $0.102\scriptstyle\pm0.005$ & $0.136\scriptstyle\pm0.020$ & $0.135\scriptstyle\pm0.008$ & $\underline{\textit{0.099}}\scriptstyle\pm0.009$ & $\textbf{0.0960}\scriptstyle\pm0.0010$ \\ \midrule

        \multirow{4}{*}{Exchange-L} & 96 &$0.079\scriptstyle\pm0.005$ & $0.077\scriptstyle\pm0.001$ & $0.069\scriptstyle\pm0.007$ & $0.079\scriptstyle\pm0.002$ & $0.033\scriptstyle\pm0.003$ & $0.036\scriptstyle\pm0.009$ & $0.090\scriptstyle\pm0.010$ & $0.036\scriptstyle\pm0.005$ & $0.079\scriptstyle\pm0.002$ & $0.091\scriptstyle\pm0.009$ & $\underline{\textit{0.032}}\scriptstyle\pm0.002$ & $\textbf{0.0250}\scriptstyle\pm0.0010$ \\
        ~ & 192 &$0.081\scriptstyle\pm0.015$ & $0.078\scriptstyle\pm0.008$ & $0.069\scriptstyle\pm0.007$ & $0.081\scriptstyle\pm0.002$ & $0.044\scriptstyle\pm0.004$ & $0.058\scriptstyle\pm0.007$ & $0.106\scriptstyle\pm0.010$ & $0.058\scriptstyle\pm0.005$ & $0.100\scriptstyle\pm0.019$ & $0.087\scriptstyle\pm0.005$ & $\underline{\textit{0.040}}\scriptstyle\pm0.005$ & $\textbf{0.0360}\scriptstyle\pm0.0020$ \\
        ~ & 336 &$0.086\scriptstyle\pm0.003$ & $0.083\scriptstyle\pm0.005$ & $0.071\scriptstyle\pm0.005$ & $0.085\scriptstyle\pm0.010$ & $0.074\scriptstyle\pm0.017$ & $0.058\scriptstyle\pm0.009$ & $0.106\scriptstyle\pm0.010$ & $0.076\scriptstyle\pm0.006$ & $0.086\scriptstyle\pm0.008$ & $0.091\scriptstyle\pm0.002$ & $\underline{\textit{0.054}}\scriptstyle\pm0.001$ & $\textbf{0.0520}\scriptstyle\pm0.0010$ \\
        ~ & 720 &$0.116\scriptstyle\pm0.022$ & $0.113\scriptstyle\pm0.015$ & $0.097\scriptstyle\pm0.011$ & $0.126\scriptstyle\pm0.001$ & $0.182\scriptstyle\pm0.010$ & $0.191\scriptstyle\pm0.006$ & $0.142\scriptstyle\pm0.009$ & $0.173\scriptstyle\pm0.020$ & $0.113\scriptstyle\pm0.016$ & $0.103\scriptstyle\pm0.009$ & $\underline{\textit{0.084}}\scriptstyle\pm0.017$ & $\textbf{0.0800}\scriptstyle\pm0.0000$ \\ \midrule

        \multirow{4}{*}{ILI-L} & 24 &$0.303\scriptstyle\pm0.021$ & $0.265\scriptstyle\pm0.027$ & $0.271\scriptstyle\pm0.032$ & $0.382\scriptstyle\pm0.018$ & $0.124\scriptstyle\pm0.019$ & $0.118\scriptstyle\pm0.033$ & $0.242\scriptstyle\pm0.086$ & $0.263\scriptstyle\pm0.012$ & $0.296\scriptstyle\pm0.044$ & $0.283\scriptstyle\pm0.001$ & $\underline{\textit{0.116}}\scriptstyle\pm0.011$ & $\textbf{0.1119}\scriptstyle\pm0.0094$ \\
        ~ & 36 &$0.262\scriptstyle\pm0.013$ & $0.222\scriptstyle\pm0.047$ & $0.258\scriptstyle\pm0.058$ & $0.286\scriptstyle\pm0.037$ & $0.144\scriptstyle\pm0.011$ & $0.143\scriptstyle\pm0.089$ & $0.246\scriptstyle\pm0.117$ & $0.298\scriptstyle\pm0.011$ & $0.298\scriptstyle\pm0.048$ & $0.307\scriptstyle\pm0.007$ & $\underline{\textit{0.142}}\scriptstyle\pm0.008$ & $\textbf{0.1140}\scriptstyle\pm0.0104$ \\
        ~ & 48 &$0.334\scriptstyle\pm0.028$ & $0.262\scriptstyle\pm0.023$ & $0.225\scriptstyle\pm0.043$ & $0.291\scriptstyle\pm0.032$ & $0.159\scriptstyle\pm0.020$ & $0.160\scriptstyle\pm0.039$ & $0.275\scriptstyle\pm0.044$ & $0.301\scriptstyle\pm0.034$ & $0.320\scriptstyle\pm0.025$ & $0.314\scriptstyle\pm0.009$ & $\underline{\textit{0.152}}\scriptstyle\pm0.017$ & $\textbf{0.1457}\scriptstyle\pm0.0078$ \\
        ~ & 60 &$0.288\scriptstyle\pm0.031$ & $0.278\scriptstyle\pm0.017$ & $0.245\scriptstyle\pm0.017$ & $0.287\scriptstyle\pm0.023$ & $0.216\scriptstyle\pm0.014$ & $0.183\scriptstyle\pm0.019$ & $0.272\scriptstyle\pm0.020$ & $0.299\scriptstyle\pm0.013$ & $0.325\scriptstyle\pm0.068$ & $0.333\scriptstyle\pm0.005$ & $\underline{\textit{0.167}}\scriptstyle\pm0.007$ & $\textbf{0.1584}\scriptstyle\pm0.0071$ \\ \bottomrule

        \multicolumn{12}{l}{Due to the excessive time and memory consumption, some results are unavailable in our implementation  and denoted as -.}
  \end{tabular}}
  \caption{Results of NMAE ($\textrm{mean}_{\textrm{std}}$) on long-term forecasting scenarios, each containing five independent runs with different seeds. The context length is set to 36 for the ILI-L dataset and 96 for the others. Lower NMAE values indicate better predictions. The means and standard errors are based on 5 independent runs of retraining and evaluation. \textbf{Bold}: the best, \underline{\textit{italics}}: the 2nd best.}
  \label{tab:long_term_fore_NMAE}
  \end{table*}

\begin{table*}[t!]
\centering
\resizebox{\textwidth}{!}{\begin{tabular}{cl | cccccc | cccccc}
\toprule
 & & \multicolumn{6}{c}{EnergyScore} & \multicolumn{6}{c}{CRPS} \\
\cmidrule(lr){3-8} \cmidrule(lr){9-14}
 & & \multicolumn{2}{c}{ETTm1} & \multicolumn{3}{c}{Exchange} & \multicolumn{1}{c}{Solar} & \multicolumn{2}{c}{ETTm1} & \multicolumn{3}{c}{Exchange} & \multicolumn{1}{c}{Solar} \\
\cmidrule(lr){3-4} \cmidrule(lr){5-7} \cmidrule(lr){8-8} \cmidrule(lr){9-10} \cmidrule(lr){11-13} \cmidrule(lr){14-14}
Type & Methods & 96 & 336 & 96 & 336 & 30 & 24 & 96 & 336 & 96 & 336 & 30 & 24 \\
\midrule
\multirow{2}{*}{\rotatebox{90}{}}
& MVE-1S & 0.6020{\scriptsize$\pm$0.0144} & 0.3832{\scriptsize$\pm$0.0078} & 0.0064{\scriptsize$\pm$0.0001} & \underline{\textit{0.0065}}{\scriptsize$\pm$0.0000} & \underline{\textit{0.0047}}{\scriptsize$\pm$0.0001} & 59.2827{\scriptsize$\pm$3.4358} & 0.2430{\scriptsize$\pm$0.0050} & 0.3301{\scriptsize$\pm$0.0066} & 0.0210{\scriptsize$\pm$0.0010} & 0.0399{\scriptsize$\pm$0.0006} & 0.0077{\scriptsize$\pm$0.0002} & 0.4774{\scriptsize$\pm$0.0156} \\
& K2VAE & 0.4990{\scriptsize$\pm$0.0103} & 0.3079{\scriptsize$\pm$0.0014} & 0.0099{\scriptsize$\pm$0.0009} & 0.0082{\scriptsize$\pm$0.0009} & 0.0066{\scriptsize$\pm$0.0002} & 49.0566{\scriptsize$\pm$0.9311} & 0.2359{\scriptsize$\pm$0.0072} & 0.2698{\scriptsize$\pm$0.0011} & 0.0322{\scriptsize$\pm$0.0027} & 0.0519{\scriptsize$\pm$0.0043} & 0.0106{\scriptsize$\pm$0.0004} & \textbf{0.4129}{\scriptsize$\pm$0.0167} \\
\midrule
\multirow{6}{*}{\rotatebox{90}{2-stage}}
& MVE-2S & 0.4697{\scriptsize$\pm$0.0021} & 0.2997{\scriptsize$\pm$0.0015} & 0.0063{\scriptsize$\pm$0.0001} & \textbf{0.0064}{\scriptsize$\pm$0.0000} & 0.0048{\scriptsize$\pm$0.0000} & 48.6663{\scriptsize$\pm$0.2327} & 0.2117{\scriptsize$\pm$0.0009} & 0.2491{\scriptsize$\pm$0.0008} & 0.0204{\scriptsize$\pm$0.0000} & 0.0391{\scriptsize$\pm$0.0001} & \textbf{0.0074}{\scriptsize$\pm$0.0001} & 0.4705{\scriptsize$\pm$0.0035} \\
& TORF & 0.4651{\scriptsize$\pm$0.0010} & \underline{\textit{0.2933}}{\scriptsize$\pm$0.0002} & \textbf{0.0061}{\scriptsize$\pm$0.0000} & \textbf{0.0064}{\scriptsize$\pm$0.0000} & \underline{\textit{0.0047}}{\scriptsize$\pm$0.0000} & 48.2329{\scriptsize$\pm$0.0590} & 0.2088{\scriptsize$\pm$0.0011} & 0.2457{\scriptsize$\pm$0.0005} & \textbf{0.0198}{\scriptsize$\pm$0.0001} & \textbf{0.0382}{\scriptsize$\pm$0.0001} & 0.0077{\scriptsize$\pm$0.0000} & \underline{\textit{0.4558}}{\scriptsize$\pm$0.0020} \\
& \quad w RealNVP & 0.4674{\scriptsize$\pm$0.0008} & 0.2956{\scriptsize$\pm$0.0008} & 0.0063{\scriptsize$\pm$0.0002} & 0.0066{\scriptsize$\pm$0.0000} & \textbf{0.0046}{\scriptsize$\pm$0.0002} & 48.4335{\scriptsize$\pm$0.1224} & 0.2101{\scriptsize$\pm$0.0004} & 0.2517{\scriptsize$\pm$0.0036} & 0.0210{\scriptsize$\pm$0.0007} & 0.0412{\scriptsize$\pm$0.0001} & 0.0077{\scriptsize$\pm$0.0002} & 0.4774{\scriptsize$\pm$0.0156} \\
& \quad w Spline & \underline{\textit{0.4587}}{\scriptsize$\pm$0.0028} & 0.2938{\scriptsize$\pm$0.0006} & 0.0065{\scriptsize$\pm$0.0002} & \underline{\textit{0.0065}}{\scriptsize$\pm$0.0001} & \underline{\textit{0.0047}}{\scriptsize$\pm$0.0000} & 48.2329{\scriptsize$\pm$0.0590} & \underline{\textit{0.2051}}{\scriptsize$\pm$0.0009} & 0.2452{\scriptsize$\pm$0.0010} & 0.0213{\scriptsize$\pm$0.0007} & 0.0394{\scriptsize$\pm$0.0007} & 0.0078{\scriptsize$\pm$0.0001} & \underline{\textit{0.4558}}{\scriptsize$\pm$0.0020} \\
& \quad w OddRealNVP & 0.4656{\scriptsize$\pm$0.0018} & 0.2946{\scriptsize$\pm$0.0003} & \underline{\textit{0.0062}}{\scriptsize$\pm$0.0000} & \textbf{0.0064}{\scriptsize$\pm$0.0000} & \textbf{0.0046}{\scriptsize$\pm$0.0001} & \underline{\textit{48.1577}}{\scriptsize$\pm$0.1690} & 0.2086{\scriptsize$\pm$0.0004} & \underline{\textit{0.2450}}{\scriptsize$\pm$0.0009} & \underline{\textit{0.0203}}{\scriptsize$\pm$0.0001} & 0.0397{\scriptsize$\pm$0.0000} & \textbf{0.0074}{\scriptsize$\pm$0.0001} & 0.4588{\scriptsize$\pm$0.0029} \\
& \quad w Mixture & \textbf{0.4559}{\scriptsize$\pm$0.0007} & \textbf{0.2930}{\scriptsize$\pm$0.0006} & \underline{\textit{0.0062}}{\scriptsize$\pm$0.0001} & \textbf{0.0064}{\scriptsize$\pm$0.0001} & 0.0048{\scriptsize$\pm$0.0001} & \textbf{48.0786}{\scriptsize$\pm$0.0729} & \textbf{0.2038}{\scriptsize$\pm$0.0005} & \textbf{0.2436}{\scriptsize$\pm$0.0004} & 0.0208{\scriptsize$\pm$0.0002} & \underline{\textit{0.0390}}{\scriptsize$\pm$0.0000} & \underline{\textit{0.0076}}{\scriptsize$\pm$0.0001} & 0.4653{\scriptsize$\pm$0.0038} \\
\bottomrule
\end{tabular}}
\caption{Energy Score and CRPS results. \textbf{Bold} = best (lowest); \underline{\textit{italic}} = second best.}
\label{tab:energyscore_crps:appendix}
\end{table*}

\begin{table}[t]
\centering
\resizebox{\columnwidth}{!}{\begin{tabular}{l c c cc}
\toprule
& ETTm1 & \multicolumn{2}{c}{Exchange} & {Solar} \\
\cmidrule(lr){2-2} \cmidrule(lr){3-4} \cmidrule(lr){5-5}
Method & 336 & 336 & 30 & 24 \\
\midrule
TORF-1S & 0.5122{\scriptsize$\pm$0.0079} & 0.1390{\scriptsize$\pm$0.0032} & 0.1430{\scriptsize$\pm$0.0005} & 0.6078{\scriptsize$\pm$0.0269} \\
TORF & \textbf{0.2457}{\scriptsize$\pm$0.0005} & \textbf{0.0382}{\scriptsize$\pm$0.0001} & \textbf{0.0077}{\scriptsize$\pm$0.0000} & \textbf{0.4558}{\scriptsize$\pm$0.0020} \\
\quad w/o \scl{} & 0.2508{\scriptsize$\pm$0.0015} & 0.0399{\scriptsize$\pm$0.0018} & 0.0081{\scriptsize$\pm$0.0003} & 0.4670{\scriptsize$\pm$0.0031} \\
\quad w/o CNN-ROSS & \underline{\textit{0.2483}}{\scriptsize$\pm$0.0004} & \underline{\textit{0.0383}}{\scriptsize$\pm$0.0000} & \underline{\textit{0.0078}}{\scriptsize$\pm$0.0001} & \textbf{0.4558}{\scriptsize$\pm$0.0020} \\
\bottomrule
\end{tabular}}
\caption{CRPS results with standard deviation over 5 runs.}
\label{tab:ablation:appendix}
\end{table}

\begin{table*}[t]
\centering
\resizebox{\textwidth}{!}{\begin{tabular}{l l r c c r p{7cm}}
\toprule
Horizon & Dataset & \#var. & Range & Freq. & Timesteps & Description \\
\midrule
\multirow{7}{*}{Long-term}
& ETTh1-L / ETTh2-L & 7 & $\mathbb{R}^+$ & H & 17{,}420 & Electricity transformer temperature, recorded hourly \\
& ETTm1-L / ETTm2-L & 7 & $\mathbb{R}^+$ & 15min & 69{,}680 & Electricity transformer temperature, recorded every 15 minutes \\
& Electricity-L & 321 & $\mathbb{R}^+$ & H & 26{,}304 & Household electricity consumption (kWh) \\
& Traffic-L & 862 & $(0,1)$ & H & 17{,}544 & Road occupancy rates \\
& Exchange-L & 8 & $\mathbb{R}^+$ & Business day & 7{,}588 & Daily exchange rates for 8 countries \\
& ILI-L & 7 & $(0,1)$ & W & 966 & Fraction of patients presenting influenza-like illness \\
& Weather-L & 21 & $\mathbb{R}^+$ & 10min & 52{,}696 & Local climatological measurements \\
\midrule
\multirow{6}{*}{Short-term}
& ETTh1-S / ETTh2-S & 7 & $\mathbb{R}^+$ & H & 17{,}420 & Electricity transformer temperature, recorded hourly \\
& ETTm1-S / ETTm2-S & 7 & $\mathbb{R}^+$ & 15min & 69{,}680 & Electricity transformer temperature, recorded every 15 minutes \\
& Exchange-S & 8 & $\mathbb{R}^+$ & Business day & 6{,}071 & Daily exchange rates for 8 countries \\
& Solar-S & 137 & $\mathbb{R}^+$ & H & 7{,}009 & Solar power production records \\
& Electricity-S & 370 & $\mathbb{R}^+$ & H & 5{,}833 & Household electricity consumption \\
& Traffic-S & 963 & $(0,1)$ & H & 4{,}001 & Road occupancy rates \\
\bottomrule
\end{tabular}}
\caption{Statistical information of the datasets.}
\label{tab:dataset_stats}
\end{table*}

\begin{table}[t]
\centering
\resizebox{\columnwidth}{!}{\begin{tabular}{cl cccc}
\toprule
 & & ETTm1 & \multicolumn{2}{c}{Exchange} & Solar \\
\cmidrule(lr){3-3} \cmidrule(lr){4-5} \cmidrule(lr){6-6}
Type & Methods & 336 & 336 & 30 & 24 \\
\midrule
\multirow{2}{*}{\rotatebox{90}{1-stage}}
& TORF-1S & 0.512{\scriptsize$\pm$0.008} & 0.169{\scriptsize$\pm$0.003} & 0.188{\scriptsize$\pm$0.000} & 0.834{\scriptsize$\pm$0.008} \\
& MVE-1S & 0.349{\scriptsize$\pm$0.006} & 0.056{\scriptsize$\pm$0.001} & 0.011{\scriptsize$\pm$0.000} & 0.599{\scriptsize$\pm$0.011} \\
& K2VAE & 0.339{\scriptsize$\pm$0.002} & 0.055{\scriptsize$\pm$0.002} & 0.011{\scriptsize$\pm$0.000} & \textbf{0.531}{\scriptsize$\pm$0.018} \\
\midrule
\multirow{4}{*}{\rotatebox{90}{2-stage}}
& MVE-2S & \textbf{0.314}{\scriptsize$\pm$0.002} & \textbf{0.051}{\scriptsize$\pm$0.000} & \textbf{0.010}{\scriptsize$\pm$0.000} & \underline{\textit{0.583}}{\scriptsize$\pm$0.025} \\
& Odd TORF & \textbf{0.314}{\scriptsize$\pm$0.002} & \textbf{0.051}{\scriptsize$\pm$0.000} & \textbf{0.010}{\scriptsize$\pm$0.000} & \underline{\textit{0.583}}{\scriptsize$\pm$0.025} \\
& \quad w RealNVP & \underline{\textit{0.315}}{\scriptsize$\pm$0.001} & \underline{\textit{0.052}}{\scriptsize$\pm$0.001} & \textbf{0.010}{\scriptsize$\pm$0.000} & 0.630{\scriptsize$\pm$0.001} \\
& \quad w Spline & 0.319{\scriptsize$\pm$0.001} & 0.052{\scriptsize$\pm$0.000} & \textbf{0.010}{\scriptsize$\pm$0.000} & 0.629{\scriptsize$\pm$0.002} \\
\bottomrule
\end{tabular}}
\caption{NMAE results with standard deviation over 5 runs.}
\label{tab:nmae:appendix}
\end{table}

\section{All Baselines - Main results}
\label{sec:mainResults_allBaselines}

\Cref{tab: long-term:appendix} and \Cref{tab: short-term:appendix} report the full probabilistic forecasting results on all the 12 baselines. The best result in each column is shown in \textbf{bold} and the second-best is shown in \underline{\textit{italics}}. We highlight the following results below.

In the long-term setting, \model{}\ wins on CRPS for 7 of 9 datasets and on NMAE for all 9 datasets. The largest gains reach $+35.4\%$ on CRPS (ETTh2-L) and $+25.1\%$ on NMAE (ETTm2-L). In the short-term setting, \model{}\ wins on CRPS for 6 of 8 datasets and on NMAE for 5 of 8 datasets. The largest gains reach $+30.3\%$ on CRPS (ETTm2-S) and $+28.1\%$ on NMAE (ETTm2-S). This reflects the advantage of the two-stage design. \model{}\ inherits an accurate, analytically exact mean from Stage~1. It then models a flexible residual distribution around that same mean, without resorting to expensive sampling.

Looking beyond $K^2$VAE, \model{}\ also dominates every other baseline in
\Cref{tab: long-term:appendix,tab: short-term:appendix}. In the long-term
setting, \model{}\ beats every non-$K^2$VAE baseline on both CRPS and NMAE
for all 9 datasets, a clean 9/9 sweep. No single alternative is a
consistent runner-up across datasets, either: Koopa is the closest
competitor on three of the four ETT variants (ETTm1-L, ETTh1-L, ETTh2-L),
PatchTST on ETTm2-L, GRU~MAF on Electricity-L, GRU~NVP on Traffic-L etc. Each
general-purpose flow, diffusion, or point-forecaster-turned-probabilistic
baseline is only competitive on a narrow subset of dataset types, whereas
\model{}\ is uniformly strong across all of them.

In the short-term setting, against whichever baseline is actually
strongest in each column (not always $K^2$VAE), \model{}\ still wins 6 of 8
columns on CRPS and 5 of 8 on NMAE, losing only on Solar-S (CRPS and
NMAE), Electricity-S (CRPS and NMAE), and Traffic-S NMAE alone by a
$2.6\%$ margin (\model{}\ still wins Traffic-S CRPS by $2.3\%$). Notably,
on Electricity-S the strongest baseline overall is CSDI ($0.051$ CRPS,
$0.066$ NMAE), not $K^2$VAE ($0.053$, $0.068$, the latter exactly tied
with \model{}\ on CRPS); the reported improvement percentages already
reflect this, since they are computed against whichever baseline is
strongest per column. CSDI, a diffusion-based method, is the strongest
baseline specifically on Electricity-S, but is not competitive elsewhere
in either table, underscoring the same pattern as the long-term setting.
Individual alternative methods win narrow, dataset-specific pockets, while
\model{}\ is competitive across the full spread of
short- and long-horizon, univariate and highly multivariate datasets.

\section{All Horizon Results}
\label{sec:ablation:allhorizonresults}

We extend the analysis to include~\Cref{tab:long_term_fore_CRPS} and~\Cref{tab:long_term_fore_NMAE}, where we compare the performance of \model{}
across all the datasets, horizons and baselines for long-horizon forecasting.

Across all four horizons, \model{}'s advantage is stable rather than an
artifact of the $H{=}720$ headline comparison: on the four ETT variants,
Exchange-L, and ILI-L, \model{}\ is the best or second-best CRPS at every
horizon, and the outright best NMAE at every horizon without exception
and on several of these, its margin over $K^2$VAE actually \emph{widens}
with horizon rather than shrinking (e.g.\ ETTh2-L). This reflects a
structural advantage of \model{}'s two-stage design: SimpleTM is trained
purely to produce the best point forecast as Stage~1, whereas the
probabilistic baselines' point predictions are only a byproduct of a
density-estimation objective they were never trained to optimize.

The two datasets where \model{}\ actually struggles show clear,
horizon-dependent patterns. On \textbf{Traffic-L}, \model{}\ trails both
$K^2$VAE and GRU~NVP on CRPS at the shorter horizons ($H{=}96,192$; NMAE
ranks it 3rd there too), but its ranking improves steadily with horizon:
it overtakes $K^2$VAE outright at $H{=}336$ and is a close 2nd by
$H{=}720$. On \textbf{Weather-L}, CSDI, and not $K^2$VAE is the CRPS
leader at $H{=}96,192$, but \model{}\ overtakes it by $H{=}336,720$. NMAE
follows almost the same pattern, with a single-horizon dip to 3rd at
$H{=}192$. \textbf{Electricity-L} runs the other way on CRPS: \model{}\
wins clearly at $H{=}96,192$ but trails $K^2$VAE at $H{=}336,720$,
even though \model{}'s NMAE stays best across all four Electricity-L
horizons.

\section{Results with Standard Deviations}

\subsubsection{Probabilistic Analysis}

We extend the ablation with a shorter horizon ($H{=}96$) for ETTm1 and
Exchange, in addition to the existing horizon-336 settings, comparing
\model{}\ and its variants against $K^2$VAE~\citep{wu2025kvae} and the
Gaussian MVE-1S/MVE-2S baselines described earlier.
\Cref{tab:energyscore_crps:appendix} reports Energy Score, which measures
joint distribution quality, and CRPS, which measures per-channel marginal
quality, together with standard deviations across the 5 independent runs.

\model{}\ and its variants are state-of-the-art on 11 of the 12 columns.
The one exception is Solar-24 CRPS, where $K^2$VAE remains best,
consistent with the Stage-1 SimpleTM limitation on Solar-S already noted
in \Cref{tab: short-term:appendix}.

Among the 1-stage baselines, MVE-1S beats $K^2$VAE on Exchange on both
Energy Score and CRPS, even though their NMAE is comparable
(\Cref{tab:nmae:appendix}), so the gap cannot come from point-forecast
accuracy; it suggests Exchange's residual is close to Gaussian, a shape
$K^2$VAE's more complex structure fails to exploit. MVE-2S already beats
$K^2$VAE on 11 of the 12 columns, losing only on the same Solar-24 CRPS
column every method loses on. Comparing MVE-1S against MVE-2S shows that
training the mean and the distributional head sequentially, rather than
jointly, avoids the optimization pitfalls of combined NLL training,
confirming the two-stage approach is the better training strategy even
before any flow is introduced.

Among the \model{}\ variants, all are state-of-the-art on 11 of 12 tasks,
losing only the same Solar-24 CRPS column. \model{}\ and TORF-Mixture are
the strongest, winning 4/12 and 6/12 columns respectively. $K^2$VAE wins
the univariate Solar-24 CRPS, but every \model{}\ variant still beats it
on the corresponding multivariate Solar-24 Energy Score, evidence of
better joint distributional modeling even where the marginal residual is
harder to fit.

TORF-Mixture wins the most columns overall, but its margin over \model{}\
is mostly insignificant except on Solar-24 and ETTm1-96, and, as
\Cref{tab:resource_trimmed} shows, it costs substantially more
compute. We therefore propose plain \model{}\ as the default, with
TORF-Mixture as the better choice where the infrastructure allows, since
it can model asymmetric residual distributions that a single odd spline
cannot.

Among the non-odd, non-mixture alternatives, TORF-w/Spline wins more
columns than TORF-w/RealNVP. TORF-w/OddRealNVP is a compelling baseline on
its own, winning outright on 3/12 columns using only an affine transform,
 reaffirming the odd residual flow as a strong, resource-efficient
choice for distribution modeling.

\begin{table}[t]
\centering
\resizebox{\columnwidth}{!}{\begin{tabular}{ll cc cc}
\toprule
& & \multicolumn{2}{c}{NMAE} & \multicolumn{2}{c}{CRPS} \\
\cmidrule(lr){3-4} \cmidrule(lr){5-6}
Dataset & Horizon & iTrans & SimpleTM & iTrans & SimpleTM \\
\midrule
\multirow{2}{*}{ETTm1} & 96 & 0.2762{\scriptsize$\pm$0.0048} & \textbf{0.263}{\scriptsize$\pm$0.001} & 0.2209{\scriptsize$\pm$0.0011} & \textbf{0.2088}{\scriptsize$\pm$0.0011} \\
& 336 & 0.3211{\scriptsize$\pm$0.0042} & \textbf{0.314}{\scriptsize$\pm$0.002} & 0.2581{\scriptsize$\pm$0.0003} & \textbf{0.2457}{\scriptsize$\pm$0.0005} \\
\multirow{2}{*}{Exchange} & 96 & \textbf{0.0241}{\scriptsize$\pm$0.0012} & 0.025{\scriptsize$\pm$0.001} & \textbf{0.0186}{\scriptsize$\pm$0.0001} & 0.0198{\scriptsize$\pm$0.0001} \\
& 336 & \textbf{0.0449}{\scriptsize$\pm$0.0007} & 0.052{\scriptsize$\pm$0.001} & \textbf{0.03532}{\scriptsize$\pm$0.0001} & 0.0382{\scriptsize$\pm$0.0001} \\
\bottomrule
\end{tabular}}
\caption{First stage method comparison across datasets and horizons (NMAE and CRPS, mean with std in scriptsize). Best per column in \textbf{bold}.}
\label{tab:backbone_transfer}
\end{table}

\section{Backbone Transfer Analysis}
\label{sec:appendix:stage1Transfer}
We test whether \model{}'s flow hyperparameters need to be re-tuned whenever the Stage-1 point forecaster changes. We take the flow configuration
(including the number of \ross\ blocks $K$) tuned for \model{}\ with SimpleTM as Stage-1, and reuse it unchanged with iTransformer
substituted as Stage-1, without any additional hyperparameter search for the flow. \Cref{tab:backbone_transfer} reports both point-forecast
accuracy (NMAE) and probabilistic accuracy (CRPS) for both backbones on ETTm1 and Exchange.

The pattern is consistent across all four dataset-horizon combinations. Whichever backbone is the better point forecaster is also the one that
yields the better CRPS once plugged into the unchanged flow. On ETTm1, SimpleTM has the lower NMAE at both horizons and also the lower CRPS. On Exchange the ranking flips. iTransformer is the better point forecaster and correspondingly achieves the better CRPS.
Simply swapping in whichever backbone is the stronger point forecaster for the dataset at hand, without re-tuning \model{}'s flow hyperparameters, is enough to improve probabilistic performance.

This indicates that \model{}'s flow hyperparameters transfer across different pretrained Stage-1 models without requiring their own expensive re-tuning.
Probabilistic performance tracks the quality of whichever point forecaster is plugged in, rather than being tied to the specific backbone the flow
happened to be tuned on.

\begin{table}[t]
\centering
\footnotesize
\setlength{\tabcolsep}{1mm}
\resizebox{\columnwidth}{!}{\begin{tabular}{l c c c c c c c c}
\toprule
Dataset & $H$ & LR & Weight decay & $C_{\mathrm{hid}}$ & $K$ & $N_b$ & $k$ & $k_f$ \\
\midrule
ETTh1-S & 24 & $3.44\times10^{-4}$ & $2.42\times10^{-4}$ & 32 & 8 & 16 & 4 & 8 \\
ETTh2-S & 24 & $5.28\times10^{-5}$ & $4.86\times10^{-4}$ & 512 & 8 & 16 & 4 & 8 \\
ETTm1-S & 24 & $4.13\times10^{-4}$ & $8.19\times10^{-3}$ & 256 & 8 & 32 & 7 & 4 \\
ETTm2-S & 24 & $7.91\times10^{-4}$ & $4.85\times10^{-3}$ & 256 & 8 & 32 & 7 & 4 \\
Electricity-S & 24 & $3.18\times10^{-4}$ & $8.97\times10^{-3}$ & 512 & 8 & 16 & 7 & 4 \\
Traffic-S & 24 & $3.87\times10^{-4}$ & $9.48\times10^{-3}$ & 256 & 8 & 32 & 7 & 4 \\
Solar-S & 24 & $6.59\times10^{-4}$ & $8.39\times10^{-6}$ & 32 & 2 & 32 & 7 & 4 \\
Exchange-S & 30 & $9.69\times10^{-4}$ & $7.97\times10^{-4}$ & 128 & 0 & 32 & 2 & 16 \\
\bottomrule
\end{tabular}}
\caption{Best hyperparameters selected by Optuna for each short-term dataset (horizon $H$).}
\label{tab:hp_short_term}
\end{table}

\begin{table}[t]
\centering
\footnotesize
\setlength{\tabcolsep}{1mm}
\resizebox{\columnwidth}{!}{\begin{tabular}{l c c c c c c c c}
\toprule
Dataset & $H$ & LR & Weight decay & $C_{\mathrm{hid}}$ & $K$ & $N_b$ & $k$ & $k_f$ \\
\midrule
\multirow{4}{*}{ETTh1-L}
 & 96 & $2.16\times10^{-4}$ & $3.61\times10^{-6}$ & 64 & 2 & 16 & 25 & 4 \\
 & 192 & $1.08\times10^{-4}$ & $1.20\times10^{-5}$ & 64 & 2 & 16 & 13 & 16 \\
 & 336 & $1.24\times10^{-4}$ & $6.52\times10^{-5}$ & 128 & 1 & 16 & 43 & 8 \\
 & 720 & $7.93\times10^{-4}$ & $8.59\times10^{-3}$ & 256 & 0 & 32 & 46 & 16 \\
\midrule
\multirow{4}{*}{ETTh2-L}
 & 96 & $4.28\times10^{-5}$ & $2.49\times10^{-3}$ & 256 & 8 & 32 & 25 & 4 \\
 & 192 & $9.29\times10^{-4}$ & $1.16\times10^{-4}$ & 32 & 1 & 32 & 7 & 32 \\
 & 336 & $3.70\times10^{-4}$ & $1.97\times10^{-5}$ & 64 & 1 & 16 & 43 & 8 \\
 & 720 & $3.16\times10^{-6}$ & $1.67\times10^{-4}$ & 512 & 8 & 4 & 46 & 16 \\
\midrule
\multirow{4}{*}{ETTm1-L}
 & 96 & $7.15\times10^{-4}$ & $7.98\times10^{-3}$ & 256 & 8 & 32 & 25 & 4 \\
 & 192 & $7.15\times10^{-4}$ & $7.98\times10^{-3}$ & 256 & 8 & 32 & 49 & 4 \\
 & 336 & $3.09\times10^{-5}$ & $1.07\times10^{-6}$ & 512 & 8 & 32 & 85 & 4 \\
 & 720 & $1.24\times10^{-4}$ & $6.52\times10^{-5}$ & 128 & 1 & 16 & 91 & 8 \\
\midrule
\multirow{4}{*}{ETTm2-L}
 & 96 & $2.65\times10^{-4}$ & $2.47\times10^{-3}$ & 32 & 8 & 32 & 13 & 8 \\
 & 192 & $6.36\times10^{-5}$ & $2.00\times10^{-3}$ & 256 & 8 & 16 & 25 & 8 \\
 & 336 & $6.66\times10^{-5}$ & $1.37\times10^{-6}$ & 256 & 8 & 16 & 22 & 16 \\
 & 720 & $1.59\times10^{-5}$ & $8.64\times10^{-6}$ & 128 & 8 & 16 & 91 & 8 \\
\midrule
\multirow{4}{*}{Electricity-L}
 & 96 & $3.15\times10^{-4}$ & $1.42\times10^{-6}$ & 128 & 8 & 16 & 13 & 8 \\
 & 192 & $1.35\times10^{-4}$ & $7.99\times10^{-6}$ & 32 & 8 & 4 & 25 & 8 \\
 & 336 & $3.35\times10^{-5}$ & $1.24\times10^{-4}$ & 128 & 8 & 8 & 85 & 4 \\
 & 720 & $1.18\times10^{-4}$ & $1.48\times10^{-6}$ & 128 & 8 & 4 & 91 & 8 \\
\midrule
\multirow{4}{*}{Traffic-L}
 & 96 & $3.07\times10^{-4}$ & $1.30\times10^{-6}$ & 512 & 8 & 32 & 25 & 4 \\
 & 192 & $9.69\times10^{-4}$ & $1.67\times10^{-4}$ & 32 & 4 & 32 & 7 & 32 \\
 & 336 & $3.46\times10^{-4}$ & $1.04\times10^{-6}$ & 128 & 1 & 16 & 43 & 8 \\
 & 720 & $3.46\times10^{-4}$ & $1.04\times10^{-6}$ & 128 & 1 & 16 & 91 & 8 \\
\midrule
\multirow{4}{*}{Exchange-L}
 & 96 & $1.58\times10^{-5}$ & $2.79\times10^{-6}$ & 512 & 4 & 8 & 4 & 32 \\
 & 192 & $9.95\times10^{-4}$ & $8.19\times10^{-3}$ & 256 & 0 & 32 & 13 & 16 \\
 & 336 & $3.53\times10^{-5}$ & $1.89\times10^{-6}$ & 512 & 4 & 4 & 11 & 32 \\
 & 720 & $4.89\times10^{-4}$ & $2.95\times10^{-5}$ & 64 & 2 & 32 & 181 & 4 \\
\midrule
\multirow{4}{*}{Weather-L}
 & 96 & $9.60\times10^{-4}$ & $2.37\times10^{-4}$ & 256 & 8 & 16 & 13 & 8 \\
 & 192 & $4.89\times10^{-4}$ & $2.95\times10^{-5}$ & 64 & 2 & 32 & 49 & 4 \\
 & 336 & $3.82\times10^{-4}$ & $8.19\times10^{-3}$ & 256 & 8 & 32 & 85 & 4 \\
 & 720 & $2.35\times10^{-4}$ & $8.39\times10^{-6}$ & 32 & 2 & 32 & 91 & 8 \\
\midrule
\multirow{4}{*}{ILI-L}
 & 24 & $9.79\times10^{-4}$ & $2.82\times10^{-4}$ & 512 & 0 & 4 & 1 & 32 \\
 & 36 & $9.69\times10^{-4}$ & $4.01\times10^{-4}$ & 256 & 4 & 16 & 2 & 32 \\
 & 48 & $9.95\times10^{-4}$ & $4.58\times10^{-4}$ & 256 & 0 & 32 & 7 & 8 \\
 & 60 & $3.77\times10^{-4}$ & $1.97\times10^{-6}$ & 32 & 8 & 32 & 8 & 8 \\
\bottomrule
\end{tabular}}
\caption{Best hyperparameters selected by Optuna for each long-term dataset, per forecasting horizon $H$.}
\label{tab:hp_long_term}
\end{table}

\section{Hyperparameter selection and Implementation details}
\label{sec:appendix:hpselectImplementation}

For each dataset--horizon pair, we tune the learning rate, weight decay,
SplineNet-Conv1D hidden width $C_{\mathrm{hid}}$, number of
\ross{} blocks $K$, number of spline bins $N_b$, and kernel size $k$ via
Optuna described in the main text (20 trials per
configuration). We use AdamW optimizer with weight decay for training.
\Cref{tab:hp_short_term,tab:hp_long_term} report the
resulting best configuration for every short-term and long-term
dataset--horizon pair, respectively. Each selected configuration is then
retrained and evaluated with five seeds ($\{0,1,2,3,4\}$), and we report
the mean and standard deviation across these runs throughout the paper.

\paragraph{Search space.} Each trial draws the learning rate and weight
decay log-uniformly from $[10^{-6}, 10^{-3}]$ and $[10^{-6}, 10^{-2}]$
respectively, and the remaining flow hyperparameters from fixed
categorical grids: $C_{\mathrm{hid}} \in \{32, 64, 128, 256, 512\}$,
$K \in \{0, 1, 2, 4, 8\}$, and $N_b \in \{4, 8, 16, 32\}$. SplineNet's
Conv1D kernel size is not tuned directly; instead Optuna selects a kernel
factor $k_f \in \{4, 8, 16, 32\}$, and the kernel size is derived as
$k = \lfloor H/k_f \rfloor + 1$ for horizon $H$, so it automatically scales
with the forecast length. \Cref{tab:hp_short_term,tab:hp_long_term} report
both the derived $k$ and the underlying tuned $k_f$ for every selected
configuration.

\paragraph{Compute infrastructure.} All experiments ran on an internal
Linux SLURM cluster using Python~3.10 and PyTorch~2.13, spanning several
GPU generations, from NVIDIA GTX~1080~Ti (11\,GB) up to NVIDIA
A40 (48\,GB). Jobs were assigned by memory requirement rather than
uniformly at random: short-horizon runs and datasets with few channels
ran on the smaller-VRAM 1080~Ti/2080~Ti/A4000 nodes, while the two
highest-channel-count datasets like Electricity (321 channels long-term,
370 short-term) and Traffic (862 long-term, 963 short-term) and every
$H{=}720$ run, whose per-batch tensors are the largest in the benchmark,
were scheduled on the higher-VRAM 3090/4090/A40 nodes (24--48\,GB) to
avoid out-of-memory failures. For the memory requirement and inference time
analysis (ablation), all the experiments were run on the same NVIDIA RTX 2070 SUPER with
8 GB of memory for comparability.

\section{Visual Analysis}
\label{sec:appendix:visualanalysis}

\begin{figure*}[t]
    \centering
    \begin{subfigure}{0.24\textwidth}
        \centering
        \includegraphics[width=\linewidth]{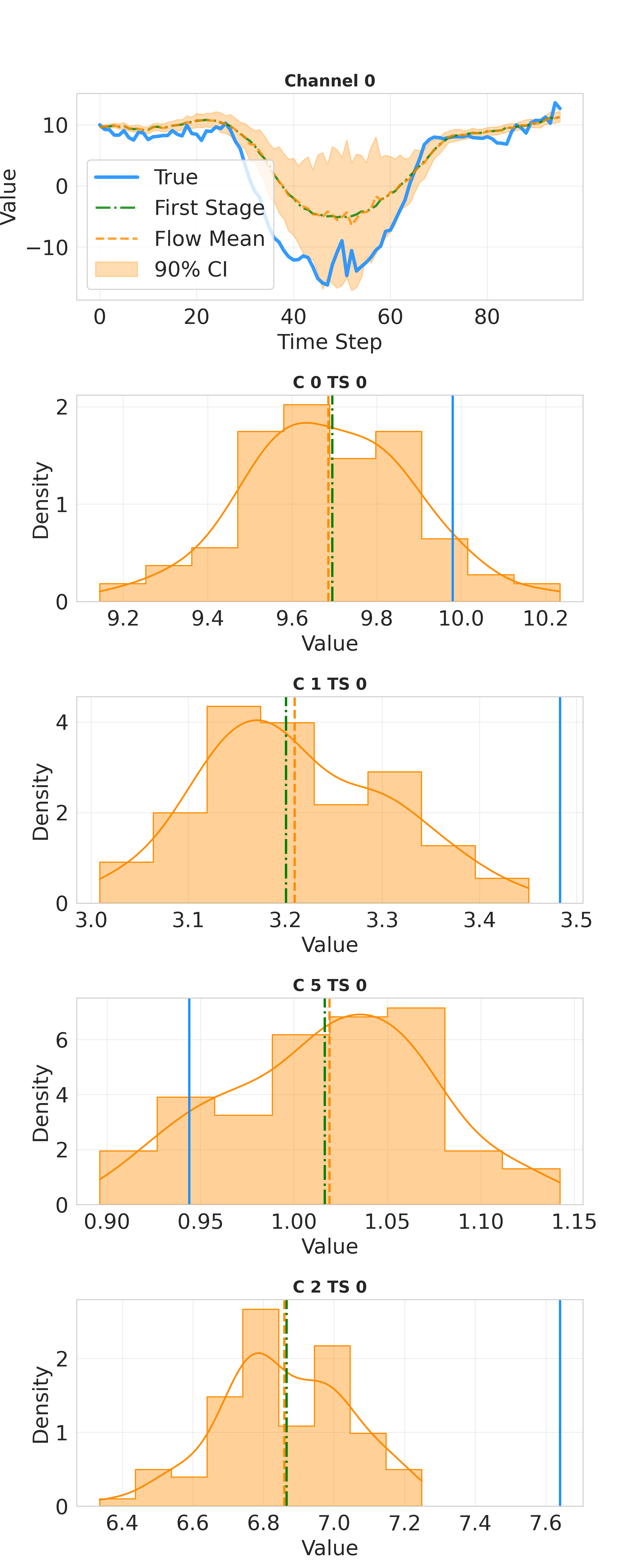}
        \caption{MVE-2S (Gaussian)}
        \label{fig:visual_mve2s}
    \end{subfigure}
    \hfill
    \begin{subfigure}{0.235\textwidth}
        \centering
        \includegraphics[width=\linewidth]{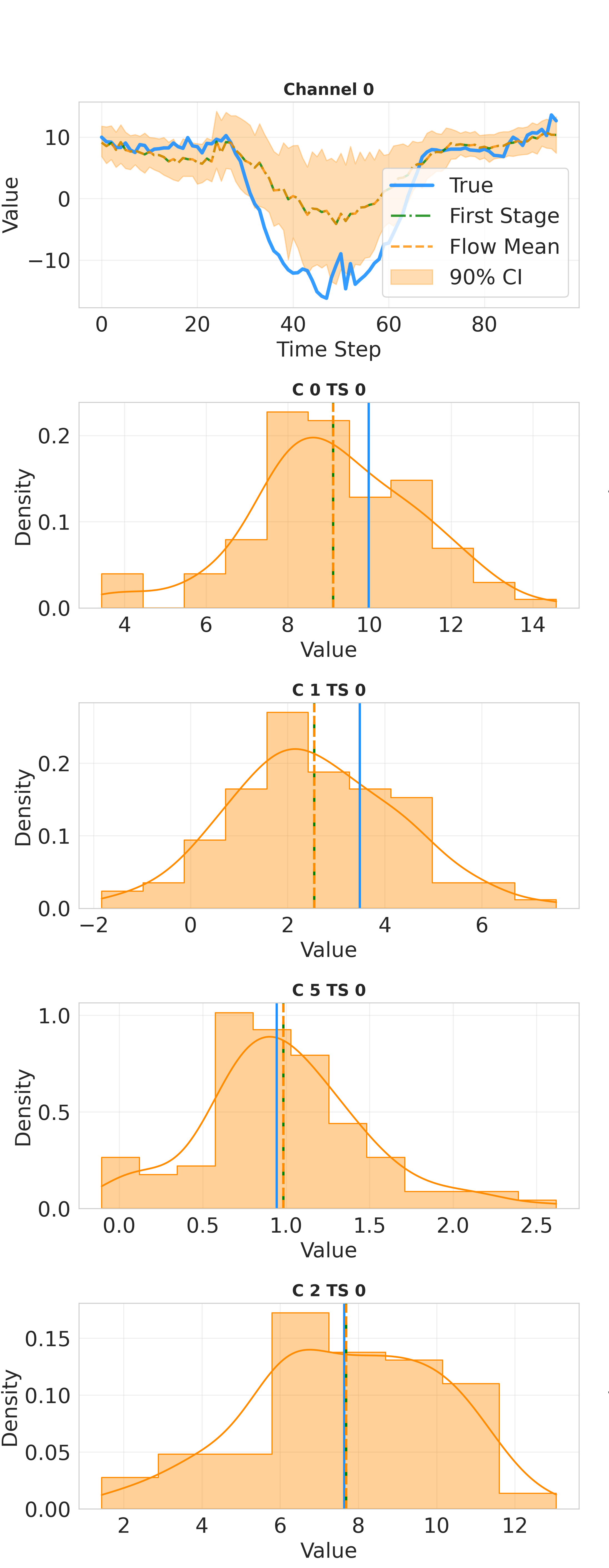}
        \caption{$K^2$VAE}
        \label{fig:visual_k2vae}
    \end{subfigure}
    \hfill
    \begin{subfigure}{0.24\textwidth}
        \centering
        \includegraphics[width=\linewidth]{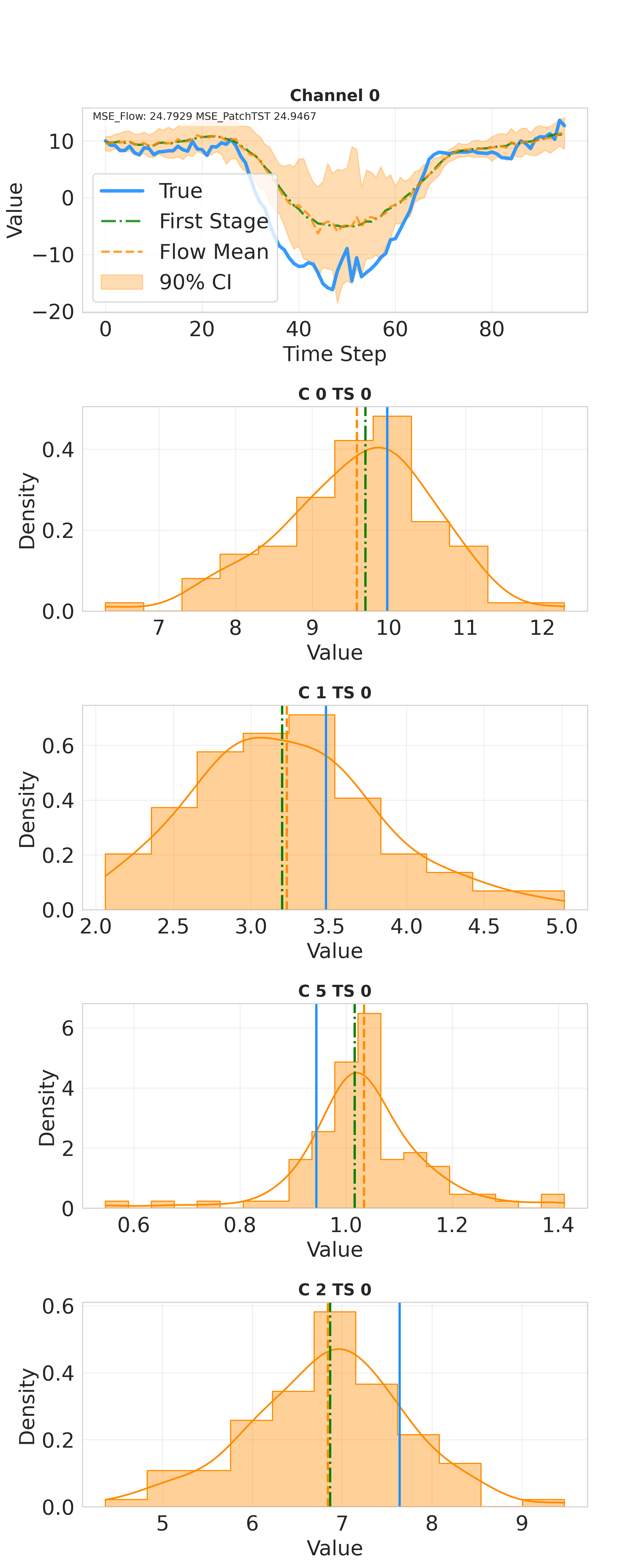}
        \caption{\model{}}
        \label{fig:visual_torf}
    \end{subfigure}
    \hfill
    \begin{subfigure}{0.24\textwidth}
        \centering
        \includegraphics[width=\linewidth]{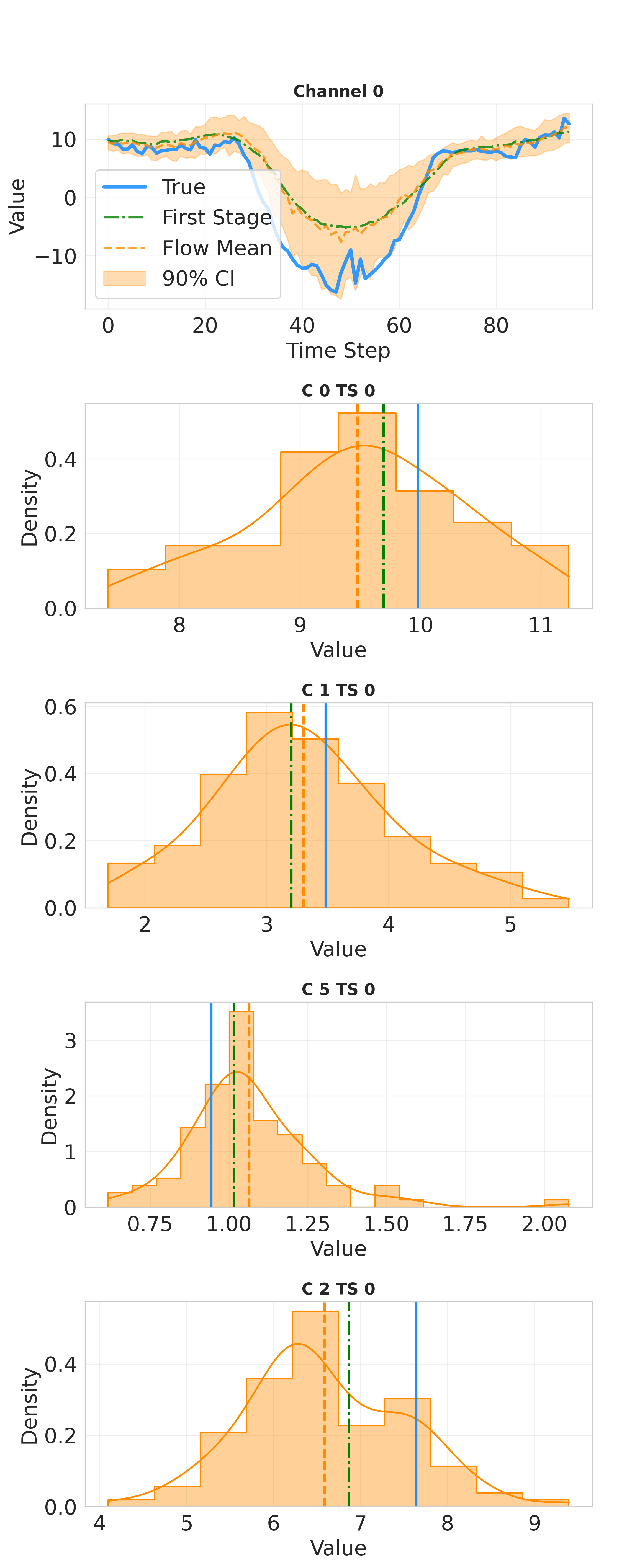}
        \caption{TORF-Mixture}
        \label{fig:visual_mixture}
    \end{subfigure}
    \caption{Qualitative comparison of predictive distributions for four
    methods (MVE-2Stage, $K^2$VAE, \model{}, \model-Mixture) on the same ETTm1 ($H{=}96$) test window and channel. In each
    panel, the top plot shows the forecast horizon (true trajectory in
    blue, Stage-1 point forecast in green, predictive mean in dashed
    orange, 90\% CI shaded); the four plots above show the predictive
    density at fixed (channel, time step) pairs C0/C1/C5/C2 at TS~0, with
    the true value (blue), Stage-1 forecast (green), and predictive mean from the flow
    (dashed orange) marked as vertical lines.}
    \label{fig:visual_comparison}
\end{figure*}

\Cref{fig:visual_comparison} compares all four methods on the exact same
test window and channel, so the differences in predictive shape are
directly attributable to the model rather than to the underlying data.
We use $n{=}100$ samples to generate each predictive distribution shown,
consistent with the convention in prior work~\citep{wu2025kvae}; with
this few samples, the histograms are noisy estimates of the true
underlying density rather than exact densities, so fine-grained
differences in shape should be read qualitatively.
MVE-2S's residual histograms are the narrowest of the four, consistent
with it being an exactly Gaussian model (\Cref{sec:eqscalinggauss}): on
C1~TS0 and C2~TS0 in particular, the true value falls almost entirely
outside its 90\% mass, a visual symptom of the overconfidence that a
purely affine, unimodal transform produces once the true residual is not
Gaussian. $K^2$VAE sits at the opposite extreme: its densities are
visibly flatter and wider, and on C1~TS0 its mass extends into negative
values even though every true value of that channel is positive,
suggesting its predictive distribution is not well constrained by the
data's actual support. \model{} and TORF-Mixture both produce densities
that are noticeably tighter around the true value than either baseline
while still crossing it, and their Stage-1 (green) and predictive-mean
(orange) lines stay visually coincident in every panel, the
mean-preservation guarantee of \Cref{lem:odd-flow} holding exactly, even
though the shape around that mean adapts per channel. The one visible
difference between the two \model{} variants is on C2~TS0, where
TORF-Mixture's density is noticeably more skewed than the
single-component \model{}'s, illustrating in a single test window the
asymmetric residual modeling that \Cref{sec:ablation:mixOddFlows}
motivates analytically.

\end{document}